\documentclass[10pt,fullpage,letterpaper]{article}

\usepackage[utf8]{inputenc} % allow utf-8 input
\usepackage[T1]{fontenc}    % use 8-bit T1 fonts
\usepackage{hyperref}       % hyperlinks
\usepackage{url}            % simple URL typesetting
\usepackage{booktabs}       % professional-quality tables
\usepackage{amsfonts}       % blackboard math symbols
\usepackage{nicefrac}       % compact symbols for 1/2, etc.
\usepackage{microtype}      % microtypography
\usepackage{xcolor}         % colors
\usepackage{algorithm}
\usepackage[noend]{algpseudocode}
\usepackage{enumerate}
\usepackage{amsmath, amsfonts, amssymb}
\usepackage{amsthm}
\usepackage{mathrsfs}
\usepackage{bm}
\usepackage{graphicx}
\usepackage{alltt}
\usepackage{tikz}
\usepackage{float}
\usepackage{booktabs}
\usepackage{hyperref}
\usepackage{array}
\usepackage{dsfont}
\usepackage{nicefrac}
\usepackage{comment}
\usepackage{anyfontsize}
\usepackage{longtable}
\usepackage{natbib}
\usepackage{caption}
\allowdisplaybreaks
\usepackage{mathtools}
\usepackage{thm-restate}
\usepackage{bbm}
\usepackage{wrapfig}
\usepackage{fullpage}

\newcommand{\Sp}[1]{\left(#1\right)}
\newcommand{\Mp}[1]{\left[#1\right]}
\newcommand{\Bp}[1]{\left\{#1\right\}}
\newcommand{\abs}[1]{\left|#1\right|}
\newcommand{\Norm}[1]{\left\|#1\right\|}

\newcommand{\ov}{\overline{V}}
\newcommand{\uv}{\underline{V}}

\newcommand{\K}{\mathcal{K}}
\newcommand{\A}{\mathcal{A}}

\newcommand{\E}{\mathbb{E}}
\newcommand{\G}{\mathcal{G}}
\renewcommand{\P}{\mathbb{P}}
\renewcommand{\S}{\mathcal{S}}

\renewcommand{\a}{\mathbf{a}}

\newcommand{\M}{\mathcal{M}}

\newcommand{\learn}{\textsc{Learn\_EQ}}
\newcommand{\test}{\textsc{Test\_EQ}}
\newcommand{\learnop}{\textsc{Learn\_OP}}

\newcommand{\I}{\mathcal{I}}
\newcommand{\J}{\mathcal{J}}
\newcommand{\reg}{\mathrm{Regret}}
\newcommand{\gap}{\mathrm{Gap}}
\newcommand{\eps}{$\epsilon$-EQ }
\newcommand{\Trig}{\mathrm{Trig}}
\newcommand{\red}[1]{{\color{black}#1}}

\newtheorem{lemma}{Lemma}
\newtheorem{definition}{Definition}
\newtheorem{assumption}{Assumption}

\theoremstyle{remark}
\newtheorem{example}{Example}
\newtheorem{remark}{Remark}

\newcounter{protocol}
\makeatletter
\newenvironment{protocol}[1][htb]{%
  \let\c@algorithm\c@protocol
  \renewcommand{\ALG@name}{Protocol}% Update algorithm name
  \begin{algorithm}[#1]%
  }{\end{algorithm}
}
\makeatother

\usepackage{color}
\hypersetup{colorlinks,
	linkcolor=blue,
	citecolor=blue,
	urlcolor=black,
	linktocpage,
	plainpages=false}

\title{A Black-box Approach for Non-stationary Multi-agent Reinforcement Learning}

\author{
    Haozhe Jiang\\
    \texttt{jianghz20@mails.tsinghua.edu.cn}\\
    \and
    Qiwen Cui\\
    \texttt{qwcui@cs.washington.edu}\\
    \and
    Zhihan Xiong\\
    \texttt{zhihanx@cs.washington.edu}\\
    \and
    Maryam Fazel\\
    \texttt{mfazel@uw.edu}\\
    \and
    Simon S. Du\\
    \texttt{ssdu@cs.washington.edu}\\
}

\date{}

\begin{document}

\maketitle

\begin{abstract}
  We investigate learning the equilibria in non-stationary multi-agent systems and address the challenges that differentiate multi-agent learning from single-agent learning. Specifically, we focus on games with bandit feedback, where testing an equilibrium can result in substantial regret even when the gap to be tested is small, and the existence of multiple optimal solutions (equilibria) in stationary games poses extra challenges. To overcome these obstacles, we propose a versatile black-box approach applicable to a broad spectrum of problems, such as general-sum games, potential games, and Markov games, when equipped with appropriate learning and testing oracles for stationary environments. Our algorithms can achieve $\widetilde{O}\left(\Delta^{1/4}T^{3/4}\right)$ regret when the degree of nonstationarity, as measured by total variation $\Delta$, is known, and $\red{\widetilde{O}\left(\Delta^{1/5}T^{4/5}\right)}$ regret when $\Delta$ is unknown, where $T$ is the number of rounds. Meanwhile, our algorithm inherits the favorable dependence on number of agents from the oracles. As a side contribution that may be independent of interest, we show how to test for various types of equilibria by a black-box reduction to single-agent learning, which includes Nash equilibria, correlated equilibria, and coarse correlated equilibria.
\end{abstract}

\section{Introduction}

Multi-agent reinforcement learning (MARL) studies the interactions of multiple agents in an unknown environment with the aim of maximizing their long-term returns \citep{zhang2021multi}. This field has applications in diverse areas such as computer games \citep{vinyals2019grandmaster}, robotics \citep{de2020deep}, and smart manufacturing \citep{kim2020multi}. Although various algorithms have been developed for MARL, it is typically assumed that the underlying repeated game is stationary throughout the entire learning process. However, this assumption often fails to represent real-world scenarios where the environment is evolving throughout the learning process.

The task of learning within a non-stationary multi-agent system, while crucial, poses additional challenges when attempts are made to generalize non-stationary single-agent reinforcement learning (RL),  especially for the bandit feedback case where minimal information is revealed to the agents \citep{anagnostides2023convergence}. In addition, the various multi-agent settings, such as zero-sum, potential, and general-sum games, along with normal-form and extensive-form games, and fully observable or partially observable Markov games, further complicate the design of specialized algorithms.

In this work, we take the 
%very ('very first' makes it sounds like we mean 'elementary step'---just saying first is better)
first step towards understanding non-stationary MARL with bandit feedback. First, we point out several challenges that differentiate non-stationary MARL from non-stationary single-agent RL, and bandit feedback from full-information feedback. Subsequently, we propose black-box algorithms with sub-linear dynamic regret in arbitrary non-stationary games, provided there is access to learning algorithms in the corresponding (near-)stationary environment. This versatile approach allows us to leverage existing algorithms for various stationary games, while facilitating seamless adaptation to future algorithms that may offer improved guarantees.

\subsection{Main Contributions and Novelties}

% \textbf{1. Identifying challenges in non-stationary games with bandit feedback.} First, we demonstrate that the non-stationary multi-agent scenario differs significantly from the single agent case. For the single-agent case, the gap to be tested is the same as the regret it will incur, while for the multi-agent case, testing a small gap could potentially incur an arbitrarily large regret. Additionally, we show that the bandit feedback setting presents considerably greater challenges compared to the full-information feedback setting, where active testing is not required.
\textbf{1. Identifying challenges in non-stationary games with bandit feedback (Section \ref{sec:challenge}).} First, we point out that bandit feedback is incompatible with online-learning based algorithms as the gradient of reward is hard to estimate. 
Then, we show that bandit feedback complicates the application of test-based algorithms as testing an arbitrary small gap can incur $O(1)$ regret each term. Non-uniqueness of equilibria makes replay-based test difficult as well. Additionally, we point out that it is non-trivial to generalize an algorithm for non-stationary Markov games to a parameter-free version since the objective for games is very different from that of multi-armed bandits.

\noindent \textbf{2. Generic black-box approach for non-stationary games.} Our approach is a black-box reduction that can transform any base algorithm designed for (near-)stationary games into an algorithm capable of learning in a non-stationary environment. This approach inherits %potential benign 
favorable properties of the base algorithm, like breaking the curse of multi-agents, and directly adapts to future algorithmic advances.

\noindent \textbf{3. Restart-based algorithm when non-stationarity budget is known (Section \ref{sec:warmup}).} %\mf{maybe can say `bound on the degree of nonstationarity', which from now on we refer to as 'nonstationarity budget'.} 
When we know a bound on the degree of non-stationarity, often measured by number of switches or total variation (which from here on, we refer to as the ``nonstationarity budget''), we design a simple restart-based algorithm achieving sublinear dynamic equilibrium regret of $\widetilde{O}\Sp{L^{1/4}T^{3/4}}$ or $\widetilde{O}\Sp{\Delta^{1/4}T^{3/4}}$, where $L$ is the switching number and $\Delta$ is the total variation non-stationarity budget. In words, this result implies that all the players follow a near-equilibrium strategy in most episodes.

\noindent \textbf{4. Multi-scale testing algorithm when non-stationarity budget is unknown (Section \ref{sec:algorithm}).} We also propose a multi-scale testing algorithm to optimize the regret when the non-stationarity budget is unknown, which can adaptively avoid the strategy deviating from equilibrium for too many rounds. The algorithm achieves the same $\widetilde{O}\Sp{L^{1/4}T^{3/4}}$ regret for unknown switching number $L$, and a marginally higher $\widetilde{O}\Sp{\Delta^{1/5}T^{4/5}}$ regret for unknown total variation budget $\Delta$. The testing algorithms are newly designed and the scheduling is specially designed for the PAC assumptions, which is different from that in \cite{wei2021non} where regret assumptions are made.

While the ultimate goal is to design no-regret algorithms for each agent, i.e. achieving no-regret no matter what policy other players adopt (like \cite{panageas2023semi}), our setting is already applicable in various real-world cases even without yet achieving this desired property, this is discussed with a concrete example below. We leave the problem of finding no-regret algorithms for each individual for future work.

\noindent\textbf{Example (traffic routing with navigation).} In traffic routing using navigation applications (\cite{guo2023distributed}), being able to track Nash Equilibrium is advantageous. Assume all drivers use the same navigation application which runs our algorithm. It is reasonable to assume that drivers adhere to the application's suggestions. After following the route recommended by the application, the drivers all find that their routes are not improvable because all drivers are committing to the equilibrium; this makes drivers satisfied with the algorithm's recommendation.

\subsection{Related Work}

\textbf{(Stationary) Multi-agent reinforcement learning.} Numerous works have been devoted to learning equilibria in (stationary) multi-agent systems, including zero-sum Markov games \citep{bai2020near,liu2021sharp}, general-sum Markov games \citep{jin2021v,mao2022improving,song2021can,daskalakis2022complexity,wang2023breaking,cui2023breaking}, Markov potential games \citep{leonardos2021global,song2021can,ding2022independent,cui2023breaking}, congestion games \citep{cui2022learning}, extensive-form games \citep{kozuno2021model,bai2022near,song2022sample}, and partially observable Markov games \citep{liu2022sample}. These works aim to learn equilibria with bandit feedback efficiently, measured by either regret or sample complexity. There also exists a rich literature on asymptotic convergence of different learning dynamics in known games and non-asymptotic convergence with full-information feedback, which are not listed here due to space limitations.

% \textbf{Non-stationary (single-agent) reinforcement learning.} The study of non-stationary reinforcement learning originated from non-stationary bandits \citep{auer2002nonstochastic,besbes2014stochastic,chen2019new,zhao2020simple,wei2021non,cheung2022hedging,disucb}. \citet{auer2019adaptively} and \citet{chen2019new} first achieve near-optimal dynamic regret without knowing the non-stationary budget. For non-stationary RL, \citet{cheung2020reinforcement} develop a Bandit-over-Reinforcement-Learning framework for non-stationary RL that generalizes the Bandit-over-Bandit approach \citep{cheung2022hedging}. The most relevant work is \citet{wei2021non}, which also proposes a black-box approach with multi-scale testing and achieves optimal regret in various single-agent settings. We refer readers to \citet{wei2021non} for a more comprehensive literature review on non-stationary reinforcement learning.

\noindent\textbf{Non-stationary (single-agent) reinforcement learning.} The study of non-stationary reinforcement learning originated from non-stationary bandits \citep{auer2002nonstochastic,besbes2014stochastic,chen2019new,zhao2020simple,wei2021non,cheung2022hedging,disucb}. \citet{auer2019adaptively} and \citet{chen2019new} first achieve near-optimal dynamic regret without knowing the non-stationary budget for bandits. The most relevant work is \citet{wei2021non}, which also proposes a black-box approach with multi-scale testing and achieves optimal regret in various single-agent settings. We refer readers to \citet{wei2021non} for a more comprehensive literature review on non-stationary reinforcement learning.

\noindent\textbf{Non-stationary multi-agent reinforcement learning.} Most of the previous works have been focused on the full-information feedback setting, which is considerably easier than the bandit feedback setting as testing becomes unnecessary \citep{cardoso2019competing,zhang2022no,anagnostides2023convergence,duvocelle2022multiagent,poveda2022fixed}. For two-player zero-sum matrix games, \citet{zhang2022no} proposes a meta-algorithm over a group of base algorithms to tackle with unknown parameters. \citet{anagnostides2023convergence} studies the convergence of no-regret learning dynamics in non-stationary matrix games, including zero-sum, general-sum and potential games, and shares a similar dynamic regret notion as ours. Notably, \citet{cardoso2019competing} also studies the bandit feedback case and aims to minimize NE-regret, while the regret is comparing with the best NE in hindsight instead of a dynamic regret.

% \textbf{Notation.} 

\section{Preliminaries}

We consider the multi-player general-sum Markov games framework, which covers a wide range of problems. A multi-agent general-sum Markov game is described by the tuple $\M=(\S,\A=\A_1\times\cdots\times\A_m,H,\P,\{r_i\}_{i=1}^m)$, where $\S$ is the state space with cardinality $S$, $m$ is the number of the players, $\A_i$ is the action space for player $i$ with cardinality $A_i$, $H$ is the length of the horizon, $\P=\{\P_h\}_{h=1}^H$ is the collection of the transition kernels such that $\P_h(\cdot\mid s,\a)$ is the next state distribution given the current state $s$ and joint action $\a=(a_1,\cdots,a_m)$ at step $h$, and $r_i=\{r_{h,i}\}_{h=1}^H$ is the collection of random reward functions for player $i$ with support $[0,1]$ and mean $\{R_{h,i}\}_{h=1}^H$.

At the beginning of each episode, the players start at a fixed initial state $s_1$.\footnote{It is straightforward to generalize to stochastic initial state by adding a dummy state $s_0$ that transition to the random initial state.} At each step $h\in[H]$, each player observes the current state $s_h$ and chooses action $a_{h,i}$ simultaneously. Then player $i\in[m]$ will receive her own reward realization $\widetilde{r}_{h,i}\sim r_{h,i}(s_h,\a_h)$ where $\a_h=(a_{h,1},\cdots,a_{h,m})$ and the state will transition according to $s_{h+1}\sim\P_h(\cdot\mid s_h,\a_h)$. The game terminates when $h=H+1$. We consider the bandit feedback setting where only the reward of the chosen action is revealed to the player.

Here we discuss the generality of Markov games. When the horizon $H=1$, multi-player general-sum Markov games degenerate to multi-player general-sum matrix games, which include zero-sum games, potential games, congestion games, etc \citep{nisan2007algorithmic}. If we posit different assumptions on the Markov game structure, we can obtain zero-sum Markov games \citep{bai2020near}, Markov potential games \citep{leonardos2021global}, extensive-form games \citep{kozuno2021model}. If the state $s_h$ is not directly observable, the Markov games are modeled by partially observable Markov games \citep{liu2022sample}. A detailed preliminary for different games is deferred to the appendix.

\noindent\textbf{Policy.} A Markov joint policy is defined by $\pi=\{\pi_h\}_{h=1}^H$ where $\pi_h:\S\rightarrow\Delta(\A)$ is the policy at step $h$. We will use $\pi_{-i}$ to denote that all players except player $i$ are following policy $\pi$. A special case of Markov joint policy is Markov product policy, which satifies that there exist policies $\{\pi_i\}_{i=1}^m$ such that for all $h\in[H]$ and $(s,\a)\in\S\times\A$, we have $\pi_h(\a\mid s)=\prod_{i=1}^m\pi_{h,i}(a_i\mid s)$, where $\pi_i=\{\pi_{h,i}\}_{h=1}^H$ is the collection of Markov policies $\pi_{h,i}:\S\rightarrow\Delta(\A_i)$ for player $i$. In words, a Markov product policy can be factorized into individual policies such that they are uncorrelated.

\noindent\textbf{Value function.} Given a Markov game $M\in\M$ and a policy $\pi$, the value function for player $i$ is defined as 
$V_{i}^M(\pi):=\E_\pi\Mp{\sum_{h=1}^Hr_{h,i}(s_h,\a_h)\mid M}$, where the expectation is over the randomness in both the policy and the environment.

\noindent\textbf{Best response and strategy modification.} Given a policy $\pi$ and model $M$, the best response value for player $i$ is $V_i^M(\dagger,\pi_{-i}):=\max_{\pi'_i\in\Pi_i}V^M_i(\pi'_i,\pi_{-i})$, which is the maximum achievable expected return for player $i$ if all the other players are following $\pi_{-i}$. Equivalently, best response is the optimal policy in the induced Markov decision process (MDP), i.e., Markov games with only one player.

A strategy modification $\psi_i=\{\psi_{h,i}\}_{h=1}^H$ is a collection of mappings $\psi_{h,i}:\S\times\A_i\rightarrow\A_i$ that maps the joint state-action space to the action space.\footnote{We only consider deterministic strategy modification as the optimal strategy modification can always be deterministic \citep{jin2021v}.} For policy $\pi$, $\psi_i\diamond\pi$ is the modified policy such that 
$$(\psi_i\diamond\pi)_h(\a\mid s)=\sum_{\a':\psi_{h,i}(a'_{i}\mid s)=a_i,\a'_{-i}=\a_{-i}}\pi_h(\a'\mid s). $$
In other words, $\psi_i\diamond\pi$ is a policy such that if $\pi$ assigns each player $j$ a random action $a_j$ at state $s$ and step $h$, then $\psi_i\diamond\pi$ assigns action $\psi_{h,i}(a_i\mid s)$ to player $i$ while all the other players are following the action assigned by policy. We will use $\Psi_i$ to denote all the possible strategy modifications for player $i$. As $\Psi_i$ contains all the constant strategy modifications, we have
$$\max_{\psi_i\in\Psi_i}V_{i}^M(\psi_i\diamond\pi)\geq \max_{\pi'_i}V_{i}^M(\pi'_i,\pi_{-i})=V_i^M(\dagger,\pi_{-i}), $$
which means that the best strategy modification is always no worse than the best response.

\noindent\textbf{Notions of equilibria.} 

\begin{definition}
    For Markov game $M$, policy $\pi$ is an $\epsilon$-approximate Nash equilibrium (NE) if it is a product policy and
    $$\mathrm{NE}\gap^M(\pi)=\max_{i\in[m]}\Sp{V_i^M(\dagger,\pi_{-i})-V_i^{M}(\pi)}\leq\epsilon.$$
\end{definition}

Learning Nash equilibrium (NE) is neither computationally nor statistically efficient for general-sum normal-form games \citep{chen2009settling}, while it is tractable for games with special structures, such as potential games \citep{monderer1996potential} and two-player zero-sum games \citep{adler2013equivalence}.

\begin{definition}
    For Markov game $M$, policy $\pi$ is an $\epsilon$-approximate coarse correlated equilibrium (CCE) if
    $$\mathrm{CCE}\gap^M(\pi)=\max_{i\in[m]}\Sp{V_i^M(\dagger,\pi_{-i})-V_i^{M}(\pi)}\leq\epsilon.$$
\end{definition}

The only difference between CCE and NE is that CCE is not required to be a product policy. This relaxation allows tractable algorithms for learning CCE.

\begin{definition}
    For Markov game $M$, policy $\pi$ is an $\epsilon$-approximate correlated equilibrium (CE) if
    $$\mathrm{CE}\gap^M(\pi)=\max_{i\in[m]}\Sp{\max_{\psi_i\in\Psi_i}V_i^M(\psi_i\diamond\pi)-V_i^{M}(\pi)}\leq\epsilon.$$
\end{definition}

Correlated equilibrium generalizes the best response used in CCE to best strategy modification. It is known that each NE is a CE and each CE is a CCE. For conciseness, we use \eps to denote $\epsilon$-approximate NE/CE/CCE.

\noindent\textbf{Non-stationarity measure.} Here we formalize the non-stationary Markov game. There are $T$ total episodes and at each episode $t$, the players follow some policy $\pi^t$ an unknown Markov game $M^t$. The non-stationarity degree of the environment is measured by the cumulative difference between two consecutive models, defined as follows.

\begin{definition}\label{def:nonstationary}
    The non-stationarity degree of Markov games $(M^1,M^2,\cdots,M^T)$ is measured by total variation $\Delta$ or number of switches $L$, which are respectively defined as
    $$\Delta=\sum_{t=1}^{T-1}\Norm{M^{t+1}-M^t}_1, \quad L=\sum_{t=1}^{T-1}\mathbbm{1}[M^t\neq M^{t+1}].$$
    % Here, the total variation distance between two Markov games is defined as $\Norm{M-M'}_1:=\sum_{h=1}^H\Sp{\Norm{\P_h^M-\P_h^{M'}}_1+\Norm{R_h^M-R_h^{M'}}_1}$. 
    Here, the total variation distance between two Markov games is defined as $$\Norm{M-M'}_1:=\sum_{h=1}^H\Sp{\Norm{\P_h^M-\P_h^{M'}}_1+\Norm{R_h^M-R_h^{M'}}_1}.$$
    % Another natural measure is the number of switches:
    % $$L=\sum_{t=1}^{T-1}\mathbbm{1}[M^t\neq M^{t+1}]$$ which is an upper bound of $\Delta$. 
    We also define
    \begin{align*}
        \Delta_{[t_1,t_2]}=\sum_{t=t_1}^{t_2-1}\Norm{M^{t+1}-M^t}_1, \quad L_{[t_1,t_2]}=\sum_{t=t_1}^{t_2-1}\mathbbm{1}[M^t\neq M^{t+1}].
    \end{align*}
\end{definition}

\noindent\textbf{Dynamic regret.} We generalize the standard dynamic regret in non-stationary single-agent RL to non-stationary MARL. 

\begin{definition} The dynamic equilibrium regret is defined as
    $$\reg(T)=\sum_{t=1}^T\gap^{M^t}(\pi^t),$$
    where $M^t$ is the Markov game at episode $t$, $\pi^t$ is the policy at episode $t$ and $\gap(\cdot)$ can be $\mathrm{NE}\gap$, $\mathrm{CCE}\gap$ or $\mathrm{CE}\gap$.
\end{definition}

A small dynamic regret implies that for most episodes $t\in[T]$, the policy $\pi^t$ is an approximate equilibrium for model $M^t$. The same dynamic regret is used in \citet{anagnostides2023convergence} for matrix games. In the literature, \citet{cardoso2019competing} and \citet{zhang2022no} propose NE-regret and dynamic NE-regret for two-player zero-sum games where the comparator is the best NE value in hindsight and the best dynamic NE value. However, these regret notions can not be generalized to general-sum games as the NE/CE/CCE values become non-unique. \citet{zhang2022no} also considers duality gap as a performance measure, which coincides with our dynamic regret where $\gap$ is $\mathrm{NE}\gap$.

\noindent\textbf{Base algorithms.} Our algorithm uses black-box oracles that can learn and test equilibria in (near-)stationary environments. \red{Details of the base algorithms are shown in Appendix.}

\begin{assumption}\label{asp:learn}
(PAC guarantee for learning equilibrium) We assume that we have access to an oracle $\learn$ such that with probability $1-\delta$, in an environment with non-stationarity $\Delta$ as defined in Definition \ref{def:nonstationary}, it can output an $\Sp{\epsilon+\red{c_1^\Delta}\Delta}$-EQ of a game with at most $C_1(\epsilon,\delta)$ samples.
% and $\Delta$ additive error, where $\Delta$ is the non-stationarity of the environment defined in Definition \ref{def:nonstationary}.
\end{assumption}

\begin{assumption}\label{asp:test}
(PAC guarantee for testing equilibrium) We assume that we have access to an oracle $\test$ such that given a policy $\pi$, with probability $1-\delta$, in an environment with non-stationarity $\Delta$ as defined in Definition \ref{def:nonstationary}, it outputs False when $\pi$ is not a $(\red{2}\epsilon+\red{c_2^\Delta}\Delta)$-EQ for all $t=1, \dots, C_2(\epsilon, \delta)$ and outputs True when $\pi$ is an $\Sp{\epsilon-\red{c_2^\Delta}\Delta}$-EQ \red{for all $t=1, \dots, C_2(\epsilon, \delta)$}.
% \textcolor{red}{It outputs \textsf{False} when $\pi$ is not an $\epsilon$-NE/CE/CCE for all $t=1, \dots, C_2(\epsilon, \delta)$ or when $\pi$ is in average not an $\epsilon$-NE/CE/CCE?}
\end{assumption}

There exist various algorithms (see Table \ref{table:oracles}) providing PAC guarantees for learning equilibrium in stationary games, which satisfies Assumption \ref{asp:learn} when non-stationarity degree $\Delta=0$. We will show that most of these algorithms enjoy an additive error w.r.t. non-stationarity degree $\Delta$ in the appendix and discuss how to construct oracles satisfying Assumption \ref{asp:test} in Section \ref{sec:test}. For simplicity, We will omit $\delta$ in $C_1(\epsilon,\delta)$ and $C_2(\epsilon,\delta)$ as they only have polylogarithmic dependence on $\delta$ for all the oracle realizations in this work. Furthermore, since the dependence of $C_1(\epsilon),C_2(\epsilon)$ on $\epsilon$ are all polynomial, we denote $C_1(\epsilon)=c_1\epsilon^\alpha,C_2(\epsilon)=c_2\epsilon^{-2}$. Here $c_1,c_2$ does not depend on $\epsilon$ and $\alpha$ is a constant depending on the oracle algorithm. In Table \ref{table:oracles}, $\alpha=-2$ or $\alpha=-3$, where $\alpha$ is the exponent in $C_1(\epsilon)$.  %\mf{no $\alpha$ in table, define?}

\begin{table}[]
    \centering
    % \caption{$A,B$ are the size of action spaces for two-player zero-sum games. $X_i$ and $A_i$ are the number of information sets and actions for player $i$. $A_{\max}=\max_{j\in[m]}A_j$. $S$ is the size of the state space, $H$ is the horizon of the Markov games and $T$ is the number of episodes. The second and third column is the sample complexity for learning and testing an equilibrium in a stationary game. The last column shows the regret bounds for Algorithm \ref{alg:warmup}.}
    % \label{table:oracles}
    % \fontsize{9}{9}
    \begin{tabular}{l|l|l|l}
        \hline
         Types of Games & $\learn$ & $\test$ & Dynamic Regret \\
         \hline
        Zero-sum (NE) & $(A+B)\epsilon^{-2}$ & $(A+B)\epsilon^{-2}$ & $((A+B)\Delta)^{1/4}T^{3/4}$\\
        General-sum (CCE) & $A_{\max}\epsilon^{-2}$ & $mA_{\max}\epsilon^{-2}$ & $(A_{\max}\Delta)^{1/4}T^{3/4}$\\
        General-sum (CE) & $A^2_{\max}\epsilon^{-2}$ & $mA^2_{\max}\epsilon^{-2}$ & $(A^2_{\max}\Delta)^{1/4}T^{3/4}$\\
        Potential (NE) & $m^2A_{\max}\epsilon^{-3}$ & $mA_{\max}\epsilon^{-2}$ & $(m^2A_{\max}\Delta)^{1/5}T^{4/5}$\\
        Congestion (NE) & $m^2F^3\epsilon^{-2}$ & $mF^2\epsilon^{-2}$ & $(m^3F^4\Delta)^{1/4}T^{3/4}$\\
        %Extensive-form (CE) & $\max_iX_iA_i\epsilon^{-2}$ & $m\max_iX_iA_i\epsilon^{-2}$ & $(\max_iX_iA_i)^{1/4}T^{3/4}$ \\
        Zero-sum Markov (NE) & $H^5S(A+B)\epsilon^{-2}$ & $H^3S(A+B)\epsilon^{-2}$ & $(H^7S(A+B)\Delta)^{1/4}T^{3/4}$\\
        General-sum Markov (CCE) & $H^6S^2A_{\max}\epsilon^{-2}$ & $mH^3SA_{\mathrm{\max}}\epsilon^{-2}$ &  $(H^7S^3A_{\max}\Delta)^{1/4}T^{3/4}$\\
        General-sum Markov (CE) & $H^6S^2A^2_{\max}\epsilon^{-2}$ & $mH^3SA^2_{\mathrm{\max}}\epsilon^{-2}$ &  $(H^7S^3A^2_{\max}\Delta)^{1/4}T^{3/4}$\\
        Markov Potential (NE) & $m^2H^4SA_{\max}\epsilon^{-3}$ & $mH^3SA_{\max}\epsilon^{-2}$ & $(m^2H^6SA_{\max}\Delta)^{1/5}T^{4/5}$\\
        \hline
        % \normalsize
    \end{tabular}
    \caption{$A,B$ are the size of action spaces for two-player zero-sum games. $X_i$ and $A_i$ are the number of information sets and actions for player $i$. $A_{\max}=\max_{j\in[m]}A_j$. $S$ is the size of the state space, $H$ is the horizon of the Markov games and $T$ is the number of episodes. The second and third column is the sample complexity for learning and testing an equilibrium in a stationary game. The last column shows the regret bounds for Algorithm \ref{alg:warmup}.}
    \label{table:oracles}
\end{table}

\section{Challenges in Non-Stationary Games}\label{sec:challenge}

In this section, we discuss the major difficulties generalizing single-agent non-stationary algorithms to non-stationary Markov games. There are two major lines of work in the single-agent setting. The first line of work uses online learning techniques to tackle non-stationarity. There exist works generalizing online learning algorithms to the multi-agent setting. However all of them apply only to the full-information setting. In the bandit feedback setting, it is hard to estimate the gradient of the objective function. The other line of work uses explicit tests to determine notable changes of the environment and restart the whole algorithm accordingly. This paper also adpots this paradigm.

The first type of test is to play a sub-optimal action $a$ consecutively to determine whether it has become optimal \citep{auer2019adaptively,chen2019new}. For simplicity, let us think of learning NE in the environment with abrupt changes (switching number as the non-stationary measure). In order to assure $a$ has not become a new optimal action, one needs to spend $\red{1/D^2}$ steps to play $a$ and secure its value up to $\red{D}$ confidence bound \red{where $D$ is the suboptimality}. The regret incurred in this testing process is $\red{D\cdot1/D^2=1/D}$. In the multi-agent setting, if one wants to repeat the process by testing $(a_i',a_{-i})$ to assure $\bm{a}$ is a NE, the timesteps needed is still $\red{1/D^2}$ where $\red{D}$ is the empirical reward difference of $(a_i',a_{-i})$ and $\bm{a}$. However, the gap of $(a_i',a_{-i})$ depends on its own unilateral deviations, which can be $O(1)$ in general. Hence the regret incurred can be $\red{1/D^2}$, sabotaging the test process (example in Figure \ref{fig:challenge}).

\begin{wrapfigure}{r}{.4\textwidth}
    \centering
    % \vspace{-10pt}
    \includegraphics[scale=0.5]{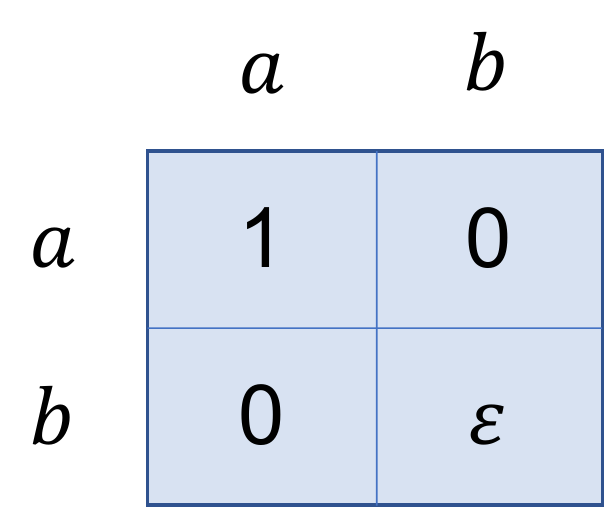}
    \caption{Consider a two-player cooperative game. Both players have access to action space $\{a,b\}$ and the corresponding rewards are shown in the picture. Assume we have found NE $(b,b)$. If we want to make sure $(a,b)$ has not become a best response for player 1, we have to play $(a,b)$ for $\red{1/\varepsilon^2}$ times. However the regret of $(a,b)$ is 1, so this process induces $\red{1/\varepsilon^2}$ regret.}
    % \vspace{-20pt}
    \label{fig:challenge}
\end{wrapfigure}

The second type of test restarts the learning algorithm for a small amount of time and checks for abnormality in the replay \citep{wei2021non}. In the multi-agent setting, since equilibrium is not unique in all games, different runs of the same algorithm can converge to different equilibria even in a stationary environment. Hence test of this type fails to detect abnormality in the base algorithm.

Another method worth mentioning was invented in \citet{disucb}. This method proposes to forget old history through putting a discount weight on old feedback or imposing a sliding window based on which we calculate the empirical estimate of value of actions. There is no obvious obstacle in generalizing it to the multi-agent setting but it is hard to derive a parameter-free version. \citet{cheung2020reinforcement} uses the Bandit-Over-RL technique to get a parameter-free version for the single-agent setting based on the sliding-window idea. However, the Bandit-Over-RL technique does not generalize to the multi-agent setting as the learning objectives are totally different. A more detailed version of the challanges mentioned is presented in the Appendix \ref{sec:challenges}.

\section{Warm-Up: Known Non-Stationary Budget}\label{sec:warmup}
We first present an algorithm for MARL against non-stationary environments with known non-stationarity budget to serve as a starting point.

\begin{algorithm}[ht]
\caption{Restarted Explore-then-Commit for Non-stationary MARL}
\label{alg:warmup}
\begin{algorithmic}[1]
\State {\bfseries Input:} number of episodes $T$; non-stationarity budget $\Delta$; confidence level $\delta$; parameter $T_1$
\While{episode $T$ is not reached}\label{line:restart}
  \State Run $\learn$ with accuracy $\epsilon$ and confidence level $\delta$, and receive the output $\pi$.\label{learn}
  \State Execute $\pi$ for $T_1$ episodes.\label{execute}
\EndWhile
\end{algorithmic}
\end{algorithm}

Initially, the algorithm starts a $\learn$ algorithm, intending to learn an \eps policy $\pi$. After that, it commits to $\pi$ for $T_1$ episodes. Subsequently, the algorithm repeats this learn-then-commit pattern until the end. The restart mechanism guarantees that the non-stationarity in the environment can at most affect $T_1$ episodes. By carefully tuning $T_1$, we can achieve a sublinear regret. This algorithm admits a performance guarantee as follows.

\begin{restatable}{proposition}{warmupbound}\label{thm:warmup}
    With probability $1-T\delta$, the regret of Algorithm \ref{alg:warmup} satisfies
    \begin{align*}
        \reg(T)\leq\frac{4TC_1(\epsilon)}{T_1}+T\epsilon+2\red{\max\Bp{c_1^\Delta,H}} T_1\Delta.
    \end{align*}
\end{restatable}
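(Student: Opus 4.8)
The plan is to decompose the $T$ episodes into the consecutive \emph{phases} created by the while-loop of Algorithm~\ref{alg:warmup}, and to bound the contribution of each phase to the dynamic regret $\reg(T) = \sum_{t=1}^T \gap^{M^t}(\pi^t)$. A single phase consists of a $\learn$ call that consumes at most $C_1(\epsilon)$ samples (so at most $\lceil C_1(\epsilon)/H\rceil$ episodes, since each episode furnishes $H$ samples), followed by exactly $T_1$ commit episodes executing the returned policy $\pi$. I would first argue the number of phases is at most $2T/T_1$: each phase occupies at least $T_1$ episodes (the commit part), so there are at most $T/T_1$ of them — actually I will keep a factor-$2$ slack to absorb the final truncated phase and the learning episodes, i.e. at most $2T/T_1$ phases, which is where the $4TC_1(\epsilon)/T_1$ term in the bound comes from (learning episodes per phase $\le C_1(\epsilon)/H + 1 \le 2C_1(\epsilon)$ crudely bounded, times $2T/T_1$ phases, times per-episode gap $\le H$; constants folded generously).

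For the learning episodes I would simply upper-bound each per-episode gap by its trivial maximum $H$ (rewards lie in $[0,1]$ over horizon $H$, so any $\gap \le H$), giving a total contribution of at most $2T/T_1 \cdot C_1(\epsilon)/H \cdot H = 2TC_1(\epsilon)/T_1$ over all phases; the factor $4$ in the statement leaves room for the extra $+1$ episode per phase and rounding. For the commit episodes, the key step is this: within one phase, $\learn$ is invoked in an environment whose models over the learning window vary by at most the local total variation $\Delta_{[\cdot,\cdot]}$ of that window, so by Assumption~\ref{asp:learn} the returned $\pi$ is an $(\epsilon + c_1^\Delta \Delta_{\mathrm{learn}})$-EQ \emph{with respect to the model at the end of the learning window}. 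Then for each of the $T_1$ subsequent commit episodes at some time $t$, I would bound $\gap^{M^t}(\pi) \le \epsilon + c_1^\Delta \Delta_{\mathrm{learn}} + (\text{drift of the model from the reference point to } M^t)$. The drift term is controlled by a Lipschitz-type property of $\gap$ in the model: $\abs{\gap^{M}(\pi) - \gap^{M'}(\pi)} \le 2H\Norm{M-M'}_1$ — this follows because each value $V_i^M(\pi)$ and each best-response (or best-strategy-modification) value changes by at most $H\Norm{M-M'}_1$ under a model perturbation, and $\gap$ is a difference of two such quantities, so it moves by at most $2H\Norm{M-M'}_1$; taking a max over $i$ preserves this. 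Summing the drift over the whole phase telescopes into (at most) $2H T_1 \cdot (\text{total variation inside that phase})$, since every commit episode is within $T_1$ steps of the reference point and the per-step model movement sums to the phase's local $\Delta$. Summing over all phases, the $\epsilon$ terms give $T\epsilon$, and the non-stationarity terms give $c_1^\Delta \sum_{\mathrm{phases}} T_1 \Delta_{\mathrm{phase}} + 2H \sum_{\mathrm{phases}} T_1 \Delta_{\mathrm{phase}} \le 2\max\{c_1^\Delta, H\} T_1 \Delta$, using that the local variations across disjoint phases sum to at most the global $\Delta$.

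The main obstacle — really a bookkeeping subtlety rather than a deep difficulty — is getting the reference-point argument airtight: I must fix, for each phase, a single model against which both the $\learn$ PAC guarantee and the subsequent drift bounds are stated, and then verify that (i) the ``non-stationarity seen by $\learn$'' plus (ii) the commit-phase drift together never double-count a transition $\Norm{M^{t+1}-M^t}_1$, so that the sum over phases is genuinely bounded by $\Delta$ and not by something larger. Choosing the reference model to be the one at the first commit episode of the phase (or the last learning episode) and noting that $\learn$'s guarantee, per Assumption~\ref{asp:learn}, already absorbs the within-learning-window variation into its additive $c_1^\Delta\Delta$ slack, makes this clean. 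A secondary point to state carefully is the union bound: each phase invokes $\learn$ once, there are at most $2T/T_1 \le T$ phases, each failing with probability at most $\delta$, so all guarantees hold simultaneously with probability at least $1 - T\delta$ — matching the statement. Everything else is the routine arithmetic of collecting the three groups of terms.
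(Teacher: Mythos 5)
Your proposal is correct and follows essentially the same route as the paper: it likewise decomposes the horizon into learn/commit phases, union-bounds over the at most $T$ invocations of $\learn$, and uses exactly your $2H$-Lipschitz drift bound for the gap (the paper's Lemma \ref{lemma:pi_deviate}) to upgrade the learned $(\epsilon+c_1^\Delta\Delta_{\I_k})$-equilibrium to an $(\epsilon+2\max\{c_1^\Delta,H\}\Delta_{\I_k\cup\J_k})$-equilibrium throughout the commit phase, then sums the disjoint local variations into $\Delta$ and handles roundings with the factor $4$. The one step to state carefully is the point you already flag yourself: the displayed combination $c_1^\Delta\sum_k T_1\Delta_{\mathrm{phase}}+2H\sum_k T_1\Delta_{\mathrm{phase}}\le 2\max\{c_1^\Delta,H\}T_1\Delta$ is legitimate only because the two terms charge disjoint sub-intervals (learning window vs.\ commit window, giving $\max\{c_1^\Delta,2H\}\le 2\max\{c_1^\Delta,H\}$ applied to non-overlapping variation), which your reference-point bookkeeping at the end of the learning window ensures.
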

\begin{remark}
    Let us look at the meaning of each term in this bound. The first term comes from all $\learn$. The second and third terms come from committing to the learned policy.
\end{remark}

\begin{restatable}{corollary}{warmupcor}\label{cor:warmup}
    With probability $1-T\delta$, the regret of Algorithm \ref{alg:warmup} satisfies
    \begin{align*}
        \reg(T)\leq\left\{\begin{array}{cc}
            13\Sp{\Delta c_1\red{\max\Bp{c_1^\Delta,H}}}^{1/4}T^{3/4},&\alpha=-2,\\
            13\Sp{\Delta c_1\red{\max\Bp{c_1^\Delta,H}}}^{1/5}T^{4/5},&\alpha=-3,
        \end{array}\right.
    \end{align*}
    by setting
    \begin{align*}
        T_1=\left\lceil\sqrt{\frac{TC_1(\epsilon)}{\red{\max\Bp{c_1^\Delta,H}}\Delta}}\; \right\rceil,\quad 
        \epsilon=\left\{\begin{array}{cc}
            \Sp{\Delta c_1\red{\max\Bp{c_1^\Delta,H}}/T}^{1/4},&\alpha=-2,\\
            \Sp{\Delta c_1\red{\max\Bp{c_1^\Delta,H}}/T}^{1/5},&\alpha=-3.
        \end{array}\right.
    \end{align*}
\end{restatable}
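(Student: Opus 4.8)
The plan is to obtain Corollary~\ref{cor:warmup} purely by substituting the prescribed choices of $T_1$ and $\epsilon$ into the bound of Proposition~\ref{thm:warmup} and simplifying. Write $K:=\max\{c_1^\Delta,H\}$ for brevity, and recall that $C_1(\epsilon)=c_1\epsilon^\alpha$ with $\alpha\in\{-2,-3\}$, so Proposition~\ref{thm:warmup} reads $\reg(T)\le \frac{4Tc_1\epsilon^\alpha}{T_1}+T\epsilon+2KT_1\Delta$ with probability $1-T\delta$.

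First I would dispose of $T_1$. With $T_1=\ceil{\sqrt{TC_1(\epsilon)/(K\Delta)}}$ we have, on one hand, $T_1\ge\sqrt{TC_1(\epsilon)/(K\Delta)}$, hence $\frac{4TC_1(\epsilon)}{T_1}\le 4\sqrt{TC_1(\epsilon)K\Delta}$; on the other hand, $T_1\le\sqrt{TC_1(\epsilon)/(K\Delta)}+1$, hence $2KT_1\Delta\le 2\sqrt{TC_1(\epsilon)K\Delta}+2K\Delta$. Combining these, $\reg(T)\le 6\sqrt{TC_1(\epsilon)K\Delta}+T\epsilon+2K\Delta$, so the two learning/committing contributions have collapsed into a single $\sqrt{TC_1(\epsilon)K\Delta}$ term plus the additive remainder $2K\Delta$.

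Next I would plug in $\epsilon$. For $\alpha=-2$, taking $\epsilon=(c_1K\Delta/T)^{1/4}$ makes both $\sqrt{TC_1(\epsilon)K\Delta}=\sqrt{Tc_1K\Delta}\cdot\epsilon^{-1}$ and $T\epsilon$ equal to $T^{3/4}(c_1K\Delta)^{1/4}$; for $\alpha=-3$, taking $\epsilon=(c_1K\Delta/T)^{1/5}$ makes both $\sqrt{TC_1(\epsilon)K\Delta}=\sqrt{Tc_1K\Delta}\cdot\epsilon^{-3/2}$ and $T\epsilon$ equal to $T^{4/5}(c_1K\Delta)^{1/5}$. In each case the first three terms sum to $7$ times the claimed leading expression. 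Finally, the remainder $2K\Delta$ gets absorbed: in the regime where the claimed bound is informative — i.e.\ smaller than the trivial bound $\reg(T)\le HT$ that follows from $\gap^{M^t}(\pi^t)\le H$ — the quantity $K\Delta$ is small enough that $2K\Delta$ is dominated by the leading term, pushing the universal constant up to $13$; in the complementary regime the stated inequality holds trivially.

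I expect the only genuine work to be this last bookkeeping step: verifying that the ceiling in $T_1$ and the stray $2\max\{c_1^\Delta,H\}\Delta$ term are truly lower-order so that the same constant $13$ works for both $\alpha=-2$ and $\alpha=-3$. This amounts to a routine case split on the size of $\Delta$ relative to $T$ and carries no conceptual content; everything else is direct algebraic substitution into Proposition~\ref{thm:warmup}.
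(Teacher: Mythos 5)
Your proposal is correct and follows essentially the same route as the paper: substitute the prescribed $T_1$ and $\epsilon$ into Proposition \ref{thm:warmup}, balance the $\sqrt{TC_1(\epsilon)\max\{c_1^\Delta,H\}\Delta}$ and $T\epsilon$ terms, and arrive at the constant $13$. The only difference is bookkeeping around the ceiling: the paper uses Lemma \ref{lem:round} to fold the rounding into larger coefficients ($8+4=12$) with no leftover additive term, whereas you keep a $2\max\{c_1^\Delta,H\}\Delta$ remainder and absorb it via a case split against the trivial bound $\reg(T)\le HT$ — an equally valid treatment of the same corner cases that the paper's use of Lemma \ref{lem:round} (which needs its argument to exceed $1$) handles implicitly.
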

\begin{example}
    As a concrete example, for learning CCE in general-sum Markov games, Algorithm \ref{alg:warmup} achieves $O\Sp{A_{\max}^{1/4}\Delta^{1/4}T^{3/4}}$ regret. We can see that this algorithm breaks the curse of multi-agents (dependence on the number of players) which is a nice property inherited from the base algorithm. In addition, as long as the base algorithm is decentralized, Algorithm \ref{alg:warmup} will also be decentralized.
\end{example}

\section{Unknown Non-Stationarity Budget}\label{sec:algorithm}
In this section, we generalize Algorithm \ref{alg:warmup} to a parameter-free version, which achieves a similar regret bound without the knowledge of the non-stationarity budget and the time horizon $T$. If the non-stationarity budget is unknown, we cannot determine the appropriate rate to restart in advance as in Algorithm \ref{alg:warmup}. Hence, we use multi-scale testing to monitor the performance of the committed policy and restart adaptively.

\subsection{Black-box Algorithms for Testing Equilibria}\label{sec:test}
In this section, we present the construction of the testing algorithms $\test$ that satisfies Assumption \ref{asp:test} by a black-box reduction to single-agent algorithms, which is able to test whether a policy is an equilibrium in a (near-)stationary game. We make the following assumption on the single-agent learning oracle.

\begin{assumption}\label{asp:single}
(PAC guarantee for single-agent RL) We assume that we have access to an oracle $\learnop$ such that with probability $1-\delta$, in a single-agent environment with non-stationarity $\Delta$, it can output an $\Sp{\epsilon+\red{c_3^\Delta}\Delta}$-optimal policy with $C_3(\epsilon,\delta)$ samples.
\end{assumption}

The construction of $\test$ is described in Protocol \ref{pro:test}. We first illustrate how Protocol \ref{pro:test} test NE/CCE in a stationary environment. Note that here we only consider Markov policies and the best response to a Markov policy is the optimal policy in the induced single-agent MDP. First, we sample $\widetilde{O}\Sp{\epsilon^{-2}}$ trajectories following $\pi$ to get an estimate of $V_i(\pi)$ for all $i$ up to an error bound of $\red{\epsilon/6}$ by standard concentration inequalities. Then, for each player $i$, we run $\learnop$ and by Assumption \ref{asp:single}, $\pi_i'$ is an $\red{\epsilon/6}$-optimal policy in the MDP induced by other players following $\pi_{-i}$. In other words, $\pi'_i$ is an $\red{\epsilon/6}$-best response to $\pi_{-i}$. After that we run $(\pi_i',\pi_{-i})$ for $\widetilde{O}\Sp{\epsilon^{-2}}$ episodes and estimate the policy value $\widehat{V}_i(\pi_i',\pi_{-i})$ for players $i$ up to $\red{\epsilon/6}$ error bound. Finally the algorithm decides \red{the output} according to the empirical estimate of the gap. If the policy is not a $\red{2\epsilon}$-EQ, with high probability the empirical gap is larger than $3\epsilon/2$, which leads to a False output. \red{Meanwhile, if the policy is an \eps, with high probability the empirical gap is smaller than $3\epsilon/2$, which leads to a True output.}

To test a CE, we need to learn the best strategy modification in the induced MDP. While there are many algorithms in prior works that can serve as $\learnop$, no algorithm is designed for learning the best strategy modification as far as we know. Interestingly, by constructing an MDP with an extended state space, we can reduce learning the best strategy modification to learning the optimal policy in the new MDP. Specifically, here we design an MDP $M'$ such that learning the best strategy modification with random recommendation policy $\pi$ in MDP $M=(\S,\A,P,r,H)$ is equivalent to learning the optimal policy in $M'$, where the randomness in $\pi$ could be correlated with the transition. In $M'$, the state space is $\S'=\S\times\A$, the action space is $\A$, the transition is $P'_h((s_{h+1},b_{h+1})\mid (s_h,b_h),a_h)=\red{\P}_h(s_{h+1}\mid s_h,\pi_h(s_h)=b_h,a_h)\cdot \pi_{h+1}(b_{h+1}\mid s_{h+1})$ and the reward is $r'_h(\cdot\mid(s_h,b_h),a_h)=r_h(\cdot\mid s_h,\pi_h(s_h)=b_h,a_h)$. The following proposition shows that learning the best strategy modification to recommendation policy $\pi$ in MDP $M$ is equivalent to learning the optimal policy in MDP $M'$.

\begin{restatable}{proposition}{testce}\label{prop:testce}
    Suppose MDP $M'$ is induced by MDP $M$ and recommendation policy $\pi$. Then the optimal policy in MDP $M'$ corresponds to a best strategy modification to recommendation policy $\pi$ in MDP $M$.
\end{restatable}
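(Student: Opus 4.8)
The plan is to exhibit an explicit bijection between strategy modifications $\psi$ for the recommendation policy $\pi$ in $M$ and (deterministic, Markov) policies in the extended MDP $M'$, and show that this bijection preserves the relevant value, so that maximizing over one side is the same as maximizing over the other. The key observation is that a strategy modification $\psi=\{\psi_h\}$ has signature $\psi_h:\S\times\A\to\A$, i.e.\ it takes a state together with the \emph{recommended} action $b_h=\pi_h(s_h)$ and outputs a replacement action. A policy $\mu'=\{\mu'_h\}$ in $M'$ has signature $\mu'_h:\S'=\S\times\A\to\Delta(\A)$, i.e.\ it reads exactly the pair $(s_h,b_h)$ and outputs an action. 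So the identification $\psi_h \leftrightarrow \mu'_h$ (viewing a deterministic $\mu'_h$ as a map into $\A$) is the natural candidate, and one only needs to check that deterministic policies suffice in $M'$, which is the standard fact that every MDP admits a deterministic optimal Markov policy.

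First I would set up notation: fix $\pi$ and the induced $M'=(\S',\A,P',r',H)$ with $\S'=\S\times\A$, $P'_h((s',b')\mid(s,b),a)=\P_h(s'\mid s, b, a)\,\pi_{h+1}(b'\mid s')$, and $r'_h(\cdot\mid(s,b),a)=r_h(\cdot\mid s,b,a)$, together with the initial state $(s_1,b_1)$ where $b_1\sim\pi_1(\cdot\mid s_1)$. Next I would prove the core claim by a coupling/induction argument on $h$: for any strategy modification $\psi$ and its corresponding deterministic policy $\mu'$, the trajectory $(s_1,b_1),(s_2,b_2),\dots$ generated in $M'$ under $\mu'$ has the same law as the pair (true state, recommended action) along the trajectory generated in $M$ when players play $\psi\diamond\pi$. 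Concretely, in $M$ the recommended action at step $h$ is $b_h\sim\pi_h(\cdot\mid s_h)$ (possibly correlated with the transition), the action actually taken is $\psi_h(s_h,b_h)$, and the next state is drawn from $\P_h(\cdot\mid s_h,\psi_h(s_h,b_h))$; in $M'$ under $\mu'$, from state $(s_h,b_h)$ the action taken is $\mu'_h(s_h,b_h)=\psi_h(s_h,b_h)$ and the next state $(s_{h+1},b_{h+1})$ is drawn from $P'_h$, whose first marginal is $\P_h(\cdot\mid s_h,b_h,\psi_h(s_h,b_h))$ and whose conditional second coordinate is $\pi_{h+1}(\cdot\mid s_{h+1})$ — exactly matching. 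Since $r'_h$ at $(s_h,b_h,a_h)$ has the same distribution as $r_h$ at $(s_h,b_h,a_h)$, the expected cumulative rewards coincide, i.e.\ $V^{M'}(\mu')=V_i^{M}(\psi_i\diamond\pi)$ for the player in question.

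Given the value-preserving bijection, the proposition follows: $\max_{\mu'}V^{M'}(\mu')=\max_{\psi}V_i^M(\psi_i\diamond\pi)$, and since the left-hand max is attained at a deterministic Markov policy $\mu'^\star$, its corresponding $\psi^\star$ is a best strategy modification; conversely the optimal-policy side of the statement is immediate. I expect the main obstacle to be handling the correlation between the randomness in $\pi$ and the transition cleanly — the statement deliberately allows $\pi_h(s_h)=b_h$ to be correlated with $\P_h(\cdot\mid s_h,b_h,a_h)$, so the coupling must be stated at the level of joint distributions of $(s_h,b_h,a_h,s_{h+1})$ rather than treating $b_h$ as independent exogenous noise; writing $P'_h$ as the conditional law given $b_h$ (rather than marginalizing it out) is precisely what makes this go through, and I would be careful to define $\psi\diamond\pi$'s dynamics in the correlated sense consistent with the $\diamond$ operator in the preliminaries. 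A secondary point to check is the base case at $h=1$, where the extended initial state must carry $b_1\sim\pi_1(\cdot\mid s_1)$ so that the first-step reward $r'_1$ sees the correct recommendation; this is a one-line verification once the dummy-state convention for stochastic initial states is invoked.
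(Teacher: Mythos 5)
Your proposal is correct and follows essentially the same route as the paper: identify deterministic (Markov) policies in $M'$ with strategy modifications for $\pi$ in $M$ and verify that the corresponding values coincide, so the optimum of one problem transfers to the other. The only cosmetic difference is that you argue value equivalence by a forward coupling of trajectory laws, whereas the paper verifies the same identity $V^{\pi',M'}_{h}\Sp{(s,b)}=Q^{\psi_i\diamond\pi,M}_{h,i}(s,\psi_i(b))$ by matching Bellman recursions; your explicit treatment of the initial recommendation $b_1\sim\pi_1(\cdot\mid s_1)$ and of the correlation between the recommendation and the transition is a welcome bit of extra care.
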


Note that the state space in $M'$ is enlarged by a factor of $A$, which means the sample complexity for testing CE is $A$ times larger than CCE, which coincides with the fact that the minimax swap regret is $\sqrt{A}$ times larger than the minimax external regret \citep{ito2020tight}.

\begin{restatable}{proposition}{testprotocol}\label{prop:testprotocol}
    As long as $\learnop$ satisfies Assumption \ref{asp:single}, Protocol \ref{pro:test} satisfies Assumption \ref{asp:test}.
\end{restatable}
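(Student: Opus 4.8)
The plan is to relate every empirical quantity computed inside Protocol \ref{pro:test} to a single reference model of the test window, show each is accurate up to $\epsilon/6$ plus a drift term of order $H\Delta$, and then verify that the decision threshold at $3\epsilon/2$ cleanly separates the two regimes of Assumption \ref{asp:test}.

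First I would set up the good event. Protocol \ref{pro:test} draws three kinds of randomness: the $\widetilde{O}(\epsilon^{-2})$ trajectories that produce $\widehat{V}_i(\pi)$; the $m$ calls to $\learnop$, one per player $i$; and the $\widetilde{O}(\epsilon^{-2})$ trajectories that produce each $\widehat{V}_i(\pi'_i,\pi_{-i})$. Giving each of these a failure budget of $\delta/(2m+1)$ and taking a union bound, with probability $1-\delta$ we simultaneously have Azuma/Hoeffding concentration $|\widehat{V}_i(\cdot)-\bar V_i(\cdot)|\le\epsilon/6$ (where $\bar V_i$ is the value under the time-average model of the corresponding phase), and, by Assumption \ref{asp:single}, each $\pi'_i$ is an $(\epsilon/6+c_3^\Delta\Delta)$-optimal policy in the single-agent MDP obtained by freezing $\pi_{-i}$; that induced MDP inherits a non-stationarity budget of the same order as $(M^t)$ since $\pi_{-i}$ is held fixed. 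The total number of episodes consumed by the three phases is polynomial in $\epsilon^{-1}$ and the problem parameters, and this defines $C_2(\epsilon,\delta)$.

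Next I would pass from ``time-average model of a phase'' to a common reference model $M^1$ (the first model of the window). Because $\Delta_{[1,C_2(\epsilon,\delta)]}\le\Delta$, a simulation-lemma bound gives $|V_i^{M^t}(\sigma)-V_i^{M^{t'}}(\sigma)|=O(H\Delta)$ for any joint policy $\sigma$ and any two episodes $t,t'$ in the window; hence every $\widehat{V}_i(\cdot)$ is within $\epsilon/6+O(H\Delta)$ of the matching value under $M^1$, and each $\pi'_i$ is an $(\epsilon/6+O(H\Delta))$-best response under $M^1$ (for the CE case, Proposition \ref{prop:testce} makes $\pi'_i$ an $(\epsilon/6+O(H\Delta))$-best strategy modification under $M^1$). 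Writing $g_i=V_i^{M^1}(\dagger,\pi_{-i})-V_i^{M^1}(\pi)$ (or the analogue $\max_{\psi_i}V_i^{M^1}(\psi_i\diamond\pi)-V_i^{M^1}(\pi)$ for CE) and $\widehat{g}_i=\widehat{V}_i(\pi'_i,\pi_{-i})-\widehat{V}_i(\pi)$, and combining the two estimation errors with $V_i^{M^1}(\pi'_i,\pi_{-i})\le V_i^{M^1}(\dagger,\pi_{-i})$ and the near-best-response inequality, I get $|g_i-\widehat{g}_i|\le\epsilon/2+O(H\Delta)$, and therefore $|\max_i g_i-\max_i\widehat{g}_i|\le\epsilon/2+O(H\Delta)$ since $\max$ is $1$-Lipschitz. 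Finally I would take $c_2^\Delta$ to be the constant hidden in the $O(H\Delta)$ term. If $\pi$ is not a $(2\epsilon+c_2^\Delta\Delta)$-EQ at every episode of the window, then in particular $\max_i g_i>2\epsilon+c_2^\Delta\Delta$ at $M^1$, so $\max_i\widehat{g}_i>2\epsilon+c_2^\Delta\Delta-\epsilon/2-O(H\Delta)\ge 3\epsilon/2$ and the protocol returns False; if $\pi$ is an $(\epsilon-c_2^\Delta\Delta)$-EQ at every episode, then $\max_i g_i\le\epsilon-c_2^\Delta\Delta$ at $M^1$, so $\max_i\widehat{g}_i\le\epsilon-c_2^\Delta\Delta+\epsilon/2+O(H\Delta)\le 3\epsilon/2$ and it returns True, which is exactly Assumption \ref{asp:test}.

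I expect the main obstacle to be the non-stationarity bookkeeping across the three phases: verifying that the drift accumulated over phases of length $\widetilde{O}(\epsilon^{-2})$ and $C_3(\epsilon/6,\delta')$ is genuinely $O(H\Delta)$ (controlled by the window's total variation, not its length) via an appropriate simulation lemma, and that the induced single-agent instance handed to $\learnop$ really does inherit an $O(\Delta)$ budget; once these are pinned down, fixing $c_2^\Delta$ and reading off the $3\epsilon/2$ threshold is routine.
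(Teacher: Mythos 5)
Your proposal is correct and follows essentially the same route as the paper's proof: a per-phase error decomposition (concentration at level $\epsilon/6$ for $\widehat{V}_i(\pi)$, the $\learnop$ guarantee of Assumption \ref{asp:single} on the induced single-agent instance, and concentration for $\widehat{V}_i(\pi'_i,\pi_{-i})$), with non-stationarity entering as additive drift terms of order $H\Delta$ absorbed into $c_2^\Delta=c_3^\Delta=O(H)$, the $3\epsilon/2$ threshold separating the two regimes, and Proposition \ref{prop:testce} handling the CE case. Your routing through a single reference model via a simulation lemma is just a slightly more explicit presentation of the paper's per-phase variation bookkeeping ($\Delta_{\I},\Delta_{\J},\Delta_{\K}$), not a different argument.
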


\begin{protocol}[tb]
    \caption{$\test$}
    \label{pro:test}
    \begin{algorithmic}[1]
        \State {\bfseries Input}: Joint Markov policy $\pi$, failure probability $\delta$, tolerance $\epsilon$.
        \State Run $\pi$ for $\widetilde{O}(\epsilon^{-2})$ episodes and estimate the policy value $\widehat{V}_i(\pi)$ with confidence level $\red{\epsilon/6}$ for all players $i\in[m]$.
        \For{$i=1,2,\dots,m$}
        \State Let players $[m]/\{i\}$ follow $\pi_{-i}$ and player $i$ run $\learnop$ with $\delta$ and $\red{\epsilon/6}$. Receive best reponse policy $\pi_i'$ or best strategy modification $\psi_i\diamond\pi$ for (NE,CCE) or CE.
        \State Run $(\pi_i',\pi_{-i})$ or $\psi_i\diamond\pi$ for $\widetilde{O}(\epsilon^{-2})$ episodes and estimate the best response value $\widehat{V}_i(\pi_i',\pi_{-i})$ or the best strategy modification value $\widehat{V}_i(\psi_i\diamond\pi)$ with confidence level $\red{\epsilon/6}$ for players $i$.
        \EndFor
        \If{$\max_{i\in[m]}\Sp{\widehat{V}_i(\pi_i',\pi_{-i})- \widehat{V}_i(\pi)}\leq\red{3\epsilon/2}$ or $\max_{i\in[m]}\Sp{\widehat{V}_i(\psi_i\diamond\pi)- \widehat{V}_i(\pi)}\leq\red{3\epsilon/2}$}
        \State \textbf{return} True
        \Else{}
        \State \textbf{return} False
        \EndIf
    \end{algorithmic}
\end{protocol}

\subsection{Multi-scale Test Scheduling}\label{sec:schedule}
In this section, we introduce how to schedule $\test$ during the committing phase. The scheduling is motivated by MALG in \cite{wei2021non}, with modifications to the multi-agent setting.

We consider a block with length $2^n$ for some integer $n$. The block starts with a $\learn$ with $\epsilon=2^{-n/4}$ and is followed by the committing phase. During the committing phase, $\test$ starts randomly for different gaps with different probabilities at each step. That is, we intend to test larger changes more quickly by testing for them more frequently (by setting the probability higher) so that the detection is adaptive to the severity of changes. Denote the episode index in this block by $\tau$. In the committing phase, if $\tau$ is an integer multiple of $2^{c+q}$ for some $q\in\{0,1,\cdots,Q\}$, with probability $p(q)=\nicefrac{1}{\Sp{\epsilon(q)2^{n/2}}}$ we start a test for gap $\epsilon(q)=\sqrt{c_2/2^q}$ so that the length of test is $2^q$, where the value of $c_2$ comes from the testing oracle and $Q$, $c$ are defined as
\begin{align*}
    Q=\min\Bp{\left\lfloor\log_2\Sp{c_22^{n/2\red{-1}}}\right\rfloor,n-c}_+,c=\left\lceil1+\log_2\max\Bp{5\sqrt{c_2},2\log\frac{1}{\delta}}\right\rceil.
\end{align*}
The gaps we intend to test are approximately $\red{\Bp{\sqrt{2}\epsilon,2\epsilon,2\sqrt{2}\epsilon,\cdots}}$. It is possible that $\test$ for different $\epsilon(q)$ are overlapped. In this case, we prioritize the running of $\test$ for larger $\epsilon(q)$ and pause those for smaller $\epsilon(q)$. After the shorter $\test$ ends, we resume the longer ones until they are completed. In addition, if a $\test$ for $\epsilon(q)$ spans for more than $2^{c+q}$ episodes, it is aborted. To better illustrate the scheduling, we construct an example shown in Figure \ref{fig:schedule}. It can be proved that with high probability no $\test$ is aborted (Lemma \ref{lemma:schedule}), i.e. the $2^c$ multiplication in length reserves enough space for all $\test$. Note that the original MALG (\cite{wei2021non}) does not work here because the length of each scheduled $\test$ can be reduced greatly and there is no guarantee how a $\test$ with reduced length would work. The scheduling is formally stated in Protocol \ref{pro:schedule}.

\begin{restatable}{lemma}{testclean}\label{lemma:testclean}
    With probability $1-3QT\delta$, the regret inside this block
    % \fontsize{8}{8}
    \begin{align}\label{eq:block_main}
        \reg=\Tilde{O}\Sp{2^{3n/4}+c_2\min\Bp{2^{2n/3}\Sp{\red{c_2^\Delta}\Delta_{[1,E_n]}}^{1/3},2^{5n/8}\Sp{\red{c_2^\Delta}\Delta_{[1,E_n]}}^{1/2}}+2^{n/2}c_2^{3/2}+2^{-\alpha n/4}c_1}
    \end{align}
\end{restatable}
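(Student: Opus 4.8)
The plan is to condition on a clean event and split the regret accrued inside the block into three parts: the opening $\learn$ call, the episodes spent executing the scheduled $\test$ subroutines, and the ``pure commit'' episodes in which $\pi$ is played while no test is running. For the clean event I would require that every invocation of $\learn$ and every internal call to $\learnop$ meets its PAC guarantee (Assumptions~\ref{asp:learn} and~\ref{asp:single}), that every empirical value estimated inside Protocol~\ref{pro:test} lies within its stated confidence radius, and that no scheduled $\test$ is aborted (Lemma~\ref{lemma:schedule}). A union bound over the $O(Q)$ test scales, the at most $T$ trigger points, and the $O(1)$ oracle/concentration events attached to each test then yields the claimed failure probability $3QT\delta$, and everything below is argued on this event.

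The two easy pieces come first. The opening $\learn$ runs for $C_1(\epsilon)=c_1 2^{-\alpha n/4}$ episodes, each contributing at most the value range in regret, which gives the $\widetilde O\Sp{2^{-\alpha n/4}c_1}$ term. For the test episodes I would sum over scales $q\in\{0,\dots,Q\}$: a scale-$q$ test is triggered at each of the $\le 2^n/2^{c+q}$ candidate points independently with probability $p(q)=1/(\epsilon(q)2^{n/2})$ and, once triggered, occupies $2^q=c_2\epsilon(q)^{-2}$ episodes, each of which I charge at most the value range in regret in the worst case (a deviating or exploratory policy). The expectation of this cost at scale $q$ is $O\Sp{2^{n/2}/(2^c\epsilon(q))}$; a Chernoff/Bernstein bound on the sum of the independent trigger indicators upgrades it to a high-probability bound of the same order up to an additive polylog, and summing the resulting geometric series over $q$—dominated by $q=Q$, where $\epsilon(Q)\gtrsim 2^{-n/4}$ and $2^Q\lesssim c_2 2^{n/2}$—leaves $\widetilde O\Sp{2^{3n/4}+2^{n/2}c_2^{3/2}}$, absorbed into the first and third terms of \eqref{eq:block_main}.

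The heart of the argument is the pure-commit regret $\sum_t \gap^{M^t}(\pi)$. I would first establish a drift lemma, $\abs{\gap^{M}(\pi)-\gap^{M'}(\pi)}\le O(H)\Norm{M-M'}_1$ (a reward or transition perturbation changes every player's value, and hence every player's best-response or best-strategy-modification value, by at most $O(H)$ times the total variation), so that, writing $t_0$ for the end of the block's $\learn$ call, $\gap^{M^t}(\pi)\le\gap^{M^{t_0}}(\pi)+O(H)\Delta_{[t_0,t]}\le\epsilon+O\Sp{\max\{c_1^\Delta,H\}}\Delta_{[1,E_n]}$ for every commit episode $t$, using Assumption~\ref{asp:learn} for the initial gap: the committed policy starts at gap $\widetilde O(2^{-n/4})$ and drifts upward only slowly, at a rate governed by the local total variation. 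Next I would invoke test soundness (Assumption~\ref{asp:test}): a scale-$q$ test that runs to completion during a stretch in which the current gap exceeds $2\epsilon(q)+c_2^\Delta\Delta$ throughout must output False and terminate the block. Coupling this with test frequency—over any window of $k\cdot 2^{c+q}$ commit episodes at least one scale-$q$ test fires with probability $\ge 1-(1-p(q))^k$, and by the drift lemma the gap stays above a given level over whole windows at a time rather than flickering in and out—I would bound the number of commit episodes whose gap lies in the dyadic band $\bigl(\epsilon(q),\epsilon(q-1)\bigr]$ by $\widetilde O\Sp{\epsilon(q)\,2^{c}\,2^{n/2}}$ before detection forces the block to end, after subtracting the $c_2^\Delta\Delta$ detection slack via the drift lemma. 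Multiplying each band by its per-episode cost $\asymp\epsilon(q)$, adding the base contribution $\epsilon\cdot 2^n = 2^{3n/4}$ from episodes whose gap never leaves the smallest tested level, and then balancing—once bounding the ``bad'' commit episodes directly by how long the slowly-drifting gap can remain above a level before a matching test fires, and once bounding them through $\Delta_{[1,E_n]}$ via the drift lemma—produces the two competing estimates $c_2\cdot 2^{2n/3}\bigl(c_2^\Delta\Delta_{[1,E_n]}\bigr)^{1/3}$ and $c_2\cdot 2^{5n/8}\bigl(c_2^\Delta\Delta_{[1,E_n]}\bigr)^{1/2}$, of which we keep the smaller. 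Adding the three parts yields \eqref{eq:block_main}.

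I expect the pure-commit bound to be the main obstacle, for three intertwined reasons. Unlike in \cite{wei2021non}, a $\test$ that is cut short carries no guarantee at all—one cannot fall back on an anytime property of the tester—so the analysis must route through the scheduling lemma to certify that every test needed for detection runs to its full nominal length despite being paused by longer tests (this is exactly what the $2^c$ buffer in Protocol~\ref{pro:schedule} buys), and then use the completed-test soundness. Second, a transient spike of the gap caused by a burst of non-stationarity can be ``missed'' by the random schedule, which is precisely why the $c_2^\Delta\Delta$ slack together with the drift lemma is indispensable, and pinning down the exact dependence on $\Delta_{[1,E_n]}$ (the $1/3$ versus $1/2$ exponents, and which one to keep) requires a careful choice between the two accounting schemes above. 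Third, all the per-scale Bernoulli detection events and value concentrations must be merged into a single high-probability statement without losing more than logarithmic factors, which is what dictates the $Q$ and $2^c$ factors in the schedule parameters and makes the bookkeeping delicate.
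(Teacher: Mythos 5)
Your high-level architecture (clean event, separate accounting for the opening $\learn$, the episodes consumed by scheduled $\test$'s, and the commit episodes; a drift lemma; completed-test soundness plus the no-abort scheduling lemma) matches the paper's, and your treatment of the first two pieces is essentially the paper's. The problem is that the core step — converting the commit-phase regret into $c_2\min\Bp{2^{2n/3}\Sp{c_2^\Delta\Delta_{[1,E_n]}}^{1/3},2^{5n/8}\Sp{c_2^\Delta\Delta_{[1,E_n]}}^{1/2}}$ — is asserted ("balancing ... produces the two competing estimates") rather than derived, and your sketch lacks the ingredients that actually produce those exponents. The paper partitions $[C_1(\epsilon)+1,2^n]$ into near-stationary intervals $\I_k$ with $\Delta_{\I_k}\leq\frac{1}{c_2^\Delta}\max\Bp{\abs{\I_k}^{-1/2},2^{-n/4-1}}$. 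This local near-stationarity is what legitimizes your claim that the gap "stays above a level rather than flickering": Assumption \ref{asp:test} only guarantees a False output when the policy is bad throughout the \emph{entire} test, so without a bound on the variation inside the window, detection is not guaranteed at all — the drift lemma applied with the global $\Delta_{[1,E_n]}$ gives nothing here. Within each $\I_k$ the paper bounds, per level $r$, the time $\xi_k(r)$ elapsed after the gap first exceeds $2\epsilon(r)$ (with the excess over $2^{c+r}$ controlled by the random schedule, Lemma \ref{lemma:test}), obtaining a per-interval regret of order $Q2^c\sqrt{c_2\abs{\I_k'}}$; it then counts the intervals, $l\lesssim\min\Bp{2^{n/3}\Sp{c_2^\Delta\Delta}^{2/3},2^{n/4}c_2^\Delta\Delta}$ (the $2^{-n/4}$ floor in the threshold gives the second branch), and aggregates by Cauchy–Schwarz, $\sum_k\sqrt{\abs{\I_k'}}\leq\sqrt{l\,2^n}$, which is exactly where the $1/3$ and $1/2$ exponents come from. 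None of these steps (interval construction with that specific threshold, the Hölder count of intervals, the Cauchy–Schwarz aggregation) appears in your proposal, and the $\Delta$-dependence cannot enter the bound without counting how many times the gap can re-enter a dyadic band, i.e.\ without counting intervals.

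Two further concrete issues. First, your per-band count $\widetilde{O}\Sp{\epsilon(q)2^c2^{n/2}}$ is off: a level-$q$ test is triggered at candidate points spaced $2^{c+q}$ apart with probability $1/(\epsilon(q)2^{n/2})$, so the time to detection within one excursion is of order $\epsilon(q)2^{n/2}\cdot 2^{c+q}=2^c2^{n/2}c_2/\epsilon(q)$, not $\epsilon(q)2^c2^{n/2}$; moreover, multiplying your stated count by the per-episode cost $\epsilon(q)$ and summing over $q$ would give only $\widetilde{O}\Sp{c_2^{3/2}2^{n/2}}$ with no dependence on $\Delta_{[1,E_n]}$, which contradicts the very bound you are proving — the missing multiplicity is again the number of near-stationary intervals. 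Second, the uniform estimate $\gap^{M^t}(\pi)\leq\epsilon+O\Sp{\max\Bp{c_1^\Delta,H}}\Delta_{[1,E_n]}$ is too coarse to help (it only yields regret linear in $2^n\Delta_{[1,E_n]}$); the drift lemma must be applied with the \emph{local} variation $\Delta_{\I_k}$, which again presupposes the partition. So the proposal, as written, has a genuine gap precisely at the step the lemma is about.
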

\begin{remark}
    The common way to bound the regret with total variation is to divide the block into several near-stationary intervals $[C_1(\epsilon)+1,2^n]=\I_1\cup\I_2\cup\cdots\cup\I_K$. In each interval the near-stationarity ensure all $\test$ to work properly and hence the regret is bounded. This is because if the regret is to big for a long time $\test$ would detect it. After that we bound $K$ and finally bound the regret of a block using H\"{o}lder's inequality. While prior works \citep{chen2019new} partition the intervals according to $\Delta_{\I_k}=O\Sp{\abs{\I_k}^{-1/2}}$, we set $\Delta_{\I_k}=O\Sp{\max\Bp{\abs{I_k}^{-1/2},2^{-n/4}}}$. This greatly change the subsequent calculations and makes the regret better in our case, please refer to the appendix for more details.
\end{remark}

\begin{figure}[ht]
    \centering
    \includegraphics[width=\linewidth]{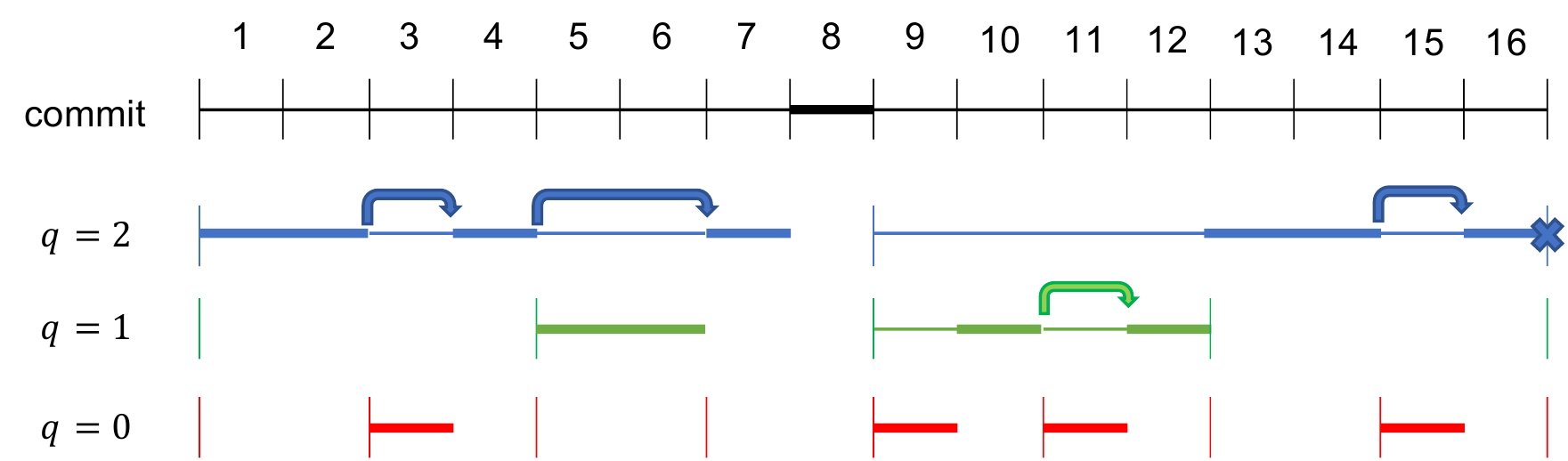}
    \caption{This is an example of the scheduling for committing phase with length $16,Q=2,c=1$. The horizontal lines represent the scheduled $\test$ except for the black line on the top which represent the time horizon. Different colors represent $\test$ for different $\epsilon(q)$. The bold parts of a line represent the active parts and the other parts are the paused parts. The colored vertical lines represent the possible starting points of $\test$ for each level. The cross at the last episode indicates the $\test$ is aborted because it spans $2^{c+q}=8$ episodes but has only run $3<2^q$ episodes. The bold part of the black line indicates that at this episode we commit to the learned policy and there is no $\test$ running.}
    \label{fig:schedule}
\end{figure}

\subsection{Main Algorithm}\label{sec:mainalg}
The main algorithm consists of blocks with doubling lengths. The first block is the shortest block that can accomodate a whole $\learn$ in it. The doubling structure is not only important to making the algorithm parameter free of $\Delta$, but also to that of $T$ (see Appendix for more details). The performance guarantee of this algorithm is stated in Theorem \ref{thm:main}. \red{For simplicity, let $\widetilde{\Delta}_{\J}=c_2^\Delta\Delta_\J$ and $\check{\Delta}_{\J}=\max\Bp{c_1^\Delta,c_2^\Delta}\Delta_\J$}

\begin{algorithm}[ht]
\caption{Multi-scale Testing for Non-stationary MARL}
\label{alg:main}
\begin{algorithmic}[1]
\State {\bfseries Input:} failure probability $\delta$.
\State $N\gets\min\Bp{n\mid 2^n\geq C_1(2^{n/2})}$\label{line:multi_restart}
\For{$n=N,N+1,\cdots$}
    \State Schedule a block sized $2^n$ according to Section \ref{sec:schedule}.
    \State Run $\learn$ with accuracy $\epsilon=2^{-n/4}$ and receive $\pi$.
    \State Run the committing phase according to the schedule. If any $\test$ returns False, go to Line \ref{line:multi_restart} immediately.
\EndFor
\end{algorithmic}
\end{algorithm}

\begin{restatable}{theorem}{main}\label{thm:main}
    With probability $1-3QT\delta$, the regret of Algorithm \ref{alg:main} is 
    % \fontsize{9}{9}
    \begin{align*}
        \reg(T)=\red{\left\{\begin{array}{cc}
    \Tilde{O}\Sp{\check{\Delta}^{1/5}T^{4/5}+c_2\min\Bp{\widetilde{\Delta}^{1/3}T^{2/3},\widetilde{\Delta}^{1/2}T^{5/8}}+\Sp{c_1+c_2^{3/2}}\check{\Delta}^{2/5}T^{3/5}} & \alpha =-2\\
    \Tilde{O}\Sp{c_1\check{\Delta}^{1/5}T^{4/5}+c_2\min\Bp{\widetilde{\Delta}^{1/3}T^{2/3},\widetilde{\Delta}^{1/2}T^{5/8}}+c_2^{3/2}\check{\Delta}^{2/5}T^{3/5}} & \alpha=-3
    \end{array}\right.}
    \end{align*}
    \normalsize
\end{restatable}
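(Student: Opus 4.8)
The plan is to assemble the global bound from the single‑block bound of Lemma~\ref{lemma:testclean} in four moves: (1) restrict to a clean event on which every call to $\learn$ and $\test$ meets its PAC guarantee; (2) cut $[1,T]$ at the restart points into maximal \emph{epochs} $\mathcal E_1,\dots,\mathcal E_P$ of lengths $\ell_1,\dots,\ell_P$ with $\sum_p\ell_p=T$, each consisting of a run of doubling blocks $2^N,2^{N+1},\dots,2^{n_p}$; (3) sum Lemma~\ref{lemma:testclean} over the blocks of a fixed epoch; and (4) aggregate over epochs. The one genuinely new ingredient, and the crux, is a bound on the number of epochs $P$ in terms of $\check\Delta$ and $T$; everything else is geometric‑series summation and H\"older's inequality.

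\emph{Clean event and the per‑epoch non‑stationarity lower bound.} A union bound over all $\learn$/$\test$ calls — each block of length $2^n$ costing $O(Q\cdot 2^n\delta)$, hence $3QT\delta$ over the whole horizon — lets me assume Lemma~\ref{lemma:testclean} holds on every block and that no $\test$ is ever aborted (Lemma~\ref{lemma:schedule}); it also makes every restart \emph{legitimate}. The key point I would establish is this: if a $\test$ at level $q$ returns \textsf{False} inside a size‑$2^n$ block, Assumption~\ref{asp:test} forces the committed policy to have had equilibrium gap $>\epsilon(q)-c_2^\Delta\Delta_{\mathcal E_p}$ at some episode of that block, whereas the $\learn$ opening the block produced (Assumption~\ref{asp:learn}) a policy of gap $\le 2^{-n/4}+c_1^\Delta\Delta_{\mathcal E_p}$. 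Since the equilibrium gap of a \emph{fixed} policy is $O(\Norm{M-M'}_1)$‑Lipschitz in the game (value, best‑response, and best‑strategy‑modification values are $1$‑Lipschitz, up to an $H$‑factor for Markov games that is already carried by $c_1^\Delta,c_2^\Delta$), and the finest scheduled gap satisfies $\epsilon(Q)\ge\sqrt2\cdot 2^{-n/4}$ — a fixed constant factor above the $\learn$ accuracy $2^{-n/4}$, which is where the precise form of $Q$ (outside the degenerate small‑$n$ branch) is used — this chain gives, for every epoch $\mathcal E_p$ ending in a restart, $\check\Delta_{\mathcal E_p}\gtrsim 2^{-n_p/4}$. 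With the doubling structure ($2^{n_p-1}\le\ell_p<2^{n_p+1}$, up to the $O(1)$‑sized first blocks) this reads $\check\Delta_{\mathcal E_p}\gtrsim\ell_p^{-1/4}$.

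\emph{Summing inside an epoch, then over epochs.} Within $\mathcal E_p$ I would sum Lemma~\ref{lemma:testclean} over $n=N,\dots,n_p$: the $\Delta$‑free terms $2^{3n/4},2^{n/2},c_2^{3/2}2^{n/2},c_1 2^{-\alpha n/4}$ are geometric and controlled by their $n_p$ value, and for the $\widetilde\Delta$‑term I would write $\Delta_{\mathcal E_p}=\sum_n\Delta_n$ and apply H\"older with exponents $(3/2,3)$ resp.\ $(2,2)$ to get $\sum_n 2^{2n/3}\Delta_n^{1/3}=O(2^{2n_p/3}\Delta_{\mathcal E_p}^{1/3})$ and $\sum_n 2^{5n/8}\Delta_n^{1/2}=O(2^{5n_p/8}\Delta_{\mathcal E_p}^{1/2})$; using $2^{n_p}=\Theta(\ell_p)$ the epoch regret becomes $\widetilde O\big(\ell_p^{3/4}+c_2\min\{\ell_p^{2/3}\widetilde\Delta_{\mathcal E_p}^{1/3},\ell_p^{5/8}\widetilde\Delta_{\mathcal E_p}^{1/2}\}+c_2^{3/2}\ell_p^{1/2}+c_1\ell_p^{-\alpha/4}\big)$. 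Then convexity of $x\mapsto x^{-1/4}$ together with the lower bound above gives $\check\Delta\ge\sum_p\check\Delta_{\mathcal E_p}\gtrsim\sum_p\ell_p^{-1/4}\ge P(T/P)^{-1/4}=P^{5/4}T^{-1/4}$, i.e.\ $P\lesssim\check\Delta^{4/5}T^{1/5}$. H\"older over the $P$ epochs then finishes: $\sum_p\ell_p^{3/4}\le P^{1/4}T^{3/4}\lesssim\check\Delta^{1/5}T^{4/5}$, $\sum_p\ell_p^{1/2}\le P^{1/2}T^{1/2}\lesssim\check\Delta^{2/5}T^{3/5}$, $\sum_p\ell_p^{2/3}\widetilde\Delta_{\mathcal E_p}^{1/3}\le T^{2/3}\widetilde\Delta^{1/3}$, and $\sum_p\ell_p^{5/8}\widetilde\Delta_{\mathcal E_p}^{1/2}\le T^{5/8}\widetilde\Delta^{1/2}$ (via $\sum_p\ell_p^{5/4}\le T^{5/4}$), so the $\min$ is preserved; summing, noting that $c_1\ell_p^{-\alpha/4}$ is the sole source of the $\alpha=-2$ versus $\alpha=-3$ split (contributing $c_1\check\Delta^{2/5}T^{3/5}$ resp.\ $c_1\check\Delta^{1/5}T^{4/5}$), and folding the degenerate small‑block epochs — $O(1)$‑many per unit of $\check\Delta$ and of $\mathrm{poly}(c_1,c_2,\log\tfrac1\delta)$ length, hence lower order — into the stated bound, reproduces the theorem.

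\emph{Where the difficulty lies.} The main obstacle is the per‑epoch lower bound $\check\Delta_{\mathcal E_p}\gtrsim\ell_p^{-1/4}$, i.e.\ that the algorithm cannot restart ``too cheaply.'' Making it precise requires (a) pinning down a usable detection threshold, namely verifying that the smallest gap actually scheduled, $\epsilon(Q)$, beats the $\learn$ accuracy $2^{-n/4}$ by a fixed constant, so that a \textsf{False} output genuinely certifies a constant fraction of $2^{-n/4}$ worth of gap growth; (b) turning that growth into non‑stationarity via Lipschitzness of the equilibrium gap in the game, while correctly absorbing $c_1^\Delta,c_2^\Delta$ and the Markov‑game $H$‑factor; and (c) carving out the regime where the $\lfloor\cdot\rfloor$ and $\{\cdot\}_+$ truncations in the definitions of $Q$ and $c$ are active and disposing of those (finitely many, short) epochs separately. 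The remaining work is routine geometric‑series and H\"older bookkeeping, plus checking that the doubling design makes the bound free of $T$ as well.
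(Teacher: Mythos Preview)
Your proposal is correct and follows essentially the same route as the paper: the paper also decomposes $[1,T]$ into segments $\J_1,\dots,\J_J$ between restarts, sums the block bound (Lemma~\ref{lemma:testclean}) over the doubling blocks to get the per-segment bound you state, proves $\check\Delta_{\J_j}\gtrsim|\J_j|^{-1/4}$ from $\epsilon(Q)-\epsilon\ge(\sqrt2-1)2^{-n/4}$ exactly as you do, deduces $J=O\bigl(\check\Delta^{4/5}T^{1/5}\bigr)$, and finishes with H\"older. Your write-up is in fact more detailed than the paper's (you spell out the intra-epoch H\"older steps and the handling of degenerate short epochs, which the paper leaves implicit), but the structure and ideas are the same.
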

\begin{remark}
    The main idea of the proof is as follows. The restarts divide the whole time horizon into consecutive segments $[1,T]=\J_1\cup\J_2\cup\cdots\cup\J_J$. In each segment $\J_j$ between restarts, the regret can be bounded by adding up Formula \ref{eq:block_main} for all blocks as
    % \fontsize{9}{9}
    \begin{align*}
        \reg\Sp{\J_j}=\Tilde{O}\Sp{\abs{\J_j}^{3/4}+c_2\min\Bp{\abs{\J_j}^{2/3}\red{\widetilde{\Delta}_{\J_j}}^{1/3},\abs{\J_j}^{5/8}\red{\widetilde{\Delta}_{\J_j}}^{1/2}}+c_2^{3/2}\abs{\J_j}^{1/2}+c_1\abs{\J_j}^{-\alpha/4}}.
    \end{align*}
    It can be proved that the number of segments is bounded by $J=\red{O\Sp{T^{1/5}\check{\Delta}^{4/5}}}$. Using H\"{o}lder's inequality, we get the conclusion.
\end{remark}

Meanwhile, the following theorem can be obtained if we only consider $L$, the number of \red{switches}.
\begin{restatable}{theorem}{mainl}\label{thm:mainl}
    With probability $1-3QT\delta$, the regret of Algorithm \ref{alg:main} is 
    \begin{align*}
        \reg(T)=\left\{\begin{array}{cc}
        \Tilde{O}\Sp{L^{1/4}T^{3/4}+\Sp{c_1+c_2^{3/2}}L^{1/2}T^{1/2}} & \alpha =-2\\
        \Tilde{O}\Sp{c_1L^{1/4}T^{3/4}+c_2^{3/2}L^{1/2}T^{1/2}} & \alpha=-3
        \end{array}\right.
    \end{align*}
\end{restatable}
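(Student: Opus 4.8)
The plan is to reuse the three–step template behind the proof of Theorem~\ref{thm:main}, but to exploit the coarser structure of the switching model to obtain the sharper exponents. First I would condition on the clean event that every call to $\learn$ and to $\test$ behaves as in Assumptions~\ref{asp:learn} and~\ref{asp:test}; a union bound over the $O(QT)$ oracle calls gives this event probability at least $1-3QT\delta$, exactly as in Lemma~\ref{lemma:testclean}. On this event the restarts in Algorithm~\ref{alg:main} cut $[1,T]$ into consecutive segments $\J_1,\dots,\J_J$, each of which---except possibly the last---ends with some $\test$ returning \textsc{False}.

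The decisive new step, and where the switching model genuinely helps, is bounding $J$. I would argue that a segment can end only if it contains a switch. Inside a segment the committed policy is output by $\learn$ run with accuracy $\epsilon=2^{-n/4}$, while the schedule of Section~\ref{sec:schedule} only ever tests gaps $\epsilon(q)\geq 2^{-n/4}=\epsilon$; hence, by Assumption~\ref{asp:test}, if the game were unchanged throughout the segment, every scheduled $\test$ would return \textsc{True}. Therefore each of the at most $J-1$ segments ending with \textsc{False} must contain an index $t$ with $M^t\neq M^{t+1}$, and since the segments are disjoint, $J\leq L+1$.

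Next I would prove a per-segment regret bound $\reg(\J_j)=\widetilde{O}\Sp{|\J_j|^{3/4}+c_2^{3/2}|\J_j|^{1/2}+c_1|\J_j|^{-\alpha/4}}$ by summing the per-block regret over the doubling blocks inside $\J_j$ (the top block has size $\Theta(|\J_j|)$). In a switch-free block the environment is stationary, so the committed $2^{-n/4}$-EQ contributes $O(2^{3n/4})$, the scheduled tests that return \textsc{True} contribute $\widetilde{O}(2^{3n/4})$ by the same counting of scheduled tests as in the proof of Lemma~\ref{lemma:testclean}, and the head $\learn$ contributes $\widetilde{O}(c_1 2^{-\alpha n/4})$; the additional regret from the switch that actually ends the segment is $\widetilde{O}(c_2^{3/2}2^{n/2})$, since a committed policy whose gap has become $g$ is flagged within $\widetilde{O}(c_2^{3/2}2^{n/2}/g)$ episodes (using Lemma~\ref{lemma:schedule} so that the relevant $\test$ is never aborted). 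Finally, plugging $J\leq L+1$ and $\sum_j|\J_j|=T$ into H\"{o}lder's inequality gives $\sum_j|\J_j|^{3/4}\leq(L+1)^{1/4}T^{3/4}$, $\sum_j|\J_j|^{1/2}\leq(L+1)^{1/2}T^{1/2}$, and $\sum_j|\J_j|^{-\alpha/4}$ bounded by $(L+1)^{1/2}T^{1/2}$ when $\alpha=-2$ and by $(L+1)^{1/4}T^{3/4}$ when $\alpha=-3$; collecting terms (and absorbing $L^{1/4}T^{3/4}$ into $c_1 L^{1/4}T^{3/4}$ in the $\alpha=-3$ case) yields the stated bounds.

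The hard part will be making the per-segment bound rigorous, in particular the claim that essentially only one block per segment carries switch-induced excess regret and that this excess stays $\widetilde{O}(c_2^{3/2}|\J_j|^{1/2})$ instead of scaling with the number of switches inside the segment. Within a segment several switches can alternately break and repair the committed policy before any scheduled $\test$ completes, and the tests themselves overlap and may be paused by longer ones (Protocol~\ref{pro:schedule}); carefully controlling the detection times in this regime is exactly where a dedicated switching-model analog of Lemma~\ref{lemma:testclean} is required, and where the argument must genuinely depart from the total-variation analysis rather than following by naively bounding the total variation by a multiple of $L$ in Theorem~\ref{thm:main}.
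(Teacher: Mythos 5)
Your skeleton is the intended one: condition on the clean event, let the \textsc{False} returns cut $[1,T]$ into segments $\J_1,\dots,\J_J$, bound $J$, bound the regret of each segment by summing the doubling blocks, and finish with H\"older/Cauchy--Schwarz. Your bound $J\leq L+1$ is exactly where the switching model should be used and your argument for it is sound: on the clean event every block's $\learn$ outputs a $2^{-n/4}$-EQ, the scheduled gaps satisfy $\epsilon(q)\geq\sqrt{2}\cdot 2^{-n/4}$, so in a switch-free segment Assumption \ref{asp:test} forces every $\test$ to return \textsc{True}, and each terminated segment must therefore contain a switch.

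The genuine gap is the per-segment bound $\reg(\J_j)=\widetilde{O}\Sp{|\J_j|^{3/4}+c_2^{3/2}|\J_j|^{1/2}+c_1|\J_j|^{-\alpha/4}}$, i.e.\ the claim that only the single switch that ends the segment contributes switch-induced excess regret. A segment can contain many switches that never trigger a restart: each one can push the committed policy's gap to some $2\epsilon(r)$ for a stretch of length up to the (random) detection delay, roughly $2^{c}c_2 2^{n/2}/\epsilon(r)$, before either a test completes or a later switch repairs the policy, so each undetected switch can add on the order of $c_2 2^{n/2}$ regret; blocks containing such switches are not "switch-free," and Assumption \ref{asp:test} gives no \textsc{False} guarantee when the gap is large only during part of a test. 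You flag this yourself as the unresolved hard part, so as written the proof is incomplete. The repair does not require a new idea, only the paper's own block machinery: partition each block into maximal switch-free intervals, so the number of near-stationary intervals satisfies $l\leq L_{\mathrm{block}}+1$ with $\Delta_{\I_k}=0$ on each, and Lemmas \ref{lemma:schedule}, \ref{lemma:test} and \ref{lem:block} give a per-block regret $\widetilde{O}\Sp{2^{3n/4}+c_2 2^{n/2}\sqrt{l}+c_2^{3/2}2^{n/2}+c_1 2^{-\alpha n/4}}$, hence a per-segment bound carrying an extra $\widetilde{O}\Sp{c_2|\J_j|^{1/2}\Sp{L_{\J_j}+1}^{1/2}}$ term. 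Summing over segments with Cauchy--Schwarz this term is $\widetilde{O}\Sp{c_2 L^{1/2}T^{1/2}}$, which is absorbed into the stated $c_2^{3/2}L^{1/2}T^{1/2}$ term since $c_2\geq 1$, and together with $J\leq L+1$ and your H\"older computations this yields exactly the claimed bounds; so the theorem survives, but only after replacing your "one bad block per segment" claim with the $\sqrt{l}$-dependent block bound.
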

\begin{remark}
    Algorithm \ref{alg:main} breaks the curse of multi-agent as long as the base algorithms do. If the base algorithm is decentralized, all players are informed to restart when a change is detected and no further communication is needed. In this sense very few extra communications are needed in Algorithm \ref{alg:main}.
\end{remark}

\section{Conclusions}

In this work, we propose black-box reduction approaches for learning the equilibria in non-stationary multi-agent reinforcement learning, both with and without knowledge of parameters. These algorithms offer favorable performance guarantees in terms of the non-stationarity measure, while preserving the advantages of breaking curse of multi-agent and decentralization found in the base algorithms. We conclude this paper by %addressing 
posing two open questions. Firstly, we assume that all oracles with PAC guarantees may have regret as large as $O(1)$ in the proofs. However, it remains unknown how to design algorithms such that the oracles themselves are also no-regret, which would further minimize the regret in learning. Secondly, the lower bound of regret for learning in non-stationary multi-agent systems is currently unknown, despite extensive investigations into lower bounds for single-agent systems \citep{besbes2014stochastic,disucb}.

\section*{Acknowledgements}

This work was supported in part by NSF TRIPODS II-DMS 2023166, NSF CCF 2007036, NSF IIS 2110170, NSF DMS 2134106, NSF CCF 2212261, NSF IIS 2143493, NSF CCF 2019844.

\bibliography{references}
\bibliographystyle{plainnat}

\newpage
\appendix
\section{Challenges in Non-stationary Games}
\label{sec:challenges}
In this section, we discuss the challenges in non-stationary games in more detail.
\subsection{Challenges in Test-based Algorithms}
The idea of achieving optimal regret using consecutive testing in a parameter-free fashion was first proposed in \cite{auer2019adaptively}. Here we restate the idea as follows. Consider the multi-armed bandit setting. There are $K$ arms, $T$ episodes and $L$ abrupt changes. The regret can be decomposed as
\begin{itemize}
    \item Most of the time, we run the standard UCB algorithm. If we always restart the UCB algorithm right after each abrupt change, the accumulated regret is upper bounded by $O\Sp{\sqrt{K(T/L)}L}=O\Sp{\sqrt{KTL}}$.
    \item Intending to detect changes on one arm that make the optimal arm incur $D$ regret, the algorithm starts a test at each step with probability $p_D=D\sqrt{l/KT}$ where $l$ is the number of changes detected thus far. The test should last $n_D=O(1/D^2)$ steps to make the confidence bound no larger than $D$. In expectation, the test incurs $p_DTn_D\Delta=O\Sp{\frac{\Delta}{D}\sqrt{\frac{lT}{K}}}$ regret. Here $\Delta$ is the real gap of the detected arm. To cover all possible $\Delta$, we may detect for gaps of size $D=D_0,2D_0,4D_0,\cdots$. $D_0$ is the smallest gap that is worth noticing\footnote{We can take $D_0=\sqrt{K/T}$ because even if each step we suffer an extra $D_0$ regret, the total regret will still remain.}. This incurs $O\Sp{\sqrt{\frac{LT}{K}}K}=O\Sp{\sqrt{KTL}}$ regret.
    \item The expected number of episodes before we start to detect for a change of size $D$ is $D/p_D=\sqrt{KT/l}$. Summing over all changes, this part incurs $O\Sp{KTL}$ regret
\end{itemize}
In all, the scheme suffer $O\Sp{\sqrt{KLT}}$ regret, which is optimal. In the game setting, however, the second part can become $\frac{1}{D_0}\sqrt{\frac{lT}{K}}K$ and we will no longer have a no-regret algorithm.

\subsection{Challenges in Bandit-over-RL Algorithms}
The high-level idea of BORL is as follows \citep{cheung2020reinforcement}. First partition the whole time horizon $T$ into intervals with length $H$. Each interval is one step for an adversarial bandit algorithm $A$. Inside each interval, one instance of the base algorithm is run, with the tunable parameter selected by $A$. The arms for $A$ are the possible parameters of the base algorithm and the reward is the total reward from one interval. Let the action at timestep $t$ be $a_t$ and $r(a_t)$ be its expected reward, $a_t^*$ be the optimal action at timestep $t$ and $R(w)$ be the expected return from one interval if we chooses parameter $w$. The regret can then be decomposed as $$\sum_{t=1}^T\left[r\left(a_t^*\right)-r\left(a_t\right)\right]=\left[\sum_{t=1}^Tr\left(a_t^*\right)-\sum_{h=1}^{T/H}R(w_h)\right]+\left[\sum_{h=1}^{T/H}R(w_h)-\sum_{t=1}^Tr(a_t)\right]$$ where $w_h$ is the best parameter in interval $h$. The first term is bounded by the base algorithm regret upper bound and the second term is bounded by the adversarial bandit regret guarantee. If we apply the same to minimize, for example, the Nash regret $$\sum_{t=1}^T\max_{i\in[m]}\left(V_i^M(\dagger,\pi_{-i})-V_i^M(\pi)\right),$$ we easily find the max hinders the same decomposition. Even if we drop the max and focus on individual regret, the decomposition is $$\sum_{t=1}^T\left[V_i^M(\dagger,\pi_{-i})-V_i^M(\pi)\right]=\left[\sum_{i=1}^TV_i^M(\dagger,\pi_{-i})-\sum_{h=1}^{T/H}R(w_h)\right]+\left[\sum_{h=1}^{T/H}R(w_h)-\sum_{t=1}^TV_i^M(\pi)\right]$$ where the first term loses meaning. The fundamental reason is that in MAB, at timestep $t$, any action is competing with a fixed action $a_t^*$, while in a game, a policy $\pi$ is competing with $\arg\max_{\pi_i'}V_i^M(\pi_i',\pi_{-i})$, which depends on $\pi$ itself. This difficulty can also be seen from Figure \ref{fig:challenge}.

\section{Omitted Proofs in Section \ref{sec:warmup}}
In this section, we analyze the performance of Algorithm \ref{alg:warmup}. For convenience, we denote the intervals corresponding to each $\learn$ by $\I_1,\I_2,\cdots,\I_K$ and the committing phases as $\J_1,\J_2,\cdots,\J_K$. The committed policy are $\pi^1,\pi^2,\cdots,\pi^K$ respectively. Here $K=\lceil T/(C_1(\epsilon)+T_1)\rceil$ and $\J_K$ can be empty.

\begin{lemma}\label{lem:round}
    If $x>1$, $x/2<\lceil x\rceil<x+1$.
\end{lemma}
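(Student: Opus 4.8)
The plan is to invoke the two defining bracketing properties of the ceiling function and nothing more. For the upper bound $\ceil{x}<x+1$, I would argue by contradiction: if instead $\ceil{x}\geq x+1$, then $\ceil{x}-1$ is an integer with $\ceil{x}-1\geq x$ that is strictly smaller than $\ceil{x}$, contradicting the fact that $\ceil{x}$ is the \emph{least} integer that is at least $x$. Hence $\ceil{x}<x+1$. For the lower bound $x/2<\ceil{x}$, I would chain $\ceil{x}\geq x>x/2$, where the first inequality is the definition of the ceiling and the second holds because $x>1>0$ forces $x/2<x$. Putting the two displays together gives $x/2<\ceil{x}<x+1$, as claimed.

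There is essentially no obstacle: the hypothesis $x>1$ is stronger than needed (only $x>0$ is used, to get $x>x/2$), and it is stated this way merely because in the applications $x$ is a ratio of a sample-complexity quantity to a non-stationarity budget, which is at least $1$. I would therefore present the argument in two short lines, with no case analysis, and move on.
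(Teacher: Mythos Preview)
Your argument is correct: the two inequalities follow immediately from the defining property $x\leq\ceil{x}$ together with the minimality of $\ceil{x}$, and your observation that only $x>0$ is actually needed for the lower bound is accurate. The paper does not supply a proof of this lemma at all; it merely attaches a remark calling it ``a basic algebraic lemma that will be used very often to get over the roundings,'' so there is nothing to compare against.
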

\begin{remark}
    It is a basic algebraic lemma that will be used very often to get over the roundings.
\end{remark}

\begin{lemma}\label{lemma:pi_deviate}
    If $\pi$ is an \eps of episode $t$, then it is also an $\Sp{\epsilon+2H\Delta_{[t,t']}}$-equilibrium for any episode $t'>t$.
\end{lemma}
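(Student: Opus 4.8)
The plan is to bound, for each player $i$, how much the three relevant quantities --- the value $V_i^{M^t}(\pi)$, the best-response value $V_i^{M^t}(\dagger,\pi_{-i})$, and (for the CE version) the best strategy-modification value $\max_{\psi_i}V_i^{M^t}(\psi_i\diamond\pi)$ --- can change when we move from model $M^t$ to model $M^{t'}$. The key elementary fact is a value-difference (simulation) lemma: for any \emph{fixed} joint policy $\sigma$ and any two models $M,M'$, we have $\bigl|V_i^M(\sigma)-V_i^{M'}(\sigma)\bigr|\le H\,\Norm{M-M'}_1$, where $\Norm{\cdot}_1$ is the total-variation distance on models from Definition~\ref{def:nonstationary}. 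This follows by the standard telescoping argument over the $H$ steps: rolling out $\sigma$ in $M$ versus $M'$, each step contributes at most a reward discrepancy $\Norm{R_h^M-R_h^{M'}}_1$ plus a transition discrepancy $\Norm{\P_h^M-\P_h^{M'}}_1$ multiplied by the bounded future return (which lies in $[0,H]$, in fact in $[0,1]$ per remaining step so the sum is controlled by $H$); summing over $h$ gives $\Norm{M-M'}_1$ times the per-step value scale, i.e.\ at most $H\Norm{M-M'}_1$ once one is slightly careful with the horizon bookkeeping. Then by the triangle inequality along the trajectory of models, $\bigl|V_i^{M^t}(\sigma)-V_i^{M^{t'}}(\sigma)\bigr|\le H\sum_{s=t}^{t'-1}\Norm{M^{s+1}-M^s}_1 = H\,\Delta_{[t,t']}$ for any fixed $\sigma$.

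Next I would apply this to the two policies that define the gap. First, $V_i^{M^{t'}}(\pi)\ge V_i^{M^t}(\pi)-H\Delta_{[t,t']}$. Second, letting $\pi_i^\star$ be the best response to $\pi_{-i}$ in $M^t$, we get $V_i^{M^{t'}}(\dagger,\pi_{-i})\le V_i^{M^{t'}}(\pi_i^\star,\pi_{-i})+$ (wait --- actually the inequality goes the other way): since $\pi_i^\star$ is merely \emph{some} policy in $M^{t'}$, $V_i^{M^{t'}}(\dagger,\pi_{-i})\ge V_i^{M^{t'}}(\pi_i^\star,\pi_{-i})$, which is the wrong direction. The correct move is to take $\widetilde\pi_i$ to be the best response to $\pi_{-i}$ in $M^{t'}$, so that $V_i^{M^{t'}}(\dagger,\pi_{-i})=V_i^{M^{t'}}(\widetilde\pi_i,\pi_{-i})\le V_i^{M^t}(\widetilde\pi_i,\pi_{-i})+H\Delta_{[t,t']}\le V_i^{M^t}(\dagger,\pi_{-i})+H\Delta_{[t,t']}$, using the fixed-policy bound on the joint policy $(\widetilde\pi_i,\pi_{-i})$ and then the fact that $V_i^{M^t}(\dagger,\pi_{-i})$ is the \emph{max} over player $i$'s policies in $M^t$. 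Combining, $V_i^{M^{t'}}(\dagger,\pi_{-i})-V_i^{M^{t'}}(\pi)\le \bigl(V_i^{M^t}(\dagger,\pi_{-i})-V_i^{M^t}(\pi)\bigr)+2H\Delta_{[t,t']}\le\epsilon+2H\Delta_{[t,t']}$. Taking the max over $i$ gives the claim for $\mathrm{NE}\gap$ and $\mathrm{CCE}\gap$; for $\mathrm{CE}\gap$ the identical argument works with $\widetilde\psi_i$ the optimal strategy modification in $M^{t'}$, noting $\psi_i\diamond\pi$ is again a fixed joint policy so the simulation lemma applies verbatim. For NE one also needs that $\pi$ remains a product policy, which is immediate since the policy does not change.

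The main obstacle is really just getting the constant in the simulation lemma right --- i.e.\ confirming that the per-model discrepancy $\Norm{M-M'}_1$ (which sums both reward and transition TV over all $H$ steps) translates into a value gap of at most $H\Norm{M-M'}_1$ rather than, say, $H^2\Norm{M-M'}_1$. The point is that $\Norm{M-M'}_1$ already aggregates over $h$, so the extra factor of $H$ comes only from the fact that a single step's transition error can be amplified by the at-most-$H$ future reward; since rewards are bounded in $[0,1]$ per step and there are at most $H$ remaining steps, this is exactly one factor of $H$, and no double counting of the horizon occurs. Everything else is triangle inequalities and the definitions of best response / strategy modification, which are routine.
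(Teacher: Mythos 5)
Your proposal is correct and follows essentially the same route as the paper's proof: a step-wise value-difference (simulation) argument giving $\bigl|V_i^{M^t}(\sigma)-V_i^{M^{t'}}(\sigma)\bigr|\le H\Delta_{[t,t']}$ for any fixed joint policy, applied to both the committed policy and the (re-optimized) best response or strategy modification to obtain the factor $2$. Your explicit handling of the fact that the best response must be taken in $M^{t'}$ and then compared to the max in $M^t$ is a point the paper leaves implicit ("by definition of the equilibria"), but it is the same argument.
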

\begin{proof}
    To facilitate this proof, we define some more notations. The value function of player $i$ at timestep $h_0$, episode $t$, state $s$ is defined to be
    \begin{align}
        V^{\pi,M}_{h_0,i}(s)=\E_\pi\Mp{\sum_{h=h_0}^Hr_{h,i}(s_h,\a_h)\mid M,s_{h_0}=s}.
    \end{align}
    Here $M$ is the model at episode $t$. We also denote the model at episode $t'$ by $M'$. We have the recursion
    \begin{align*}
        V^{\pi,M}_{h_0,i}(s)=\sum_{\a}\pi(\a\mid s)\Mp{\sum_{s'}\P_{h_0}^M\Sp{s'|s,\a}V^{\pi,M}_{h_0+1,i}(s')+R_{h_0,i}^M(s,\a)}.
    \end{align*}
    Assume
    \begin{align*}
        \abs{V^{\pi,M}_{h_0+1,i}(s)-V^{\pi,M'}_{h_0+1,i}(s)}\leq H\sum_{h=h_0+1}^H\Sp{\Norm{\P_h^M-\P_h^{M'}}_1+\Norm{R_h^M-R_h^{M'}}_1},
    \end{align*}
    then we have
    \begin{align*}
        &\abs{V^{\pi,M}_{h_0,i}(s)-V^{\pi,M'}_{h_0,i}(s)}\\
        \leq&\sum_{\a}\pi(\a\mid s)\Mp{\sum_{s'}\Sp{\P_{h_0}^M\Sp{s'|s,\a}-\P_{h_0}^{M'}\Sp{s'|s,\a}}V^{\pi,M'}_{h_0+1,i}(s')}\\
        &+\sum_{\a}\pi(\a\mid s)\Mp{\sum_{s'}\P_{h_0}^M\Sp{s'|s,\a}\Sp{V^{\pi,M'}_{h_0+1,i}(s')-V^{\pi,M}_{h_0+1,i}(s')}}\\
        &+\sum_{\a}\pi(\a\mid s)\Mp{R_{h_0,i}^M(s,\a)-R_{h_0,i}^{M'}(s,\a)}\\
        \leq&H\abs{\P_{h_0}^M\Sp{s'|s,\a}-\P_{h_0}^{M'}\Sp{s'|s,\a}}+\abs{V^{\pi,M}_{h_0+1,i}(s)-V^{\pi,M'}_{h_0+1,i}(s)}+\abs{R_{h_0,i}^M(s,\a)-R_{h_0,i}^{M'}(s,\a)}\\
        \leq&H\sum_{h=h_0}^H\Sp{\Norm{\P_h^M-\P_h^{M'}}_1+\Norm{R_h^M-R_h^{M'}}_1}.
    \end{align*}
    Since the assumption holds trivially for $h_0=H$, by induction we get
    \begin{align*}
        \abs{V^{\pi,M}_{1}(s)-V^{\pi,M'}_{1}(s)}\leq\Delta_{[t,t']} \; .
    \end{align*}
    Finally by definition of the equilibria, we get the conclusion.
\end{proof}

\begin{lemma}
    With probability $1-T\delta$, $\pi^k$ is $\Sp{\epsilon+c_1^\Delta\Delta_{\I_k}}$-approximate equilibrium in the last episode of $\I_k$ for all $k\in[K]$.
\end{lemma}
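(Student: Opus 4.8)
The plan is to read this statement as nothing more than a per-interval application of the learning-oracle guarantee (Assumption \ref{asp:learn}), followed by a union bound over the $K$ calls to $\learn$.

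First, I would note that the $k$-th invocation of $\learn$ occupies exactly the interval $\I_k$; by construction it uses at most $C_1(\epsilon)$ episodes, so its sample budget is respected. Within this window the environment drifts by a total-variation amount $\Delta_{\I_k}$ (the restriction of Definition \ref{def:nonstationary} to $\I_k$), so from the oracle's point of view it is being run in an environment whose non-stationarity degree is $\Delta_{\I_k}$. Instantiating Assumption \ref{asp:learn} with $\Delta=\Delta_{\I_k}$ then gives: with probability at least $1-\delta$, the returned policy $\pi^k$ is an $\Sp{\epsilon+c_1^\Delta\Delta_{\I_k}}$-EQ. The single point that must be pinned down is \emph{which} Markov game this approximate-equilibrium guarantee refers to; the convention behind Assumption \ref{asp:learn} — and the one under which the concrete base algorithms in the appendix are analysed, since those algorithms process episodes sequentially and output at the end — is that it is the model at the final episode of the learning window, i.e.\ the last episode of $\I_k$. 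With that reading the per-$k$ statement is exactly the claimed one, and in particular no extra propagation factor (such as the $2H$ of Lemma \ref{lemma:pi_deviate}, which is instead used later to carry the equilibrium through the committing phase) is incurred.

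Finally, I would union-bound over $k\in[K]$. Since $C_1(\epsilon)+T_1\ge1$, there are at most $K\le T$ such intervals, so all the oracle calls succeed simultaneously with probability at least $1-K\delta\ge1-T\delta$, which is the lemma. The ``hard part'' is not a calculation at all: it is simply making precise that ``a game'' in Assumption \ref{asp:learn} means the terminal model of the learning window, and that the budget fed to the assumption is $\Delta_{\I_k}$ rather than something larger (e.g.\ $\Delta_{[1,\max\I_k]}$); once those conventions are fixed the proof is immediate.
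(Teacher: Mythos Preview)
Your proposal is correct and matches the paper's own proof, which simply cites the union bound together with $K\le T$. Your additional remarks about interpreting Assumption~\ref{asp:learn} as referring to the terminal model of the learning window and feeding it $\Delta_{\I_k}$ are exactly the conventions the paper silently adopts (and uses explicitly in the next proposition), so nothing is missing.
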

\begin{proof}
    This is by the union bound and $K\leq T$.
\end{proof}
The following theorem is conditioned on this high-probability event.

\warmupbound*
\begin{proof}
    According to Assumption \ref{asp:learn}, $\pi^k$ is an $\Sp{\epsilon+c_1^\Delta\Delta_{\I_k}}$-approximate equilibrium for the last episode of $\I_k$. Hence it is an $\Sp{\epsilon+2\max\Bp{c_1^\Delta,H}\Delta_{\I_k\cup\J_k}}$-approximate equilibrium for any episode in $\J_k$ according to Lemma \ref{lemma:pi_deviate}. In the proof we omit the max with $H$ and recover it in the conclusion.
    \begin{align*}
        \reg(T)=&\sum_{k=1}^K\Sp{|\I_k|+|\J_k|\Sp{\epsilon+2c_1^\Delta\Delta_{\I_k\cup\J_k}}}\\
        \leq&K\lceil C_1(\epsilon)\rceil+T\epsilon+2c_1^\Delta T_1\Delta\\
        \leq&\frac{4TC_1(\epsilon)}{T_1}+T\epsilon+2c_1^\Delta T_1\Delta.
    \end{align*}
\end{proof}

\warmupcor*
\begin{proof}
    As before, we omit the max with $H$ in the proof.
    \begin{align*}
        \reg(T)\leq&8TC_1(\epsilon)\sqrt{\frac{c_1^\Delta\Delta}{TC_1(\epsilon)}}+T\epsilon+4c_1^\Delta\Delta\sqrt{\frac{TC_1(\epsilon)}{c_1^\Delta\Delta}}\\
        =&12\sqrt{c_1^\Delta T\Delta C_1(\epsilon)}+T\epsilon\\
        =&12\sqrt{c_1^\Delta T\Delta c_1}\epsilon^{\alpha/2}+T\epsilon\\
    \end{align*}
    Applying Lemma \ref{lem:round}, we get the desired conclusion.
\end{proof}

\section{Omitted Proofs in Section \ref{sec:algorithm}}
In Section \ref{sec:testpf} we present the proof for Proposition \ref{prop:testce} and Proposition \ref{prop:testprotocol}. In Section \ref{sec:block} and \ref{sec:main}, we analyze the performance of Algorithm \ref{alg:main}. We first analyze the performance of single block in Section \ref{sec:block} and then present the subsequent proof in Section \ref{sec:main}. For convenience, the episodes in Section \ref{sec:block} refer to $\tau$ and the episodes in Section \ref{sec:main} refer to $t$.

\begin{protocol}[tb]
    \caption{Scheduling $\test$ in a block with length $2^n$}
    \label{pro:schedule}
    \begin{algorithmic}[1]
        \State {\bfseries Input}: Joint Markov policy $\pi$, failure probability $\delta$, tolerance $\epsilon$.
        \For{$\tau=0,1,\dots,2^n-1$}
            \For{$q=0,1,\cdots,Q$}
            \If{$\tau$ is a multiple of $2^{c+q}$}
                \State With probability $p(q)$, schedule a $\test$ for $\epsilon(q)$ starting from $\tau$. 
            \EndIf
            \EndFor
        \EndFor
    \end{algorithmic}
\end{protocol}

\subsection{Proofs Regarding Construction of $\test$}\label{sec:testpf}
\testce*
\begin{proof}
    To facilitate the proof, we define some notations here. We define the value function of policy $\pi$ in an MDP $M$ at timestep $h_0$ and state $s$ as
    \begin{align*}
        V^{\pi,M}_{h_0}\Sp{s}=\E_\pi\Mp{\sum_{h=h_0}^Hr_{h}(s_h,a_h)\mid M,s_{h_0}=s}.
    \end{align*}
    The mean reward from $r_h(\cdot|s,a)$ is denoted as $R_h(s,a)$. Let $\pi'$ be a policy in $M$, then
    \begin{align*}
        V^{\pi',M'}_{h}\Sp{(s,b)}=&\sum_{a}\pi'\Sp{a\mid(s,b)}\Mp{\sum_{(s',b')}P_h'\Sp{(s',b')|(s,b),a}V^{\pi',M'}_{h+1}\Sp{(s',b')}+R_h'((s,b),a)}
    \end{align*}
    Additionally, the Q-function of a state-action pair $(s,b)$ under policy $\pi$ at timestep $h_0$ for agent $i$ in Markov game $M$ is defined as
    \begin{align*}
        Q^{\pi,M}_{h_0,i}(s,b)=\E_\pi\Mp{\sum_{h=h_0}^Hr_{h,i}(s_h,\a_h)\mid M,s_{h_0}=s,a_{h_0,i}=b}.
    \end{align*}
    Assume $\pi'$ is a deterministic policy and $\psi_i$ is a strategy modification such that its choice is the same as the choice of $\pi'$, then
    \begin{align*}
        Q^{\psi_i\diamond\pi,M}_{h_0,i}(s,\psi_i(b))=&\sum_{(s',b')}\P_h(s'|s,\pi_h(s)=b,\psi_i(b))\pi_{h+1}(b'\mid s')Q^{\psi_i\diamond\pi,M}_{h_0+1,i}(s',\psi_i(b))\\
        &\qquad +R_{h_0}(s,\psi_i(b)\mid\pi_h(s)=b)
    \end{align*}
    by definition of $M'$ we can directly see that
    \begin{align}\label{eq:cetest}
        V^{\pi',M'}_{h}\Sp{(s,b)}=Q^{\psi_i\diamond\pi,M}_{h,i}(s,\psi_i(b))
    \end{align}
    Hence the optimal policy of $M'$ corresponds to a best strategy modification to recommendation policy.
\end{proof}

\testprotocol*
\begin{proof}
    We first consider the NE and CCE case. The main logic has been stated in the main text. We restate it here with environmental changes involved. Denote the intervals that run Line 2, 4, 5 by $\I,\J,\K$ respectively. Then with high probability, the estimation of $\widehat{V}_i(\pi)$ departs from the true value by at most $\epsilon/6+\Delta_\I$ and that of $\widehat{V}_i(\pi_i',\pi_{-i})$ is at most $\epsilon/3+c_3^\Delta\Delta_{\J}+\Delta_{\K}$. Combine all the error we get the conclusion. In terms of sample complexity
    \begin{align*}
        C_2(\epsilon)=\widetilde{O}\Sp{mC_3(\epsilon)+\epsilon^{-2}}=\widetilde{O}\Sp{mC_3(\epsilon)}.
    \end{align*}
    The last equality use the information-theoretic lower bound $C_3(\epsilon)=\Omega(\epsilon^{-2})$. Then we consider the CE case. By Equation \ref{eq:cetest} we can prove the correctness of this algorithm using the same argument as before. In terms of sample complexity, it is the same as before except that we need to change the size of state space from $\abs{\S}$ to $\abs{\S}\abs{\A}$. Finally, $c_2^\Delta=c_3^\Delta$
\end{proof}
By \citet{wei2021non}, we know that we have $c_2^\Delta=c_3^\Delta=O(H)$.

\subsection{Single Block Analysis}\label{sec:block}
Divide $[C_1(\epsilon)+1,2^n]$ into $\I_1\cup\I_2\cup\cdots\cup\I_K$ such that $\I_k=[s_k,e_k],s_1=C_1(\epsilon)+1,e_K=2^n,e_k+1=s_{k+1}$ and
\begin{align*}
    \Delta_{\I_k}\leq\frac{1}{c_2^\Delta}\max\Bp{\frac{1}{\sqrt{\abs{\I_k}}},2^{-n/4-1}}
\end{align*}
Intervals with such property are called near-stationary. Let $E_n\in\I_l$ be the last episode (The block may be ended due to a failed $\test$). Define $e_k'=\min\{E_n,e_k\},\I_k'=[s_i,e_k']$. If $k>l$, $\I_k'=\varnothing$. For convenience, we denote $\tau_n=C_1(\epsilon)+1$ in the following proof.
\begin{definition}
    For $k\in [K],q\in\{0,1,\cdots,Q\}$, let
    \begin{align*}
        \tau_k(q)=\min\Bp{\tau\in\I_k'|\text{ $\pi$ is not a $2\epsilon(q)$-EQ at $\tau$}}, \xi_k(q)=\Mp{e_k'-\tau_k(q)+1}_+.
    \end{align*}
\end{definition}

First, we are going to show that with high probability no $\test$ is aborted.
\begin{lemma}\label{lemma:schedule}
    With probability $1-2QT\delta$, for any $\test$ instance testing gap $\epsilon(q)$ maintained from $s$ to $e$, it returns fail if the policy is not $(2\epsilon(q)+c_2^\Delta\Delta_{[s,e]})$-NE/CCE for any $\tau\in[s,e]$. In equivalence, $e-s<2^{c+q}$ and all $\test$ function as desired.
\end{lemma}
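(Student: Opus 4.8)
\textbf{Proof proposal for Lemma \ref{lemma:schedule}.}

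The plan is to reduce the statement to two separate claims: (i) a \emph{correctness} claim, that every individual $\test$ instance, when it runs to completion, returns the right answer relative to the nonstationarity accumulated over its (possibly stretched-out) lifespan; and (ii) a \emph{no-abortion} claim, that no $\test$ for gap $\epsilon(q)$ ever occupies more than $2^{c+q}$ episodes before it finishes its $2^q$ active steps, so that the $2^c$ slack factor in the scheduling period is never exhausted. Part (i) is essentially a union bound over the at most $QT$ possible $\test$ instances in the block, each of which fails with probability at most $2\delta$ by Assumption \ref{asp:test} (applied with $\epsilon \gets \epsilon(q)$ and the local nonstationarity $\Delta_{[s,e]}$): the ``$2\delta$'' per instance is why the statement carries $2QT\delta$. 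Here I would be careful that the guarantee in Assumption \ref{asp:test} is stated for the environment restricted to the $C_2(\epsilon(q))$ episodes the test samples, which is exactly $[s,e]$ once we establish the test is not stretched beyond a window of manageable nonstationarity; so parts (i) and (ii) are mildly intertwined and should be proved together in one induction over the episode index $\tau$.

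First I would set up the bookkeeping. Fix $q$ and look at the ``levels'' $0,1,\dots,q-1$ that can preempt a level-$q$ test. A level-$q$ test starts only at episodes $\tau$ that are multiples of $2^{c+q}$, needs $2^q$ active episodes, and is paused exactly during the active episodes of any lower-level test. So I would bound the total number of lower-level active episodes that can fall inside a window of length $2^{c+q}$: a level-$q'$ test ($q' < q$) has $2^{q'}$ active episodes and starts only at multiples of $2^{c+q'}$, hence at most $2^{c+q}/2^{c+q'} = 2^{q-q'}$ of them start inside the window, contributing at most $2^{q-q'}\cdot 2^{q'} = 2^q$ active episodes each level — wait, that is already too much if summed over all $q' < q$. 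The resolution, which I would make precise, is that a level-$q'$ test only \emph{starts} with probability $p(q') = 1/(\epsilon(q')2^{n/2})$, and crucially only one test per level is ``alive'' at a time (a new level-$q'$ test is scheduled only at the level-$q'$ grid points, and an old one spanning past $2^{c+q'}$ episodes is itself aborted — but we are trying to rule that out). So the correct accounting is: condition on the high-probability event that the \emph{number} of tests actually started at each level inside the window is close to its expectation $p(q')\cdot(\text{number of grid points}) \le p(q')\,2^{c+q-c-q'} = 2^{q-q'}/(\epsilon(q')2^{n/2})$, use a Chernoff/Bernstein bound (this is where the $2\log(1/\delta)$ inside $c$ enters, absorbing the deviation term), and then sum the resulting active-episode counts $\sum_{q'<q} (\text{number started})\cdot 2^{q'}$ and check it is at most $(2^c - 1)2^q$, leaving room for the $2^q$ active episodes of the level-$q$ test itself within the budget $2^{c+q}$. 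The algebra should come out to requiring $c \gtrsim 1 + \log_2\max\{5\sqrt{c_2}, 2\log(1/\delta)\}$, matching the definition of $c$ in the text.

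The main obstacle, I expect, is exactly this concentration-plus-summation step in part (ii): one has to track, simultaneously over all $q$ and all $2^{c+q}$-windows in the block, that the random number of preempting lower-level tests never overshoots, and the naive union bound over windows must be folded into the $2QT\delta$ budget rather than blowing it up. I would handle this by noting the tests at level $q'$ form an independent-across-disjoint-grid-points Bernoulli process, applying a single Bernstein inequality per level over the whole block (not per window), and using that the grid structure makes the worst-case window the binding constraint. Once the no-abortion event holds, every $\test$ that is scheduled does run its full $2^q$ active episodes within a contiguous-enough span that $\Delta_{[s,e]}$ is controlled, and correctness (part (i)) follows directly from Assumption \ref{asp:test}; combining the failure probabilities ($2\delta$ from correctness, plus the concentration slack which I would also charge at rate $\delta$ per level-instance) gives the claimed $1 - 2QT\delta$. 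The equivalence ``$e - s < 2^{c+q}$ iff no $\test$ is aborted'' is then immediate from the abortion rule in Protocol \ref{pro:schedule}.
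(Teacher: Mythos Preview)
Your proposal is essentially correct and follows the paper's approach: union bound over all $\test$ instances for correctness (part (i)), then for part (ii) a Bernstein bound on the number of sublevel-$r$ tests started in each $2^{c+q}$ window, summing $(\text{count})\cdot 2^{r}$ over $r<q$, and checking the total is below $(2^{c}-1)2^{q}$ by the definition of $c$. Two small corrections are worth flagging. First, the $2QT\delta$ arises as $QT\delta$ (union over at most $QT$ tests for correctness, each failing with probability $\delta$ by Assumption~\ref{asp:test}) plus $QT\delta$ (union over at most $QT$ (window, sublevel) pairs for the Bernstein bounds); it is not $2\delta$ per test. Second, your worry that the ``naive union bound over windows'' might blow up the budget is unfounded: the number of (level-$q$ window, sublevel $r$) pairs across the block is already at most $QT$, so the paper simply takes that union bound directly. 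Your proposed alternative---a single block-wide Bernstein per level---would not give the per-window control you need, so drop it and use the direct per-window Bernstein plus union bound as the paper does. No induction over $\tau$ is required.
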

\begin{proof}
    By union bound, the probability all $\test$ function as desired is $1-QT\delta$. There are $2^{q-r}$ possible starting points for a test occupying $2^r$ episodes. For each of them, $\test$ exists with probability $\nicefrac{1}{\Sp{\epsilon(r)2^{n/2}}}$. By Bernstein's inequality, with probability $1-\delta$, the number of such tests is upper-bounded by
    \begin{align*}
        &2^{q-r}\frac{1}{\epsilon(r)2^{n/2}}+\sqrt{2\cdot2^{q-r}\frac{1}{\epsilon(r)2^{n/2}}\log\frac{1}{\delta}}+\log\frac{1}{\delta}\\
        \leq&2\cdot2^{q-r}\frac{1}{\epsilon(r)2^{n/2}} +2\log\frac{1}{\delta}\\
        =&\frac{2^{q-r/2+1}}{\sqrt{c_2}2^{n/2}}+2\log\frac{1}{\delta}.
    \end{align*}
    By union bound, this inequality holds for all $\test$ with probability $1-QT\delta$. So the total length of all shorter tests is upper bounded by
    \begin{align*}
        &\sum_{r=0}^{q-1}\Sp{\frac{2^{q-r/2+1}}{\sqrt{c_2}2^{n/2}}+2\log\frac{1}{\delta}}2^r\\
        \leq&2^{q+1}\frac{2^{\frac{q-1}{2}}-1}{\sqrt{2}-1}\frac{1}{\sqrt{c_2}2^{n/2}}+2\log\frac{1}{\delta}(2^q-1)\\
        \leq&5\sqrt{c_2}\Sp{2^{\frac{q-1}{2}}-1}+\log\frac{1}{\delta}2^{q+1}\\
        \leq&\max\Bp{5\sqrt{c_2},2\log\frac{1}{\delta}}2^q
    \end{align*}
    Here we use $2^q<2^Q<c_22^{n/2}$. Using the union bound, we get the conclusion.
\end{proof}

In subsequent proofs, we condition on the high probability event described in this lemma.

\begin{lemma}\label{lemma:test}
    With probability $1-Q\delta$, for all $r\in[Q]$.
    \begin{align*}
        \sum_{k=1}^l\Mp{\xi_k(r)-2^{c+r}}_+\leq2^{c+r-1}\epsilon(r-2)\sqrt{2^n}\log\frac{1}{\delta}=2^c\sqrt{2^{r+n}c_2}
    \end{align*}
\end{lemma}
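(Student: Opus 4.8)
The plan is to control the total amount of ``wasted'' time inside a near-stationary interval $\I_k$ caused by tests for gap $\epsilon(r)$ starting \emph{before} the policy actually goes bad (at episode $\tau_k(r)$), or more precisely to control the overshoot $\Mp{\xi_k(r)-2^{c+r}}_+$ summed over intervals. First I would fix a level $r$ and recall from the scheduling in Protocol~\ref{pro:schedule} that a $\test$ for $\epsilon(r)$ can only be launched at episodes $\tau$ that are integer multiples of $2^{c+r}$, and at each such candidate launch point it is started independently with probability $p(r)=\nicefrac{1}{(\epsilon(r)2^{n/2})}$. The quantity $\xi_k(r)$ measures how long after $\pi$ first fails to be a $2\epsilon(r)$-EQ the interval $\I_k'$ keeps running; since the block only terminates when some $\test$ returns False, $\xi_k(r)$ being large forces many candidate launch points of level $r$ inside $[\tau_k(r),e_k']$ to have been ``missed'' (the test was not started), because by Lemma~\ref{lemma:schedule} any such test that \emph{is} started will detect the failure and abort the block. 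So I would upper bound $\Mp{\xi_k(r)-2^{c+r}}_+$ by (roughly) $2^{c+r}$ times the number of consecutive missed level-$r$ launch points following $\tau_k(r)$.

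Next I would convert this into a probabilistic tail bound. Over the whole block there are at most $2^n/2^{c+r}$ candidate launch points for level $r$; the event that a particular run of consecutive missed launch points has length $j$ has probability $(1-p(r))^j \le e^{-jp(r)}$. A union bound over the at most $l \le K \le 2^n$ intervals, together with the geometric decay, shows that with probability at least $1-\delta$ (per level, hence $1-Q\delta$ after a union bound over $r\in[Q]$) the total overshoot $\sum_{k=1}^l\Mp{\xi_k(r)-2^{c+r}}_+$ is bounded by something like $2^{c+r}\cdot p(r)^{-1}\log\frac1\delta$ times a constant, up to the number of intervals that can actually contribute. Plugging in $p(r)^{-1}=\epsilon(r)2^{n/2}$ gives $2^{c+r}\epsilon(r)2^{n/2}\log\frac1\delta$, and then I would massage the exponents: $\epsilon(r)=\sqrt{c_2/2^r}$ and $\epsilon(r-2)=\sqrt{c_2/2^{r-2}}=2\epsilon(r)$, so $2^{c+r-1}\epsilon(r-2)\sqrt{2^n}=2^{c+r}\epsilon(r)\sqrt{2^n}=2^c\sqrt{2^{r+n}c_2}$, matching the claimed right-hand side (the $\log\frac1\delta$ factor being absorbed or tracked as in the statement).

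The main obstacle I anticipate is handling the interaction between the different levels and the ``pause/resume/abort'' mechanics of the scheduling: a level-$r$ test may be delayed because a shorter test is running, or aborted because it spanned more than $2^{c+r}$ episodes, and I need Lemma~\ref{lemma:schedule} (the ``no test aborted'' event) to guarantee that a \emph{started} level-$r$ test really does run to completion within its window and really does fire when the policy is bad. Conditioning on that high-probability event, the argument reduces to the clean ``geometric number of missed independent launches'' computation above, but one must be careful that the overshoot beyond the first $2^{c+r}$ episodes is genuinely attributable to missed launches and not to a test that was started but paused; the factor $2^c$ in the block lengths is exactly what buys the slack to make this accounting go through. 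A secondary technical point is the union bound bookkeeping to land on the stated $1-Q\delta$ failure probability while keeping the $\log\frac1\delta$ factor only where it appears in the bound.
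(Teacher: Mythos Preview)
Your overall strategy---count missed launch points in each bad stretch and control the count by a geometric tail---is the same as the paper's, and your algebra $p(r)^{-1}=\epsilon(r)2^{n/2}$, $2^{c+r}\epsilon(r)\sqrt{2^n}=2^c\sqrt{c_22^{r+n}}$ is right. Two points, however, separate your sketch from a complete argument.

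First, a level-$r$ test is not guaranteed to fire in the bad region. By Assumption~\ref{asp:test}, a test with parameter $\epsilon(r)$ returns False only when the gap exceeds $2\epsilon(r)+c_2^\Delta\Delta$ \emph{throughout} its window, whereas $\tau_k(r)$ is merely the first time the gap exceeds $2\epsilon(r)$; near-stationarity of $\I_k$ keeps the gap close to that threshold but need not push it above $2\epsilon(r)+c_2^\Delta\Delta$. The paper (its indices are somewhat garbled at this point) actually invokes tests at a nearby level with strictly smaller tolerance so that the False-threshold sits safely below $2\epsilon(r)$; this only shifts constants but is what makes the ``started test must fire'' claim correct. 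Lemma~\ref{lemma:schedule} alone, which you cite, only guarantees the test is not aborted.

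Second, and more substantively, ``union bound over the at most $l$ intervals'' does not deliver the stated bound. A per-interval geometric tail, summed or union-bounded over $k$, naturally carries an extra factor of $l$, and $l$ can be polynomially large in $2^n$ (Lemma~\ref{lem:num_stationary}). The paper avoids this by counting globally over all $\tau$ in the block at once: the key observation is that \emph{every} launch point lying in $\bigcup_k[\tau_k(r),e_k'-2^{c+r-1}]$ must be a no-test point, since a test started there would sit entirely in the bad region, return False, and terminate the block before that $\tau$ could be counted. Because the Bernoulli coins at these (predictable) bad launch points are i.i.d.\ with parameter $p$, the total number of no-test bad launch points is dominated by a \emph{single} geometric random variable---the first success among bad launch points ends the block---rather than a sum of $l$ independent geometrics. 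This global accounting is what eliminates the $l$ factor, and it is not present in your outline.
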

\begin{proof}
    For each $r\in[Q]$.
    \fontsize{9}{9}
    \begin{align*}
        &2^{-c-r+2}\sum_{k=1}^l\Mp{\xi_k(r)-2^{c+r}}_+\\
        =&2^{-c-r+2}\sum_{k=1}^l\Mp{e_k'-\tau_k(r)+1-2^{c+r}}_+\\
        \leq&\sum_{k=1}^K\sum_{\tau\in\I_k}\mathbbm{1}\Mp{\tau\in[\tau_k(r),e_k'-2^{c+r-1}],\tau\text{ mod }2^{c+r-2}\equiv0}\\
        =&\sum_{\tau=\tau_n}^{2^n}\mathbbm{1}\Mp{\tau\in[\tau_k(r),e_k'-2^{c+r-1}],\tau\text{ mod }2^{c+r-2}\equiv0}\\
        \leq&\sum_{\tau=\tau_n}^{2^n}\mathbbm{1}\Mp{\tau\in[\tau_k(r),e_k'-2^{c+r-1}],\tau\text{ mod }2^{c+r-2}\equiv0\text{ and there is no test for $\epsilon(r)/2$ starting at any $t\in[\tau_n,\tau]$}}\\
        &+\sum_{\tau=\tau_n}^{2^n}\mathbbm{1}\Mp{\tau\in[1,E_n-2^{c+r-1}]\text{ and there is a test for $\epsilon(r)/2$ starting at some $t\in[\tau_n,\tau]$}}\\
        \leq&\Mp{1+\frac{\log(1/\delta)}{-\log(1-1/\Sp{\epsilon(r-2)\sqrt{2^n}})}}+0\leq2\epsilon(r-2)\sqrt{2^n}\log\frac{1}{\delta}
    \end{align*}
    \normalsize
    The first inequality holds because in an interval of length $w$, there are at least $(w+2-2u)/u$ points whose indices are multiples of $u$. The third inequality holds with probability $1-\delta$. The first sum is bounded using the fact the test is started i.i.d. with constant probability $1/\Sp{\epsilon(r-2)\sqrt{2^n}}$. In the second sum, the condition implies that the ending time of the test is before $t+2^{c+r-2}-1\leq e_i-2^{c+r-2}-1\leq e_i$ so the test is within $\I_k$ and $t+2^{c+r-2}-1\leq\tau+2^{c+r-2}-1<E_n$ so the test ends before the block ends. However, the test is for $\epsilon(r)$ and the variation during the test is bounded by $\Delta_{\I_k}<2^{-n/4}=\epsilon<\epsilon(r)$, so such $\test$ must return Fail.
\end{proof}

In subsequent proofs, we further condition on the high probability event described in this lemma.

\begin{lemma}
\label{lem:num_stationary}
    The total number of near-stationary intervals
    \begin{align}\label{eq:num_interval}
        l\leq1+2\min\Bp{2^{n/3}\Sp{c_2^\Delta\Delta_{[1,E_n]}}^{2/3},2^{n/4}c_2^\Delta\Delta_{[1,E_n]}}
    \end{align}
\end{lemma}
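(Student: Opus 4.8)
The plan is to bound the number $l$ of near-stationary intervals by a greedy/counting argument on the partition $\I_1,\dots,\I_K$ restricted to the episodes up to $E_n$. Recall that each $\I_k = [s_k,e_k]$ is constructed so that $\Delta_{\I_k} \le \frac{1}{c_2^\Delta}\max\{|\I_k|^{-1/2}, 2^{-n/4-1}\}$, and implicitly the partition is maximal in the sense that extending $\I_k$ by one more episode would violate this inequality. So for every $k < l$ we must have $\Delta_{\I_k \cup \{e_k+1\}} > \frac{1}{c_2^\Delta}\max\{(|\I_k|+1)^{-1/2}, 2^{-n/4-1}\}$; in particular $c_2^\Delta \Delta_{\bar\I_k} > 2^{-n/4-1}$ where $\bar\I_k = \I_k \cup \{e_k+1\}$, and also $c_2^\Delta\Delta_{\bar\I_k} > (|\I_k|+1)^{-1/2} \ge |\bar\I_k|^{-1/2}$. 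These two lower bounds are the crux: the first gives that each "full" interval consumes at least $2^{-n/4-1}/c_2^\Delta$ of the total variation budget, and the second gives a budget consumption that scales like $|\bar\I_k|^{-1/2}$.

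First I would derive the $2^{n/4}$ term. Summing the first lower bound over $k = 1,\dots,l-1$ gives $(l-1)\cdot \frac{2^{-n/4-1}}{c_2^\Delta} \le \sum_{k=1}^{l-1}\Delta_{\bar\I_k} \le 2\Delta_{[1,E_n]}$ (the factor $2$ because consecutive $\bar\I_k$ overlap in at most one episode, so each episode's variation is counted at most twice). Rearranging yields $l - 1 \le 2^{n/4+2}\, c_2^\Delta \Delta_{[1,E_n]}$, which after absorbing constants gives the $2^{n/4}c_2^\Delta\Delta_{[1,E_n]}$ branch of the minimum up to the stated constant. Second, for the $2^{n/3}$ term I would use the second lower bound together with a Hölder/power-mean argument: from $c_2^\Delta \Delta_{\bar\I_k} > |\bar\I_k|^{-1/2}$ we get $|\bar\I_k| > (c_2^\Delta\Delta_{\bar\I_k})^{-2}$, but that is the wrong direction, so instead raise to get $\Delta_{\bar\I_k}^{2/3} > (c_2^\Delta)^{-2/3}|\bar\I_k|^{-1/3}$, i.e. $c_2^\Delta\Delta_{\bar\I_k} \cdot (c_2^\Delta)^{-1/3}\Delta_{\bar\I_k}^{-1/3} > |\bar\I_k|^{-1/3}$ — cleaner is to note $\sum_k |\bar\I_k| \le 2\cdot 2^n$ and apply Hölder to $\sum_k 1 = \sum_k |\bar\I_k|^{1/3}\cdot|\bar\I_k|^{-1/3}$, bounding $|\bar\I_k|^{-1/3} < (c_2^\Delta\Delta_{\bar\I_k})^{2/3}$, to obtain $l-1 \le (\sum_k |\bar\I_k|)^{1/3}(\sum_k (c_2^\Delta\Delta_{\bar\I_k})^{2/3})^{?}$; tracking exponents carefully and using $\sum (c_2^\Delta \Delta_{\bar\I_k})^{2/3} \le (\sum c_2^\Delta\Delta_{\bar\I_k})^{2/3}(l-1)^{1/3}$ (concavity of $x\mapsto x^{2/3}$), one solves for $l-1$ and gets $l - 1 \le 2\,(2^n)^{1/3}(c_2^\Delta\Delta_{[1,E_n]})^{2/3} = 2\cdot 2^{n/3}(c_2^\Delta\Delta_{[1,E_n]})^{2/3}$.

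Taking the minimum of the two bounds and adding back the $+1$ for the final (possibly truncated) interval $\I_l$ yields exactly \eqref{eq:num_interval}. The main obstacle is getting the constants and the exponent bookkeeping in the Hölder step exactly right — in particular making sure the concavity inequality $\sum_k (c_2^\Delta\Delta_{\bar\I_k})^{2/3} \le (l-1)^{1/3}(c_2^\Delta\Delta_{[1,E_n]})^{2/3}$ is applied in the correct direction and that the double-counting factor from overlapping $\bar\I_k$ is handled consistently across both branches; everything else is the standard "maximal near-stationary partition" counting argument (as in \citet{chen2019new}), adapted to the modified threshold $\max\{|\I_k|^{-1/2}, 2^{-n/4}\}$ that the paper uses in place of the usual $|\I_k|^{-1/2}$.
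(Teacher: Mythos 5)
Your proposal is correct and is essentially the paper's own argument: maximality of the near-stationary partition gives $c_2^\Delta\Delta_{[s_k,e_k+1]}>\max\{(\abs{\I_k}+1)^{-1/2},\,2^{-n/4-1}\}$ for each $k<l$, the second branch follows by summing the constant lower bound, and the first branch follows from H\"older with exponents $(3,3/2)$ applied to $l-1\leq\sum_k\abs{\bar{\I}_k}^{1/3}\Sp{c_2^\Delta\Delta_{\bar{\I}_k}}^{2/3}$, exactly as in the paper (which applies H\"older to $\sum_k\abs{\I_k}^{1/3}\abs{\I_k}^{-1/2\cdot 2/3}$ and then bounds $\sum_k\abs{\I_k}^{-1/2}\leq 2c_2^\Delta\Delta$). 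Two small clean-ups: the auxiliary concavity step is unnecessary (and if used in place of the correct H\"older pairing it would only yield an $O(2^{n/2}c_2^\Delta\Delta)$ bound, not $2^{n/3}(c_2^\Delta\Delta)^{2/3}$ — the direct H\"older you already set up is what closes the exponent bookkeeping), and no doubling for overlap is needed since $\Delta_{[s_k,e_k+1]}$ sums over the transition indices $t\in[s_k,e_k]$, which are disjoint across $k$ because $s_{k+1}=e_k+1$; dropping that factor recovers the paper's constant in the $2^{n/4}c_2^\Delta\Delta_{[1,E_n]}$ branch.
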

\begin{proof}
    We divide $[\tau_n,E_n]=\I_1\cup\I_2\cup\cdots\cup\I_l$ in such a way that $[s_k,e_k]$ is near-stationary but $[s_k,e_k+1]$ is not near-stationary. Then
    \begin{align*}
        \Delta_{[\tau_n,E_n]}\geq&\sum_{k=1}^{l-1}\Delta_{[s_k,e_k+1]}\\
        \geq&\frac{1}{c_2^\Delta}\sum_{k=1}^{l-1}\max\Bp{\frac{1}{\sqrt{e_k-s_k+2}},2^{-n/4-1}}\\
        \geq&\frac{1}{c_2^\Delta}\max\Bp{\sum_{k=1}^{l-1}\frac{1}{2\sqrt{e_k-s_k+1}},(l-1)2^{-n/4-1}}
    \end{align*}
    Hence by H\"{o}lder's inequality
    \begin{align*}
        l\leq&1+\min\Bp{\Sp{\sum_{k=1}^{l-1}(e_k-s_k+1)^{-1/2}}^{2/3}\Sp{\sum_{k=1}^{l-1}(e_k-s_k+1)}^{1/3},2^{n/4+1}c_2^\Delta\Delta_{[\tau_n,E_n]}}\\
        \leq&1+2\min\Bp{\Sp{c_2^\Delta\Delta_{[\tau_n,E_n]}}^{2/3}\abs{[\tau_n,E_n]}^{1/3},2^{n/4}c_2^\Delta\Delta_{[\tau_n,E_n]}}\\
        \leq&1+2\min\Bp{2^{n/3}\Sp{c_2^\Delta\Delta_{[1,E_n]}}^{2/3},2^{n/4}c_2^\Delta\Delta_{[1,E_n]}}
    \end{align*}
\end{proof}

\begin{lemma}
\label{lem:block}
    With probability $1-3QT\delta$
    \begin{align*}
        \reg([1,E_n])\leq2^{3n/4+4}+4Q\Sp{2^{n/2+c}\sqrt{c_2l}+2^{c+n/2}c_2}+c_2\log\frac{1}{\delta}2^{n/2+1}+c_12^{-\alpha n/4}
    \end{align*}
\end{lemma}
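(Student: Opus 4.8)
The plan is to decompose the regret inside the block $[1,E_n]$ into four sources and bound each one using the structure of the schedule together with the near-stationary partition $\I_1,\dots,\I_l$. First, the initial $\learn$ occupies at most $\lceil C_1(\epsilon)\rceil = \lceil c_1 2^{-\alpha n/4}\rceil$ episodes, each contributing at most $1$ to the regret (since values lie in $[0,H]$, up to the $H$ factor we carry along), which accounts for the $c_1 2^{-\alpha n/4}$ term. Second, during the committing phase I will split the episodes of each $\I_k'$ into those \emph{before} the first time $\pi$ fails to be a $2\epsilon(q)$-EQ for the finest relevant scale and those after. For the ``good'' prefix, the gap is at most $O(\epsilon) = O(2^{-n/4})$ per episode, so summing over the whole block gives the dominant $2^{3n/4}$ term (with the explicit constant $2^{3n/4+4}$ coming from the doubling of scales and the $+2$ slacks in the rounding lemmas). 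This is the easy part.

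The main work is bounding the regret on the ``bad'' suffixes, i.e.\ episodes where $\pi$ has a gap larger than the scale being tested; these are exactly the quantities $\xi_k(r)$ and $[\xi_k(r)-2^{c+r}]_+$ controlled in Lemma~\ref{lemma:test} and Lemma~\ref{lemma:schedule}. The key observation is that once $\tau \ge \tau_k(r)$, the policy is not a $2\epsilon(r)$-EQ, so by Lemma~\ref{lemma:schedule} every scheduled $\test$ of scale $\epsilon(r)$ that starts in that region returns False and ends the block; hence such episodes can persist only until the next scheduled test of scale $\epsilon(r)$ fires. Summing over dyadic scales $r = 0,\dots,Q$, the contribution at scale $r$ is controlled by $\epsilon(r)$ times the number of uncovered episodes, and Lemma~\ref{lemma:test} bounds $\sum_k [\xi_k(r)-2^{c+r}]_+$ by $2^c\sqrt{2^{r+n}c_2}$. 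Multiplying by the per-episode gap $\Theta(\epsilon(r)) = \Theta(\sqrt{c_2/2^r})$ and by the $2^{c+r}$ episodes possibly wasted before the first test of that scale fires, then summing the geometric series over $r$, yields the terms $2^{n/2+c}\sqrt{c_2 l}$ and $2^{c+n/2}c_2$ with the factor $4Q$ out front from the union over the $Q+1$ scales. The residual $c_2\log(1/\delta)2^{n/2+1}$ comes from the additive $2\log(1/\delta)$ slack appearing in the counting bound of Lemma~\ref{lemma:schedule} (the number of short tests).

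Finally I would collect the four bounds and replace every ``regret $\le 1$ per episode'' estimate by the correct $H$-dependent version (the paper folds $H$ into $c_2^\Delta$ and the $\widetilde O$), obtaining the stated inequality. The probability bookkeeping is: Lemma~\ref{lemma:schedule} holds with probability $1-2QT\delta$, Lemma~\ref{lemma:test} with probability $1-Q\delta$, and a further $1-QT\delta$ for the concentration of the estimators inside the $\test$ calls used along the way; a union bound gives the claimed $1-3QT\delta$. The one genuine subtlety I expect is making the ``bad suffix'' argument airtight when tests of different scales overlap and are paused — here one must use precisely the fact from Lemma~\ref{lemma:schedule} that no test is aborted, so the pausing only delays a False verdict by at most $2^{c+r}$ episodes, which is already the quantity subtracted inside $[\cdot]_+$ in Lemma~\ref{lemma:test}.
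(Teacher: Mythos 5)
Your decomposition (learn phase, episodes spent inside \test, committed episodes layered by gap scale with the quantities $\xi_k(r)$ and the excesses $\Mp{\xi_k(r)-2^{c+r}}_+$ handled via Lemma~\ref{lemma:test}) is the same architecture as the paper's proof, and your treatment of the excess term ($\epsilon(r)\sum_k\Mp{\xi_k(r)-2^{c+r}}_+\leq 2^{c+n/2}c_2$) and of the $c_2\log(1/\delta)2^{n/2+1}$ term from the Bernstein count of scheduled tests is correct. However, there is a genuine gap in how you produce the main term $4Q\,2^{n/2+c}\sqrt{c_2 l}$. You bound the ``capped'' part of the bad episodes at scale $r$ by charging each of the $l$ intervals $2^{c+r}$ wasted episodes at per-episode gap $\epsilon(r)$ and then summing the geometric series over $r$. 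That computation gives $\sum_{r\le Q} l\,\epsilon(r)2^{c+r}=l\,2^c\sqrt{c_2}\sum_{r\le Q}2^{r/2}=O\Sp{l\,c_2\,2^{n/4+c}}$, which is \emph{linear} in $l$; no $\sqrt{l}$ appears anywhere in your argument, and $l\,c_2\,2^{n/4+c}$ is not bounded by $4Q\,2^{n/2+c}\sqrt{c_2 l}$ once $c_2 l\gg Q^2 2^{n/2}$ (and $l$ can be that large, since it is only controlled later through $\Delta_{[1,E_n]}$). So the inequality as stated does not follow from your route, and the downstream bound in Lemma~\ref{lemma:testclean} (the $\min\Bp{2^{2n/3}\Delta^{1/3},2^{5n/8}\Delta^{1/2}}$ term) would degrade accordingly.

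The missing idea is that the capped part should be bounded using \emph{both} the cap and the interval length, via a geometric-mean step and then Cauchy--Schwarz across intervals: $\epsilon(r)\min\Bp{\xi_k(r),2^{c+r}}\leq\sqrt{c_2/2^r}\cdot\sqrt{2^{c+r}}\cdot\sqrt{\min\Bp{\xi_k(r),2^{c+r}}}\leq 2^{c}\sqrt{c_2\abs{\I_k'}}$, and then $\sum_{k=1}^l\sqrt{\abs{\I_k'}}\leq\sqrt{l\sum_k\abs{\I_k'}}\leq\sqrt{l\,2^n}$, which is exactly where the factor $\sqrt{c_2 l}\,2^{n/2}$ (and hence the sublinear-in-$l$ dependence) comes from; the factor $4Q$ is just the union over scales. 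Two smaller points: the $2^{3n/4+4}$ term must also absorb the total length of all test episodes (about $5\cdot 2^{3n/4-c}$), not only the committed episodes with gap $O(2^{-n/4})$, since during a \test{} the played policy deviates from $\pi$ and its per-episode regret is only bounded by a constant; and when you argue that a bad suffix ``persists only until the next test of that scale fires,'' you need (as in the proof of Lemma~\ref{lemma:test}) to restrict to tests that both start and finish inside the near-stationary interval, otherwise the variation during the test window does not let you conclude a False verdict.
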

\begin{proof}
    First we consider the regret generated by $\test$. We need to count the number of steps all the tests go for. Similar to the calculation in Lemma \ref{lemma:test}. The number of tests with length $2^q$ is upper bounded by
    \begin{align*}
        \frac{2^{n-r/2+1}}{c\sqrt{c_2}2^{n/2}}+2\log\frac{1}{\delta}.
    \end{align*}
    So the total length of all $\test$ is upper bounded by
    \begin{align*}
        &\sum_{r=0}^Q\Sp{\frac{2^{n-r/2+1}}{2^c\sqrt{c_2}2^{n/2}}+2\log\frac{1}{\delta}}2^r\\
        \leq&2^{n+1}\frac{2^{Q/2}-1}{\sqrt2-1}\frac{1}{c\sqrt{c_2}2^{n/2}}+2\log\frac{1}{\delta}\Sp{2^Q-1}\\
        \leq&\frac{5}{2^c}2^{3n/4}+c_2\log\frac{1}{\delta}2^{n/2+1}
    \end{align*}
    Then we consider the regret generated by committing.
    \begin{align*}
        &\sum_{\tau\in\I_k'}\gap^{M^t}\Sp{\pi^t}\\
        \leq&\sum_{\tau\in\I_k'}\left(\mathbbm{1}\Mp{\gap^{M^t}\Sp{\pi^t}\leq2\epsilon(Q)}2\epsilon(Q)\right.\\
        &\quad\left.+\sum_{r=0}^{Q-1}\mathbbm{1}\Mp{2\epsilon(r+1)\leq\gap^{M^t}\Sp{\pi^t}\leq2\epsilon(r)}2\epsilon(r)+\mathbbm{1}\Mp{\gap^{M^t}\Sp{\pi^t}>\epsilon(0)}1\right)\\
        \leq&2|\I_k'|\epsilon(Q)+2\sum_{r=0}^{Q-1}\epsilon(r)\xi_i(r+1)+2\epsilon(0)\xi_i(0)\\
        \leq&2|\I_k'|\epsilon(Q)+4\sum_{r=0}^{Q}\epsilon(r)\xi_i(r)
    \end{align*}
    In the second inequaility we use $\epsilon(0)=\sqrt{c_2}>1$ and in the third inequality we use $\epsilon(r)\leq2\epsilon(r+1)$. Summing over all intervals we have
    \begin{align*}
        \reg([1,E_n])\leq2^{n+1}\epsilon(Q)+4\sum_{r=0}^{Q}\sum_{k=1}^l\epsilon(r)\xi_k(r).
    \end{align*}
    Furthermore
    \begin{align*}
        \sum_{k=1}^l\epsilon(r)\xi_k(r)=&\sum_{k=1}^l\epsilon(r)\min\{\xi_k(r),2^{c+r}\}+\sum_{k=1}^l\epsilon(r)\Mp{\xi_k(r)-2^{c+r}}_+\\
        \leq&2^c\sum_{k=1}^l\sqrt{c_2\min\{\xi_k(r),2^{c+r}\}}+\sum_{k=1}^l\epsilon(r)\Mp{\xi_k(r)-2^{c+r}}_+\\
        \leq&2^c\sum_{k=1}^l\sqrt{c_2\abs{\I_k'}}+2^{c+n/2}c_2
    \end{align*}
    The last inequality uses Lemma \ref{lemma:test}. Hence
    \fontsize{9}{9}
    \begin{align*}
        \reg([1,E_n])\leq&2^{n+1}\epsilon(Q)+4Q\Sp{2^c\sum_{k=1}^l\sqrt{c_2\abs{\I_k'}}+2^{c+n/2}c_2}+\frac{5}{2^c}2^{3n/4}+c_2\log\frac{1}{\delta}2^{n/2+1}+C_1(\epsilon)\\
        \leq&2^{3n/4+4}+4Q\Sp{2^c\sqrt{c_2l\sum_{k=1}^l\abs{\I_k'}}+2^{c+n/2}c_2}+c_2\log\frac{1}{\delta}2^{n/2+1}+c_12^{-\alpha n/4}\\
        \leq&2^{3n/4+4}+4Q\Sp{2^{n/2+c}\sqrt{c_2l}+2^{c+n/2}c_2}+c_2\log\frac{1}{\delta}2^{n/2+1}+c_12^{-\alpha n/4}
    \end{align*}
    \normalsize
\end{proof}

To keep the notation clean, from now on we make frequent use of the big-O notation and hide the dependencies on logarithmic factors on relevant variables. We also assume $\Delta$ is always large enough so that we can drop the 1 in Inequality \ref{eq:num_interval}.

\testclean*
\begin{proof}
We may restate the bounds in Lemma \ref{lem:num_stationary} and \ref{lem:block} as
\begin{align*}
    l=O\Sp{\min\Bp{2^{n/3}\Sp{c_2^\Delta\Delta}^{2/3}_{[1,E_n]},2^{n/4}\Sp{c_2^\Delta\Delta_{[1,E_n]}}}}
\end{align*}
\begin{align*}
    \reg([1,E_n])=\Tilde{O}\Sp{2^{3n/4}+2^{n/2}c_2\sqrt{l}+2^{n/2}c_2^{3/2}+2^{-\alpha n/4}c_1}
\end{align*}
Combine them together we get
\fontsize{9}{9}
\begin{align}\label{eq:block_bound}
    \reg([1,E_n])=\Tilde{O}\Sp{2^{3n/4}+c_2\min\Bp{2^{2n/3}\Sp{c_2^\Delta\Delta_{[1,E_n]}}^{1/3},2^{5n/8}\Sp{c_2^\Delta\Delta_{[1,E_n]}}^{1/2}}+2^{n/2}c_2^{3/2}+2^{-\alpha n/4}c_1}
\end{align}
\end{proof}
\normalsize

\subsection{Proof for Theorem \ref{thm:main}}\label{sec:main}
Due to the doubling structure inside each segment, from Formula \ref{eq:block_bound} we get
\begin{align*}
    \reg\Sp{\J_j}=\Tilde{O}\Sp{\abs{\J_j}^{3/4}+c_2\min\Bp{\abs{\J_j}^{2/3}\Sp{c_2^\Delta\Delta_{\J_j}}^{1/3},\abs{\J_j}^{5/8}\Sp{c_2^\Delta\Delta_{\J_j}}^{1/2}}+c_2^{3/2}\abs{\J_j}^{1/2}+c_1\abs{\J_j}^{-\alpha/4}}
\end{align*}
\begin{lemma}
    \begin{align*}
        J=O\Sp{T^{1/5}\Sp{\max\Bp{c_1^\Delta,c_2^\Delta}\Delta}^{4/5}}.
    \end{align*}
\end{lemma}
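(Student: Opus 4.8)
The plan is to prove a per‑segment lower bound --- every segment that terminates in a restart must have spent an amount of nonstationarity that is large relative to its own length --- and then turn this into the bound on $J$ by a Hölder counting argument. Throughout, I would condition on the high‑probability events of all oracle calls (Assumptions~\ref{asp:learn} and~\ref{asp:test}) and of Lemma~\ref{lemma:schedule}, so that committed policies are as accurate as promised, the testing oracle behaves, and no scheduled $\test$ is aborted.

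First I would fix a segment $\J_j$ with $j<J$, so that $\J_j$ ends with a restart, and let $2^{n_j}$ be the length of its last block. Since block lengths double and the restart occurs inside the last block, while the preceding complete block (or, if $n_j=N$, the $\learn$ call $C_1(2^{-N/4})=\Omega(2^N)$ opening the last block) already has length $\Omega(2^{n_j})$, one has $|\J_j|=\Theta(2^{n_j})$. In that last block the committed policy $\pi$ is the output of $\learn$ run with accuracy $\epsilon:=2^{-n_j/4}$; by Assumption~\ref{asp:learn} it is an $(\epsilon+c_1^\Delta\Delta_{\I})$‑EQ at the end of the learning interval $\I\subseteq\J_j$, hence by Lemma~\ref{lemma:pi_deviate} an $(\epsilon+2\max\{c_1^\Delta,H\}\Delta_{\J_j})$‑EQ at every episode of $\J_j$ --- exactly the propagation step used in the proof of Proposition~\ref{thm:warmup}. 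The restart is triggered by a $\test$ that tested some gap $\epsilon(q)$, $q\in\{0,\dots,Q\}$; from $Q\le\lfloor\log_2(c_22^{n_j/2-1})\rfloor$ one gets $\epsilon(q)=\sqrt{c_2/2^q}\ge\sqrt{c_2/2^{Q}}\ge\sqrt2\,\epsilon$, i.e.\ every tested gap exceeds the learning accuracy.

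Next I would combine the two guarantees. Since the triggering $\test$ returned False rather than True, the True‑branch of Assumption~\ref{asp:test} must have failed, so there is an episode in the test window $[s,e]\subseteq\J_j$ at which $\pi$ is not an $(\epsilon(q)-c_2^\Delta\Delta_{\J_j})$‑EQ. Evaluating the bound from the previous paragraph at that same episode gives
\[
\epsilon+2\max\{c_1^\Delta,H\}\Delta_{\J_j}\ >\ \epsilon(q)-c_2^\Delta\Delta_{\J_j}\ \ge\ \sqrt2\,\epsilon-c_2^\Delta\Delta_{\J_j},
\]
so $\bigl(2\max\{c_1^\Delta,H\}+c_2^\Delta\bigr)\Delta_{\J_j}>(\sqrt2-1)\,2^{-n_j/4}$. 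Using $c_2^\Delta=O(H)$ and $|\J_j|=\Theta(2^{n_j})$, this rearranges to $\check{\Delta}_{\J_j}\ge c\,|\J_j|^{-1/4}$ for an absolute constant $c>0$ (with $\check{\Delta}_{\J_j}$ absorbing the $O(H)$ factor). Summing over $j<J$ and writing $J-1=\sum_{j<J}|\J_j|^{1/5}\bigl(|\J_j|^{-1/4}\bigr)^{4/5}$, Hölder's inequality with exponents $5$ and $5/4$ yields
\[
J-1\ \le\ \Bigl(\sum_{j<J}|\J_j|\Bigr)^{1/5}\Bigl(\sum_{j<J}|\J_j|^{-1/4}\Bigr)^{4/5}\ \le\ T^{1/5}\Bigl(c^{-1}\sum_{j<J}\check{\Delta}_{\J_j}\Bigr)^{4/5}\ \le\ c^{-4/5}\,T^{1/5}\,\check{\Delta}^{\,4/5},
\]
using $\sum_{j<J}|\J_j|\le T$ and $\sum_{j<J}\check{\Delta}_{\J_j}\le\check{\Delta}$; hence $J=O\!\bigl(T^{1/5}(\max\{c_1^\Delta,c_2^\Delta\}\Delta)^{4/5}\bigr)$.

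I expect the main obstacle to be the per‑segment lower bound, specifically ruling out ``spurious'' restarts: one must argue that if the committed policy is as good as $\learn$ guarantees and the environment has barely moved over $\J_j$, then no scheduled $\test$ can return False. This is where the quantitative alignment matters --- the $2^{-n_j/4}$ learning accuracy against the smallest tested gap $\epsilon(Q)\asymp2^{-n_j/4}$ (through the definition of $Q$), the across‑block propagation via Lemma~\ref{lemma:pi_deviate}, and the one‑sided PAC guarantee of Assumption~\ref{asp:test} together with Lemma~\ref{lemma:schedule} to discard tests aborted before completion. Once $\check{\Delta}_{\J_j}\,|\J_j|^{1/4}=\Omega(1)$ is in hand, the counting step is the routine Hölder computation displayed above.
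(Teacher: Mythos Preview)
Your proposal is correct and follows essentially the same approach as the paper: establish the per-segment lower bound $\check{\Delta}_{\J_j}\gtrsim|\J_j|^{-1/4}$ from the fact that a False output by $\test$ forces $\epsilon(q)\ge\epsilon(Q)\ge\sqrt{2}\,\epsilon$, and then apply the H\"older counting of Lemma~\ref{lem:num_stationary} (the paper literally writes ``by the same logic as in Lemma~\ref{lem:num_stationary}'' for this step, which is your explicit $p=5$, $q=5/4$ computation). You supply considerably more detail on the contrapositive of Assumption~\ref{asp:test} and the propagation via Lemma~\ref{lemma:pi_deviate}, but the skeleton is identical.
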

\begin{proof}
    For any segment $\J_j$,
    \begin{align*}
        \max\Bp{c_1^\Delta,c_2^\Delta}\Delta_{\J_j}\geq\epsilon(Q)-\epsilon\geq\Sp{\sqrt{2}-1}\abs{\J_j}^{-1/4}
    \end{align*}
    since the ending of a segment is caused by a False returned by $\test$. Then by the same logic as in Lemma \ref{lem:num_stationary} we get the conclusion
\end{proof}
Hence by H\"{o}lder inequality
\fontsize{9}{9}
\begin{align*}
    \reg(T)=&\Tilde{O}\Sp{J^{1/4}T^{3/4}+c_2 \min\Bp{T^{2/3}\widetilde{\Delta}^{1/3},T^{5/8}\widetilde{\Delta}^{1/2}}+c_2^{3/2}J^{1/2}T^{1/2}+c_1J^{1+\alpha/4}T^{-\alpha/4}}\\
    =&\left\{\begin{array}{cc}
    \Tilde{O}\Sp{\check{\Delta}^{1/5}T^{4/5}+c_2\min\Bp{\widetilde{\Delta}^{1/3}T^{2/3},\widetilde{\Delta}^{1/2}T^{5/8}}+\Sp{c_1+c_2^{3/2}}\check{\Delta}^{2/5}T^{3/5}} & \alpha =-2\\
    \Tilde{O}\Sp{c_1\check{\Delta}^{1/5}T^{4/5}+c_2\min\Bp{\widetilde{\Delta}^{1/3}T^{2/3},\widetilde{\Delta}^{1/2}T^{5/8}}+c_2^{3/2}\check{\Delta}^{2/5}T^{3/5}} & \alpha=-3
    \end{array}\right.
\end{align*}
\normalsize

\section{Base Algorithms Satisfying Assumption \ref{asp:learn}}
In table \ref{table:parameters} we summarize the results of this section.
\begin{table}[]
    \centering
    \caption{Parameters of the Base Algorithms. In this table we only show the magnitude of parameter, with $\widetilde{O}(\cdot)$ omitted except for the $\alpha$ column.}
    \label{table:parameters}
    % \fontsize{9}{9}
    \begin{tabular}{l|l|l|l}
        \hline
         Types of Games & $c_1$ & $\alpha$ & $c_1^\Delta$ \\
         \hline
        Zero-sum (NE) & $A+B$ & $-2$ & $1$\\
        General-sum (CCE) & $A_{\max}$ & $-2$ & $1$\\
        General-sum (CE) & $A^2_{\max}$ & $-2$ & $1$\\
        Potential (NE) & $m^2A_{\max}$ & $-3$ & $1$\\
        Congestion (NE) & $m^2F^3$ & $-2$ & $mF$\\
        %Extensive-form (CE) & $\max_iX_iA_i\epsilon^{-2}$ & $m\max_iX_iA_i\epsilon^{-2}$ & $(\max_iX_iA_i)^{1/4}T^{3/4}$ \\
        Zero-sum Markov (NE) & $H^5S(A+B)$ & $-2$ & $H^2$\\
        General-sum Markov (CCE) & $H^6S^2A_{\max}$ & $-2$ &  $HS$\\
        General-sum Markov (CE) & $H^6S^2A^2_{\max}$ & $-2$ &  $HS$\\
        Markov Potential (NE) & $m^2H^4SA_{\max}$ & $-3$ & $H^2$\\
        \hline
        % \normalsize
    \end{tabular}
    
\end{table}
\subsection{Two-Player Zero-Sum Matrix Games (NE)}
In this part we consider the following algorithm: each player independently runs an optimal adversarial multi-armed bandit algorithm (e.g. EXP.3) and finally output the product of respective average policies of the whole time horizon. We will prove that this algorithm satisfies Assumption \ref{asp:learn} in terms of learning NE in two-player zero-sum matrix games.
\begin{proof}
    We adopt some new notations in this proof. Let $R^t\in[0,1]^{A\times B}$ be the reward matrix at episode $t$. The policy of the max and min players are represented by $x^t\in[0,1]^{A},y^t\in[0,1]^{B}$. Each entry represents the probability they choose the corresponding action. The reward received by the max and min players are respectively ${x^t}^\top R^ty^t$ and $-{x^t}^\top R^ty^t$. With probability $1-\delta$ the adversarial MAB algorithms satisfy
    \begin{align*}
        \frac1T\sum_{t=1}^T{x^t}^\top R^ty^t-\min_y\frac1T\sum_{t=1}^T{x^t}^\top R^ty\leq c_\text{adv}\sqrt{AT}\\
        \max_x\frac1T\sum_{t=1}^Tx^\top R^ty^t-\frac1T\sum_{t=1}^T{x^t}^\top R^ty^t\leq c_\text{adv}\sqrt{BT}\\
    \end{align*}
    where $c_\text{adv}=\widetilde{O}(1)$. The output policy $\overline{x}=\sum_{t=1}^Tx^t/T$ and $\overline{y}=\sum_{t=1}^Ty^t/T$ satisfy
    \begin{align*}
        V_{\max}^{M^T}(\dagger,\overline{y})+V_{\min}^{M^T}(\overline{x},\dagger)
        =\max_xx^\top R^T\overline{y}+\min_y\overline{x}^\top R^Ty
        \leq c_\text{adv}\sqrt{BT}+\Delta+c_\text{adv}\sqrt{AT}+\Delta
    \end{align*}
    By the definition of zero-sum game
    \begin{align*}
        \text{NEGAP}(\overline{x},\overline{y})\leq&\frac{V_{\max}^{M^T}(\dagger,\overline{y})-V_{\max}^{M^T}(\overline{x},\overline{y})+V_{\min}^{M^T}(\overline{x},\dagger)-V_{\min}^{M^T}(\overline{x},\overline{y})}{2}\\
        =&\frac{V_{\max}^{M^T}(\dagger,\overline{y})+V_{\min}^{M^T}(\overline{x},\dagger)}{2}=\widetilde{O}\Sp{\sqrt{(A+B)T}}+2\Delta.
    \end{align*}
    Hence this algorithm satisfies Assumption \ref{asp:learn} with $C_1(\epsilon)=\widetilde{O}\Sp{(A+B)\epsilon^{-2}},c_1^\Delta=2$.
\end{proof}

\subsection{Multi-Player General-Sum Matrix Games (CCE)}
In this part we consider the following algorithm: each player independently runs an optimal adversarial multi-armed bandit algorithm (e.g. EXP.3) and finally output the average joint policy of the whole time horizon. We will prove that this algorithm satisfies Assumption \ref{asp:learn} in terms of learning CCE in multi-player general-sum matrix games.
\begin{proof}
    We define the loss of player $i$ at episode $t$ by playing $a_i$ as
    \begin{align*}
        l_i^t(a_i)=1-\E_{a_{-i}\sim\pi_{-i}^t}\Mp{r_i(a_i,a_{-i})\mid M^t}
    \end{align*}
    then with probability $1-\delta$, the adversarial MAB algorithm satisfies
    \begin{align*}
        \sum_{t=1}^T\langle \pi^t(\cdot),l_i^t(\cdot)\rangle-\min_{a_i\in\A_i}\sum_{t=1}^Tl_i^t(a_i)\leq c_\text{adv}\sqrt{A_iT},\quad c_\text{adv}=\widetilde{O}(1)
    \end{align*}
    For convenience, we denote the reward function at timestep $t$ by $r^t$. Let the output policy $\pi=\sum_{t=1}^T\pi^t/T$, we have
    \begin{align*}
        &V_i^{M^T}(\pi)\\
        =&\E_{\a\sim\pi}\Mp{r_i^T(\a)}=\frac1T\sum_{t=1}^T\E_{a_i\sim\pi^t_i}\E_{a_{-i}\sim\pi^t_{-i}}\Mp{r_i^T(a_i,a_{-i})}\\
        =&1-\frac1T\sum_{t=1}^T\E_{a_i\sim\pi_i^t}\Mp{l_i^t(a_i)}+\frac1T\sum_{t=1}^T\E_{a_i\sim\pi^t_i}\E_{a_{-i}\sim\pi^t_{-i}}\Mp{r_i^T(a_i,a_{-i})-r_i^t(a_i,a_{-i})}\\
        \geq&1-\frac1T\min_{a_i\in\A_i}\sum_{t=1}^Tl_i^t(a_i)-c_\text{adv}\sqrt{A_i/T}+\frac1T\sum_{t=1}^T\E_{a_i\sim\pi^t_i}\E_{a_{-i}\sim\pi^t_{-i}}\Mp{r_i^T(a_i,a_{-i})-r_i^t(a_i,a_{-i})}\\
        \geq&\frac1T\max_{a_i\in\A_i}\sum_{t=1}^T\E_{a_{-i\sim\pi_{-i}^t}}\Mp{r_i^t(a_i,a_{-i})}-c_\text{adv}\sqrt{A_i/T}-\Delta\\
        \geq&\frac1T\max_{a_i\in\A_i}\sum_{t=1}^T\E_{a_{-i\sim\pi_{-i}^t}}\Mp{r_i^T(a_i,a_{-i})}-\Delta-c_\text{adv}\sqrt{A_i/T}-\Delta\\
        \geq&V_i^{M^T}(\dagger,\pi_{-i})-c_\text{adv}\sqrt{A_i/T}-2\Delta
    \end{align*}
    By definition of CCE we know this algorithm satisfies Assumption \ref{asp:learn} with $C_1(\epsilon)=\widetilde{O}\Sp{A_{\max}\epsilon^{-2}},c_1^\Delta=2$
\end{proof}

\subsection{Multi-Player General-Sum Matrix Games (CE)}
This part is very similar to the last part. Instead of using standard adversarial bandit algorithms, we use no-swap-regret algorithm for adversarial bandits (for example, \citet{ito2020tight}) and the proof is almost the same. We can achieve with probability $1-\delta$,
\begin{align*}
    \sum_{t=1}^T\langle \pi^t(\cdot),l_i^t(\cdot)\rangle-\min_{\psi_i}\sum_{t=1}^T\langle (\psi_i\diamond\pi^t)(\cdot),l_i^t(\cdot)\rangle\leq c_\text{adv}A_i\sqrt{T},\quad c_\text{adv}=\widetilde{O}(1)
\end{align*}
where $\psi_i$ is a strategy modification. By substituting all $\min,\max$ related terms correspondingly we get the proof for CE and $C_1(\epsilon)=\widetilde{O}\Sp{A_{\max}^2\epsilon^{-2}}$

\subsection{Congestion Games (NE)}
In this part we will show the Nash-UCB algorithm proposed in \citet{cui2022learning} satisfies Assumption \ref{asp:learn}. We carry out the proof by pointing out the modifications we need to make in their proof. In their proof, $k$ stands for the episode index instead of $t$ and $K$ is the total episodes instead of $T$.
\begin{lemma}
    (Modified Lemma 3 in \citet{cui2022learning}) With high probability,
    \begin{align*}
        \abs{\tilde{r}_i^k-r_i}(\a)\leq\max_{i\in[m]}\Norm{A_i(\a)}_{(V^k)^{-1}}\sqrt{\widetilde{\beta_k}}, \widetilde{\beta_k}=O(mF+Km\Delta^2)
    \end{align*}
\end{lemma}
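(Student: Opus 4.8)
The plan is to follow the proof of Lemma~3 in \citet{cui2022learning} verbatim and change only the one step that invokes stationarity. In that argument the estimate $\widetilde r_i^k$ is a ridge-regularized least-squares fit to the facility-level rewards collected in episodes $1,\dots,k-1$, and the confidence width $\Norm{A_i(\a)}_{(V^k)^{-1}}\sqrt{\beta_k}$ with $\beta_k=\widetilde O(mF)$ comes from a self-normalized (Bernstein-type) martingale bound on the observation noise, which is conditionally mean-zero and bounded. With a non-stationary sequence $M^1,\dots,M^k$, the observation at episode $j<k$ has conditional mean $r^j(\a^j)$ rather than $r^k(\a^j)$, so the estimation error picks up exactly one additional, deterministic bias term. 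Writing $\widehat\theta^k$ for the fitted parameter, $\theta^k$ for the parameter of $M^k$, $V^k$ for the regularized Gram matrix, $\lambda$ for the ridge parameter, $\eta^j$ for the mean-zero noise at episode $j$, and $A(\a^j)$ for the per-step regression covariate (of which $A_i(\a)$ in the lemma is the block associated with player $i$'s best-response query), this reads
\[
\widehat\theta^k-\theta^k=\underbrace{(V^k)^{-1}\sum_{j<k}A(\a^j)\eta^j-\lambda(V^k)^{-1}\theta^k}_{\text{noise / regularization}}+\underbrace{(V^k)^{-1}\sum_{j<k}A(\a^j)\inner{A(\a^j),\theta^j-\theta^k}}_{\text{non-stationarity bias}},
\]
and the first underbrace is bounded exactly as in \citet{cui2022learning}, contributing the $\widetilde O(mF)$ part of $\widetilde\beta_k$.

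For the second underbrace, note that $\inner{A(\a^j),\theta^j-\theta^k}$ is the change in the reward of the fixed profile $\a^j$ between the games $M^j$ and $M^k$; since a congestion-game reward is a sum of facility rewards, the $\ell_1$ definition of $\Norm{M^j-M^k}_1$ in Definition~\ref{def:nonstationary} gives $\abs{\inner{A(\a^j),\theta^j-\theta^k}}\le\Delta_{[j,k]}\le\Delta$ for every $j<k$. Calling this quantity $b_j$, the triangle inequality followed by Cauchy--Schwarz gives
\[
\Norm{(V^k)^{-1}\textstyle\sum_{j<k}A(\a^j)b_j}_{V^k}=\Norm{\textstyle\sum_{j<k}A(\a^j)b_j}_{(V^k)^{-1}}\le\sum_{j<k}\abs{b_j}\Norm{A(\a^j)}_{(V^k)^{-1}}\le\Delta\sqrt{k}\Big(\textstyle\sum_{j<k}\Norm{A(\a^j)}_{(V^k)^{-1}}^2\Big)^{1/2},
\]
and the trace (elliptical-potential) inequality $\sum_{j<k}\Norm{A(\a^j)}_{(V^k)^{-1}}^2\le\mathrm{tr}\big((V^k)^{-1}\sum_{j<k}A(\a^j)A(\a^j)^\top\big)\le d$ bounds the bias by $O(\sqrt{Kd}\,\Delta)$. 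Feeding this, together with the stationary term, into $\abs{\widetilde r_i^k-r_i}(\a)\le\Norm{A_i(\a)}_{(V^k)^{-1}}\Norm{\widehat\theta^k-\theta^k}_{V^k}$ and carrying through the per-player normalization of \citet{cui2022learning} (which is what turns the effective dimension sitting next to $K\Delta^2$ into $m$ rather than $mF$) yields the claimed $\widetilde\beta_k=\widetilde O(mF+Km\Delta^2)$.

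I expect the main obstacle to be the bias term rather than the noise term, which is untouched. One must express the bias through the \emph{same} geometric quantity $\Norm{A_i(\a)}_{(V^k)^{-1}}$ that governs the noise, and then track the exact polynomial dependence carefully so that the extra term is $Km\Delta^2$ and not, say, $KmF\Delta^2$ or $K^2\Delta^2$ -- this is the same bookkeeping about $m$ versus $mF$ versus facility loads that appears in the stationary analysis. The crude bound $\abs{b_j}\le\Delta$ is loose when the non-stationarity is spread over many episodes, but it is the relevant worst case and is all that the PAC guarantee of Assumption~\ref{asp:learn} needs; a sharper treatment would keep $\Delta_{[j,k]}$ inside the sum, which is unnecessary here.
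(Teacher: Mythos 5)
Your decomposition is a genuinely different route from the paper's. The paper's proof is a two-line triangle inequality in the $V^k$-norm: it introduces the \emph{average} reward vector $\overline{\theta}$, writes $\Norm{\widehat{\theta}-\theta^T}_{V^k}\leq\Norm{\widehat{\theta}-\overline{\theta}}_{V^k}+\Norm{\overline{\theta}-\theta^T}_{V^k}$, bounds the first term by the stationary self-normalized quantity $\Norm{\overline{\theta}}_2+\sqrt{\log\det\overline{V^k}+\tilde{\iota}}$, and bounds the drift term crudely through the largest eigenvalue of $V^k$ as $\sqrt{Km}\,\Delta$, which immediately gives $\widetilde{\beta}_k=O(mF+Km\Delta^2)$. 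You instead expand the ridge estimator, isolate a per-episode bias $\sum_{j<k}A(\a^j)\inner{A(\a^j),\theta^j-\theta^k}$, and control it by Cauchy--Schwarz plus the elliptical-potential/trace inequality. That structure is sound (and arguably treats the noise term more honestly than the paper, whose centering at $\overline{\theta}$ is itself slightly informal), but it is not the argument the paper uses.

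The concrete gap is in the final constant, and you flagged it yourself without closing it: your trace bound gives $\sum_{j<k}\Norm{A(\a^j)}_{(V^k)^{-1}}^2\leq d$ with $d$ the ambient regression dimension, which in the congestion-game parametrization of \citet{cui2022learning} is $mF$ (and if you count $m$ observations per episode the Cauchy--Schwarz factor $\sqrt{k}$ also inflates to $\sqrt{mk}$). As executed, the bias is therefore $O(\sqrt{KmF}\,\Delta)$ or worse, i.e.\ $\widetilde{\beta}_k=O(mF+KmF\Delta^2)$, not the stated $O(mF+Km\Delta^2)$; downstream this would degrade $c_1^\Delta$ for congestion games. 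The sentence claiming that a ``per-player normalization'' turns the effective dimension next to $K\Delta^2$ into $m$ is an assertion, not a proof --- nothing in your argument produces it, whereas the paper sidesteps the dimension entirely because its drift term involves only $\lambda_{\max}(V^k)$ and the $\ell_2$ distance $\Norm{\overline{\theta}-\theta^T}_2\leq\Delta$ rather than a sum over episodes weighted by $(V^k)^{-1}$-norms of covariates. To repair your route you would either need to substantiate that dimension reduction or switch to the paper's eigenvalue bound for the bias term.
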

\begin{proof}
    We denote the average reward vector by $\overline{\theta}$ and the reward vector of the last epsiode by $\theta^T$, other notations are similar, then
    \begin{align*}
        &\abs{\tilde{r}_i^k-r_i^T}(\a)\\
        \leq&\Norm{A_i(\a)}_{(V^k)^{-1}}\Norm{\widehat{\theta}-\theta^T}_{V^k}\\
        \leq&\Norm{A_i(\a)}_{(V^k)^{-1}}\Sp{\Norm{\widehat{\theta}-\overline{\theta}}_{V^k}+\Norm{\overline{\theta}-\theta^T}_{V^k}}\\
        \leq&\Norm{A_i(\a)}_{(V^k)^{-1}}\Sp{\Norm{\overline{\theta}}_2+\sqrt{\log\det \overline{V^k}+\tilde\iota}+\sqrt{Km}\Delta}
    \end{align*}
\end{proof}
The rest of the proof is carried out with the new $\widetilde{\beta}_k$ and finally the regret becomes
\begin{align*}
    \text{Nash-Regret}(K)=\widetilde{O}\Sp{mF^{3/2}\sqrt{K}+mFK\Delta}.
\end{align*}
Finally this algorithm can be converted into a version with sample complexity guarantee and $C_1(\epsilon)=m^2F^{3}\epsilon^{-2},c_1^\Delta=mF$ as stated in the original paper using the certified policy trick from \citet{bai2020near}.

\subsection{Multi-Player General-Sum Markov Games (CCE,CE)}

In this part we will show how to adapt the proof in \citet{cui2023breaking} to the non-stationary game case. For simplicity, we will follow the proof in \citet{cui2023breaking} in general and only point out critical changes. Note that they use $k$ as epoch index while we have been using $k$ as episode index. For consistency, we will use $\kappa$ as the episode index in this section. As a reminder, we will use $r^\kappa$, $P^\kappa$ and $M^\kappa$ to denote the reward function, the transition kernel and the game at episode $\kappa$. 

We use the superscript $\kappa$ in $\E^\kappa[\cdot]$ to denote that the underlying game is $M^\kappa$. We further use $\kappa_h^k(j;s)$ to denote the episode index when state $s$ is visited for the $j$th time at step $h$ and epoch $k$ in the no-regret learning phase (Line 12 in Algorithm 3), and we use $\overline{\kappa}_h^k(j;s)$  to denote the episode index when state $s$ is visited for the $j$th time at step $h$ and epoch $k$ in the no-regret learning phase (Line 12 in Algorithm 3). We will change the algorithm in Line 34 where we replace $n_h^k(s_h)$ with $\sum_{l=1}^{k-1}T_h^k(s_h)$. We will modify all the lemmas in the proof below. We use $N^k$ to denote $\sum_{l=1}^kn^k$.

First, we will replace $\E_{\a\sim\pi_h^{k,t_h^k(j;s)}(\cdot\mid s)}[\cdot]$ with $\E_{\a\sim\pi_h^{k,t_h^k(j;s)}(\cdot\mid s)}^{\kappa_h^k(j;s)}[\cdot]$ in all the lemmas, which takes the expectation with the underlying game when $\pi_h^{k,t_h^k(j;s)}(\cdot\mid s)$ is used. It is easy to verify that Lemma 35, Lemma 36, Lemma 37 hold after the modification.

%Second, we will replace $\sum_{l=1}^{k-1}n^ld_h^{\pi^l}(s)$ with $\sum_{l=1}^{k-1}\frac{n^l}{N^{k-1}}\sum_{j=1}^{N^{k-1}}d_h^{\pi^l}(s;k,j)$, where $d_h^{\pi^l}(s;k,j)$ is the visiting density for model at epoch $k$ and $j$th trajectory sampled in the no-regret learning phase. We also replace $n^kd_h^{\pi^k}(s)$ with $\sum_{J=1}^{n^k}d_h^{\pi^k}(s;k,J)$, where $d_h^{\pi^l}(s;k,J)$ is the visiting density for model at epoch $k$ and $J$th trajectory sampled in the policy cover update phase. Then we have $\E[n_h^k(s)]=\sum_{l=1}^{k-1}\frac{n^l}{\sum_{l'=1}^{k-1}n^{l'}}\sum_{j=1}^{\sum_{l'=1}^{k-1}n^{l'}}d_h^{\pi^l}(s;k,j)$ and $\E[T_h^k(s)]=\sum_{J=1}^{n^k}d_h^{\pi^k}(s;k,J)$ and it is easy to verify that Lemma 38 holds after the modification.

Second, we will replace $n_h^k(s)$ with $\sum_{l=1}^{k-1}T_h^l(s)$ and $n^kd_h^{\pi^k}(s)$ with $\sum_{J=1}^{n^k}d_h^{\pi^k}(s;k,J)$, where $d_h^{\pi^l}(s;k,J)$ is the visiting density for model at epoch $k$ and $J$th trajectory sampled in the policy cover update phase. In addition, we also add the following argument in the lemma:
$$n_h^k(s)\vee\Trig\geq\frac{1}{2}\Sp{\sum_{l=1}^{k-1}\frac{n^l}{N^{k-1}}\sum_{j=1}^{N^{k-1}}d_h^{\pi^l}(s;k,j)}\vee T_\Trig,$$
where $d_h^{\pi^l}(s;k,j)$ is the visiting density for model at epoch $k$ and $j$th trajectory sampled in the no-regret learning phase.
It is easy to verify that Lemma 38 hold after the modification.

Third, we will consider a baseline model $M^0$, which can be the game at any episode, and use $V_{h,i}^\pi(s)$ to denote the corresponding value function. Now we show that Lemma 39, Lemma 40 and Lemma 41 holds with an addition tolerance $\Delta$.

\begin{lemma}\label{lemma:mg optimism}(Modified Lemma 39 in \citet{cui2023breaking})
Under the good event $\G$, for all $k\in[K]$, $h\in[H]$, $i\in[m]$, $s\in\S$, we have
$$\ov_{h,i}^k(s)\geq V_{h,i}^{\dagger,\pi_{-i}^k}(s)-\sum_{h'=h}^H\Delta_h.$$
\end{lemma}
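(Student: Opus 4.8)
The plan is to prove this by backward induction on $h$, running from $h=H+1$ down to $h=1$, following the proof of Lemma 39 in \citet{cui2023breaking} while carrying an extra additive term that accounts for the drift between the baseline game $M^0$ and the games actually played during epoch $k$. Fix $k\in[K]$ and $i\in[m]$. Recall that $\ov^k_{h,i}(s)$ is (a truncation to $[0,H]$ of) $\max_{a_i\in\A_i}$ of an empirical version of $\E_{\a_{-i}\sim\pi^k_{-i,h}(\cdot\mid s)}\Mp{R_{h,i}(s,\a)+\Sp{\P_h\ov^k_{h+1,i}}(s,\a)}$ plus an exploration bonus, the empirical quantities being formed from the trajectories gathered in the no-regret learning phase through epoch $k$. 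On the other side, $V^{\dagger,\pi^k_{-i}}_{h,i}$ is the optimal value in the single-agent MDP induced on player $i$ by $\pi^k_{-i}$ under the baseline model $M^0$, so it obeys the Bellman equation $V^{\dagger,\pi^k_{-i}}_{h,i}(s)=\max_{a_i}\E_{\a_{-i}\sim\pi^k_{-i,h}(\cdot\mid s)}\Mp{R^0_{h,i}(s,\a)+\Sp{\P^0_h V^{\dagger,\pi^k_{-i}}_{h+1,i}}(s,\a)}$. It therefore suffices to compare the two backups $a_i$-by-$a_i$.

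The base case $h=H+1$ is trivial (both sides are $0$). For the inductive step, I would assume $\ov^k_{h+1,i}(s')\ge V^{\dagger,\pi^k_{-i}}_{h+1,i}(s')-\sum_{h'=h+1}^H\Delta_{h'}$ for every $s'$, and chain three inequalities. First, under $\G$, the already-modified concentration bounds (the analogues of Lemmas 35--38, in which $n^k_h(s)$ is replaced by $\sum_{l<k}T^l_h(s)$ and every expectation is taken under the model $M^\kappa$ of the episode in which the corresponding sample was collected) ensure that the empirical backup is at least the backup of $\ov^k_{h+1,i}$ against the mixture of experienced step-$h$ models minus the bonus. Second, swapping this mixture of $(\P^\kappa_h,R^\kappa_h)$ for the baseline $(\P^0_h,R^0_h)$ changes the value by at most the step-$h$ drift, by H\"older's inequality together with $\ov^k_{h+1,i}\in[0,H]$ --- this is exactly the per-layer accounting carried out in the proof of Lemma \ref{lemma:pi_deviate}, and it is the source of the $\Delta_h$ at the current level. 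Third, the inductive hypothesis plus monotonicity of the averaging operators $\P^0_h$ and $\E_{\a_{-i}\sim\pi^k_{-i,h}}$ (an additive constant passes straight through) lower-bounds the $M^0$-backup of $\ov^k_{h+1,i}$ by the $M^0$-backup of $V^{\dagger,\pi^k_{-i}}_{h+1,i}$ minus $\sum_{h'=h+1}^H\Delta_{h'}$. Taking $\max_{a_i}$ and noting that truncation to $[0,H]$ only helps (the right-hand side is itself at most $H$) yields $\ov^k_{h,i}(s)\ge V^{\dagger,\pi^k_{-i}}_{h,i}(s)-\Delta_h-\sum_{h'=h+1}^H\Delta_{h'}$, which is the claim.

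The main obstacle is the first inequality: because the data behind $\ov^k_{h,i}$ comes from a time-varying family of games interleaved with the epoch and policy-cover structure of \citet{cui2023breaking}, the empirical backup does not concentrate around a backup in any single game, but around a visitation-weighted mixture. Making this rigorous requires re-deriving their concentration lemmas with the per-episode models inserted --- exactly the substitutions flagged above ($\E^\kappa[\cdot]$, $n^k_h(s)\to\sum_{l<k}T^l_h(s)$, $n^kd^{\pi^k}_h(s)\to\sum_J d^{\pi^k}_h(s;k,J)$, and the added lower bound on $n^k_h(s)\vee\Trig$) --- and then verifying that the exploration bonus still dominates the resulting concentration error while the model mismatch is cleanly isolated into a single additive term at each layer. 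A secondary point to pin down is the exact bookkeeping of $\Delta_h$: naively the per-layer swap of step-$h$ models costs $H$ times the step-$h$ total variation since $\ov^k_{h+1,i}$ ranges over $[0,H]$, so one must check that this is consistent with the definition of $\Delta_h$ used in the statement, and hence that $\sum_{h'\ge h}\Delta_{h'}\le\Delta$, matching the additional tolerance $\Delta$ claimed above. Once these are in place, the analogues of Lemmas 40 and 41 follow by the same induction.
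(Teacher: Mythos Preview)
Your proposal is correct and follows essentially the same approach as the paper. The paper's proof isolates exactly your step two --- the single displayed inequality bounding the difference between the backup under the actually-experienced model $M^{\kappa_h^k(j;s)}$ and the baseline $M^0$ by $\Delta_h$ --- and then defers steps one and three (concentration under $\G$ plus the backward induction) to the original argument in \citet{cui2023breaking}, which is precisely the structure you lay out.
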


\begin{proof}
    Note that we have
    $$\abs{\E_{\a\sim\pi_h^{k,t_h^k(j;s)}(\cdot\mid s)}^{\kappa_h^k(j;s)}\Mp{r_{h,i}(s,\a)+\ov_{h+1,i}^k(s')}-\E_{\a\sim\pi_h^{k,t_h^k(j;s)}(\cdot\mid s)}^{M^0}\Mp{r_{h,i}(s,\a)+\ov_{h+1,i}^k(s')}}\leq\Delta_h.$$
    The rest of the proof follows \citet{cui2023breaking}.
\end{proof}

\begin{lemma}\label{lemma:mg pessimism}(Modified Lemma 40 in \citet{cui2023breaking})
Under the good event $\G$, for all $k\in[K]$, $h\in[H]$, $i\in[m]$, $s\in\S$, we have
$$\uv_{h,i}^k(s)\leq V_{h,i}^{\pi^k}(s)+\sum_{h'=h}^H\Delta_h.$$
\end{lemma}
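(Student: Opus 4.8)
The proof proceeds exactly as for Lemma \ref{lemma:mg optimism}, with every inequality reversed (optimism replaced by pessimism), by backward induction on $h$. The base case $h=H+1$ holds trivially because $\uv_{H+1,i}^k(s)=0=V_{H+1,i}^{\pi^k}(s)$. For the inductive step, I would unfold the definition of $\uv_{h,i}^k(s)$ as the visit-average, over the times state $s$ is seen at step $h$ in epoch $k$, of the one-step backups $\E_{\a\sim\pi_h^{k,t_h^k(j;s)}(\cdot\mid s)}^{\kappa_h^k(j;s)}\Mp{r_{h,i}(s,\a)+\uv_{h+1,i}^k(s')}$ decreased by the pessimistic bonus, and then invoke the good event $\G$ through the already-modified Lemmas 35--38 to replace the empirical backups by their conditional expectations up to the bonus.

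The only ingredient beyond \citet{cui2023breaking} is passing from the episode-dependent models $M^{\kappa_h^k(j;s)}$ to the fixed baseline model $M^0$. As in the proof of Lemma \ref{lemma:mg optimism}, I would use the perturbation estimate
$$\abs{\E_{\a\sim\pi_h^{k,t_h^k(j;s)}(\cdot\mid s)}^{\kappa_h^k(j;s)}\Mp{r_{h,i}(s,\a)+\uv_{h+1,i}^k(s')}-\E_{\a\sim\pi_h^{k,t_h^k(j;s)}(\cdot\mid s)}^{M^0}\Mp{r_{h,i}(s,\a)+\uv_{h+1,i}^k(s')}}\leq\Delta_h,$$
which costs one $\Delta_h$ per backup but is uniform in the visit index $j$, so it survives the averaging. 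Once every expectation is taken under $M^0$, the remaining steps are verbatim those of \citet{cui2023breaking}: the pessimistic bonus is by construction at least the concentration slack, and the policy-cover/no-regret bookkeeping of the modified Lemmas 35--38 lets one replace the per-visit policies by $\pi^k$, yielding $\uv_{h,i}^k(s)\leq\E_{\a\sim\pi_h^k(\cdot\mid s)}^{M^0}\Mp{r_{h,i}(s,\a)+\uv_{h+1,i}^k(s')}+\Delta_h$.

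Finally I would insert the induction hypothesis $\uv_{h+1,i}^k(s')\leq V_{h+1,i}^{\pi^k}(s')+\sum_{h'=h+1}^H\Delta_{h'}$ inside the expectation and use the Bellman recursion $V_{h,i}^{\pi^k}(s)=\E_{\a\sim\pi_h^k(\cdot\mid s)}^{M^0}\Mp{r_{h,i}(s,\a)+V_{h+1,i}^{\pi^k}(s')}$ for the baseline model to get $\uv_{h,i}^k(s)\leq V_{h,i}^{\pi^k}(s)+\Delta_h+\sum_{h'=h+1}^H\Delta_{h'}=V_{h,i}^{\pi^k}(s)+\sum_{h'=h}^H\Delta_{h'}$, closing the induction. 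The only point demanding care is the bookkeeping of which model each conditional expectation refers to and checking that the $\Delta_h$ contributions telescope cleanly across steps rather than compounding; since the perturbation bound is additive and independent of $j$, this is routine once the optimism proof is in hand, and no new concentration argument is needed.
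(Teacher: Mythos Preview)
Your proposal is correct and matches the paper's approach exactly: the paper's proof consists of the single line ``The proof follows the proof for Lemma \ref{lemma:mg optimism},'' and that proof in turn is just the perturbation bound you wrote (with $\uv$ in place of $\ov$) followed by ``The rest of the proof follows \citet{cui2023breaking}.'' Your expansion into an explicit backward induction with the $\Delta_h$ telescoping is precisely what those two sentences compress.
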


\begin{proof}
    The proof follows the proof for Lemma \ref{lemma:mg optimism}.
\end{proof}

\begin{lemma}\label{lemma:mg CCE gap}(Modified Lemma 41 in \citet{cui2023breaking})
Under the good event $\G$, for all $k\in[K]$, $i\in[m]$, we have
$$\ov_{1,i}^k(s_1)-\uv_{1,i}^k(s_1)\leq\widetilde{O}\Sp{\E^{M^0}_{\pi^{k}}\Mp{\sum_{h=1}^H\sqrt{\frac{H^2A_iT_\mathrm{Trig}}{n_h^k(s_h)\vee T_\mathrm{Trig}}}}}+2\Delta. $$
\end{lemma}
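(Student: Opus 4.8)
The plan is to follow the proof of Lemma~41 in \citet{cui2023breaking} essentially verbatim, inserting the baseline game $M^0$ exactly where the original argument silently refers to ``the'' game. Recall that, by the definitions of the optimistic and pessimistic estimates, the empirical reward estimates cancel in the difference, leaving a backward recursion of the form
\begin{align*}
\ov_{h,i}^k(s)-\uv_{h,i}^k(s)=\sum_{j}w_j\,\E^{\kappa_h^k(j;s)}_{\a\sim\pi_h^{k,t_h^k(j;s)}(\cdot\mid s)}\Mp{\ov_{h+1,i}^k(s')-\uv_{h+1,i}^k(s')}+2\,\mathrm{bonus}_h^k(s),
\end{align*}
where $\{w_j\}$ are the averaging weights used by the algorithm and, under the good event $\G$, the exploration bonus plus the usual concentration slack is $\mathrm{bonus}_h^k(s)=\widetilde{O}(\sqrt{H^2A_iT_\Trig/(n_h^k(s)\vee T_\Trig)})$. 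First I would replace each $\E^{\kappa_h^k(j;s)}[\cdot]$ by $\E^{M^0}[\cdot]$; by the per-step estimate already used in the proof of Lemma~\ref{lemma:mg optimism}, this costs at most $\Delta_h$ from the $\ov$-recursion and at most $\Delta_h$ from the $\uv$-recursion. After this swap, the recursion coincides with the stationary one relative to $M^0$.

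Then I would unroll from $h=1$ to $H$. The telescoped bonuses and concentration terms collapse, exactly as in \citet{cui2023breaking} (using the already-verified modified Lemmas~35--38 for the visitation concentration), to $\widetilde{O}\Sp{\E^{M^0}_{\pi^k}\Mp{\sum_{h=1}^H\sqrt{H^2A_iT_\Trig/(n_h^k(s_h)\vee T_\Trig)}}}$; the only cosmetic change is that the trajectory law appearing after unrolling is the one induced by $M^0$, which is precisely the expectation on the right-hand side of the claim. The per-step substitution errors add up along the horizon: $\sum_{h=1}^H\Delta_h\le\Delta$ from the $\ov$-side and a further $\sum_{h=1}^H\Delta_h\le\Delta$ from the $\uv$-side, yielding the stated additive $2\Delta$.

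The main difficulty is purely bookkeeping. One must check that every place where the stationary proof uses a single transition kernel or reward --- the optimism/pessimism telescoping, and the step converting the empirical counts $n_h^k(s)$ into the $M^0$-visiting density $d_h^{\pi^k}$ --- is either invariant to which game in the active window is used, or else accrues at most $\Delta_h$, charged once per step. The delicate point is to perform the $M^\kappa\to M^0$ replacement in the right order, before any crude range-bounding of $\ov_{h+1,i}^k-\uv_{h+1,i}^k$, so that the error per step is exactly the total-variation quantity $\Delta_h$ that sums to $\Delta$ rather than an inflated version; this is the single spot where a careless adaptation would degrade the bound, and where I would concentrate the verification effort.
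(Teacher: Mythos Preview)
Your proposal is correct and matches the paper's approach: the paper simply states that the proof ``follows the proof for Lemma~\ref{lemma:mg optimism}'', i.e., one swaps each episode-specific expectation $\E^{\kappa_h^k(j;s)}[\cdot]$ for the baseline $\E^{M^0}[\cdot]$ at a per-step cost of $\Delta_h$ and then reruns the stationary argument from \citet{cui2023breaking}, which is exactly the plan you outline. Your write-up is in fact more detailed than the paper's own proof; the only remark is that the conversion from empirical counts $n_h^k(s)$ to visitation densities that you flag as a delicate point is actually deferred to (the modified) Lemma~42 rather than handled here.
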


\begin{proof}
    The proof follows the proof for Lemma \ref{lemma:mg optimism}.
\end{proof}

\begin{lemma}\label{lemma:mg CCE regret}(Modified Lemma 42 in \citet{cui2023breaking})
Under the good event $\G$, for all $i\in[m]$, we have
$$\sum_{k=1}^Kn^k\max_{i\in[m]}\Sp{\ov_{1,i}^k(s_1)-\uv_{1,i}^{\pi^k}(s_1)}\leq \widetilde{O}\Sp{H^2\sqrt{SA_{\max}T_\mathrm{Trig} N}}. $$
\end{lemma}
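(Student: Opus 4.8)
The plan is to obtain Lemma~\ref{lemma:mg CCE regret} by summing the per-epoch CCE-gap estimate of Lemma~\ref{lemma:mg CCE gap} over all epochs $k\in[K]$, weighted by the epoch length $n^k$, and then reducing the resulting statistical sum to the bound appearing in the stationary analysis of \citet{cui2023breaking} (their Lemma~42), up to an additive non-stationarity correction. Concretely, I would first take the maximum over $i\in[m]$ in Lemma~\ref{lemma:mg CCE gap}, bound $A_i\le A_{\max}$, multiply through by $n^k$, and sum over $k$, which splits the right-hand side into a statistical term $\sum_{k=1}^K n^k\,\widetilde{O}\Sp{\E^{M^0}_{\pi^{k}}\Mp{\sum_{h=1}^H\sqrt{H^2A_{\max}T_\mathrm{Trig}/(n_h^k(s_h)\vee T_\mathrm{Trig})}}}$ and a non-stationarity term $\sum_{k=1}^K 2n^k\Delta = 2N\Delta$. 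The latter is exactly of the tolerated $c_1^\Delta\Delta$ form once divided by $N$, so it is either absorbed into the $\widetilde{O}(\cdot)$ or carried along as the separate additive $\Delta$ the learning oracle is allowed.

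The core of the argument is the statistical term, where I would follow the ``break the curse of multi-agents'' pigeonhole scheme of \citet{cui2023breaking} almost verbatim, using the modified Lemma~38 in place of the original. That lemma lets me lower bound $n_h^k(s)\vee T_\mathrm{Trig}$, up to a constant, by the accumulated expected visitation masses $\sum_{l<k}\sum_{J} d_h^{\pi^l}(s;k,J)$, after which I rewrite the expectation under $\pi^k$ as $\sum_s d_h^{\pi^k}(s)\sqrt{H^2 A_{\max}T_\mathrm{Trig}/(n_h^k(s)\vee T_\mathrm{Trig})}$ and apply Cauchy--Schwarz over the pairs $(k,s)$. Using that $\sum_{k} n^k d_h^{\pi^k}(s)$ telescopes into the final count and is therefore $O(N)$ (with the $T_\mathrm{Trig}$ floor absorbing the states visited too rarely), this yields $\sum_k n^k \sum_s d_h^{\pi^k}(s)/\sqrt{n_h^k(s)\vee T_\mathrm{Trig}} = \widetilde{O}(\sqrt{S N T_\mathrm{Trig}})$ for each fixed step $h$; summing over $h\in[H]$ and folding in the $\sqrt{H^2}$ inside the square root produces the claimed $\widetilde{O}\Sp{H^2\sqrt{SA_{\max}T_\mathrm{Trig}N}}$.

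I expect the main obstacle to be the bookkeeping that keeps non-stationarity additive throughout this reduction. The counts $n_h^k(s)$ are generated by running the past policies $\pi^l$ against the genuinely time-varying games $M^\kappa$, whereas Lemma~\ref{lemma:mg CCE gap} and the pigeonhole step are phrased with respect to the fixed baseline $M^0$, so the visitation densities $d_h^{\pi^l}(s;k,\cdot)$ under $M^\kappa$ and under $M^0$ differ. Exactly as in the passage from Lemmas~\ref{lemma:mg optimism}--\ref{lemma:mg pessimism}, each one-step expectation shifts by at most the step-$h$ total variation, so over an epoch the density discrepancy is controlled by the local $\Delta$; the delicate point is to verify that when this discrepancy is propagated through the count-to-density conversion and the Cauchy--Schwarz bound it still only inflates the $\Delta$ term rather than compounding with the horizon or with $N$, leaving the dominant $H^2\sqrt{SA_{\max}T_\mathrm{Trig}N}$ term untouched.
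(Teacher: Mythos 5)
Your plan follows the paper's own route: reduce via the modified Lemma~\ref{lemma:mg CCE gap}, weight by $n^k$ and sum over epochs, convert counts to accumulated visitation densities through the modified Lemma~38, run the telescoping/pigeonhole step (the paper invokes Lemma~53 of \citet{cui2023breaking}, which is the same device as your Cauchy--Schwarz bookkeeping), and finish with Cauchy--Schwarz over states to get $\widetilde{O}(\sqrt{SN})$ per step $h$, hence $\widetilde{O}(H^2\sqrt{SA_{\max}T_{\mathrm{Trig}}N})$ overall. So the approach is essentially identical.

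One clarification on the point you single out as delicate: the paper does not verify that the density/count discrepancy avoids compounding with $N$ --- it does compound, and that is accepted rather than avoided. Since the counts $n_h^k(s)$ are generated under the drifting games while $d_h^{\pi^k}$ is taken under the fixed baseline $M^0$, replacing $n^k d_h^{\pi^k}(s)$ by the increment of the accumulated empirical densities costs $N^k\Delta$ per epoch, so the statistical sum for each $h$ picks up an extra $\widetilde{O}(NK\Delta)$ in addition to the $2N\Delta$ you already collect from Lemma~\ref{lemma:mg CCE gap}. These $\Delta$-dependent terms are not shown to be lower order; they are simply carried along and folded into the additive non-stationarity tolerance of the learning oracle (this is exactly why the paper ends up with an $\epsilon+\widetilde{O}(HS\Delta)$-CCE and $c_1^\Delta=HS$ rather than an $O(1)$ coefficient), while the lemma statement records only the leading stationary term. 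If you insisted on proving the discrepancy stays of order $N\Delta$ you would get stuck; if you instead carry the $N^k\Delta$-per-epoch error through, your argument goes through exactly as in the paper.
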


\begin{proof}
    By Lemma \ref{lemma:mg CCE gap} and the proof in \citet{cui2023breaking}, we only need to bound $\sum_{k=1}^Kn^k\E^{M^0}_{\pi^k}\sqrt{\frac{1}{n_h^k(s_h)\vee T_\mathrm{Trig}}}$. 
    By the definition of $\Delta$, we can easily prove that
    $$\sum_{s\in\S}\abs{\frac{n^k}{N^{k}}\sum_{j=1}^{N^{k}}d_h^{\pi^k}(s;k+1,j)-\sum_{J=1}^{n^k}d_h^{\pi^k}(s;k,J)}\leq n^k\Delta,$$
    $$\sum_{s\in\S}\abs{n^kd^{\pi^k}_h(s)-\Sp{\sum_{l=1}^{k}\frac{n^l}{N^{k}}\sum_{j=1}^{N^{k}}d_h^{\pi^l}(s;k+1,j)-\sum_{l=1}^{k-1}\frac{n^l}{N^{k-1}}\sum_{j=1}^{N^{k-1}}d_h^{\pi^l}(s;k,j)}}\leq N^k\Delta.$$
    and we have
    $$\sum_{s\in\S}\Sp{\sum_{l=1}^{k}\frac{n^l}{N^{k}}\sum_{j=1}^{N^{k}}d_h^{\pi^l}(s;k+1,j)-\sum_{l=1}^{k-1}\frac{n^l}{N^{k-1}}\sum_{j=1}^{N^{k-1}}d_h^{\pi^l}(s;k,j)}-2\sum_{l=1}^{k-1}\frac{n^l}{N^{k-1}}\sum_{j=1}^{N^{k-1}}d_h^{\pi^l}(s;k,j)\leq 4N^k\Delta.$$
    Then we have
    \begin{align*}
    &\sum_{k=1}^Kn^k\E^{M^0}_{\pi^k}\sqrt{\frac{1}{n_h^k(s_h)\vee T_\Trig}}\\
    =&\sum_{k=1}^Kn^k\sum_{s\in\S}d^{\pi^k}_h(s)\sqrt{\frac{1}{n_h^k(s)\vee T_\Trig}}\\
    \leq&\sum_{s\in\S}\sum_{k=1}^Kn^kd^{\pi^k}_h(s)\sqrt{\frac{2}{(\sum_{l=1}^{k-1}\frac{n^l}{N^{k-1}}\sum_{j=1}^{N^{k-1}}d_h^{\pi^l}(s;k,j))\vee T_\Trig}}\tag{Lemma 38 in \citet{cui2023breaking}}\\
    \leq&NK\Delta+\sum_{s\in\S}\sum_{k=1}^K\Sp{\sum_{l=1}^{k}\frac{n^l}{N^{k}}\sum_{j=1}^{N^{k}}d_h^{\pi^l}(s;k+1,j)-\sum_{l=1}^{k-1}\frac{n^l}{N^{k-1}}\sum_{j=1}^{N^{k-1}}d_h^{\pi^l}(s;k,j)}\\&\sqrt{\frac{2}{(\sum_{l=1}^{k-1}\frac{n^l}{N^{k-1}}\sum_{j=1}^{N^{k-1}}d_h^{\pi^l}(s;k,j))\vee T_\Trig}}\\
    \leq&2NK\Delta+\sum_{s\in\S}\sqrt{32\sum_{l=1}^{K}\frac{n^l}{N^{K}}\sum_{j=1}^{N^{K}}d_h^{\pi^l}(s;K,j)}\tag{Lemma 38 and Lemma 53 in \citet{cui2023breaking}}\\
    \leq&2NK\Delta+\sqrt{32SN}. 
    \end{align*}
\end{proof}

Lemma 43 in \citet{cui2023breaking} holds directly with the modified update rule. As a result, following Theorem 4 in \citet{cui2023breaking}, the same sample complexity result holds for learning an $\epsilon+\widetilde{O}(HS\Delta)$-CCE. Hence $C_1(\epsilon)=H^6S^2A_{\max}\epsilon^{-2},c_1^\Delta=HS$.

\subsection{Markov Potential Games (NE)}
This setting is rather straightforward. Algorithm 3 in \citet{song2021can} serves as a base algorithm. By noticing that any weighted average of the samples of rewards shifts by no more than $O(\Delta)$ in the non-stationary environment and by the very similar argument we made in Lemma \ref{lemma:pi_deviate} or proof of Theorem 1 in \citet{pmlr-v139-mao21b} we can see $C_1(\epsilon)=m^2H^4SA_{\max}\epsilon^{-3},c_1^\Delta=O(H^2)$.

\end{document}